\newcommand{\RN}[1]{%
	\textup{\lowercase\expandafter{\it \romannumeral#1}}%
}
\newcommand{\distas}[1]{\mathbin{\overset{#1}{\kern\z@\sim}}}%
\newcommand{\beq}{\vspace{0mm}\begin{equation}}
\newcommand{\eeq}{\vspace{0mm}\end{equation}}
\newcommand{\beqs}{\vspace{0mm}\begin{eqnarray}}
\newcommand{\eeqs}{\vspace{0mm}\end{eqnarray}}
\newcommand{\barr}{\begin{array}}
\newcommand{\earr}{\end{array}}
\newcommand{\R}{\mathbb{R}}
\newtheorem{theorem}{Theorem} 
\newtheorem{lemma}{Lemma}
\newtheorem{corollary}{Corollary}
\newtheorem{assumption}{Assumption}
\newtheorem{definition}{Definition}
\newcommand{\norm}[1]{\left\| #1 \right\|}
\newcommand{\Abs}[1]{\left| #1 \right| }
\newcommand{\methodname}{TunaMH}
\newtheorem{statement}{Statement}[]
\newcommand{\bigTheta}{\mathcal{O}\!\!\!\!\raisebox{1pt}{\text{--}}\,}
\newcommand*\samethanks[1][\value{footnote}]{\footnotemark[#1]}
\title{Asymptotically Optimal Exact Minibatch Metropolis-Hastings}
\author{%
    Ruqi Zhang\thanks{Equal contribution.} \\
    Cornell University\\
    \texttt{rz297@cornell.edu} \\
    \And
    A. Feder Cooper\samethanks \\
    Cornell University \\
    \texttt{afc78@cornell.edu} \\
    \And
    Christopher De Sa\\
    Cornell University \\
    \texttt{cdesa@cs.cornell.edu} \\
}
\begin{document}

\maketitle

\begin{abstract}
Metropolis-Hastings (MH) is a commonly-used MCMC algorithm, but it can be intractable on large datasets due to requiring computations over the whole dataset. In this paper, we study \emph{minibatch MH} methods, which instead use subsamples to enable scaling. We observe that most existing minibatch MH methods are inexact (i.e. they may change the target distribution), and show that this inexactness can cause arbitrarily large errors in inference. We propose a new exact minibatch MH method, \emph{\methodname{}}, which exposes a tunable trade-off between its batch size and its theoretically guaranteed convergence rate. We prove a lower bound on the batch size that any minibatch MH method \emph{must} use to retain exactness while guaranteeing fast convergence---the first such bound for minibatch MH---and show \methodname{} is asymptotically optimal in terms of the batch size. Empirically, we show \methodname{} outperforms other exact minibatch MH methods on robust linear regression, truncated Gaussian mixtures, and logistic regression.

\end{abstract}

\section{Introduction} \label{sec:intro}
Bayesian inference is widely used for probabilistic modeling of data. Specifically, given a dataset $\mathcal{D} = \{x_i\}_{i=1}^N$ and a $\theta$-parameterized model, it aims to compute the posterior distribution 
\[\textstyle
\pi(\theta) \propto \exp\left(-\sum_{i=1}^N U_i(\theta)\right), \text{where } U_i(\theta) = - \log p(x_i|\theta) - \frac{1}{N} \log p(\theta).
\]
Here $p(\theta)$ is the prior and the $p(x_i|\theta)$ give the likelihood of observing $x_i$ given the parameter $\theta$. We assume the data are conditionally independent given $\theta$. The $U_i$ have a natural interpretation as component \emph{energy functions} with $\pi$ acting as a Gibbs measure. In practice, computing $\pi(\theta)$ is often intractable and thus requires using approximate methods, such as Markov chain Monte Carlo (MCMC). MCMC uses sampling to estimate the posterior and is guaranteed to converge asymptotically to the true distribution, $\pi$ \cite{brooks2011handbook}.  

The Metropolis-Hastings (MH) algorithm \citep{hastings1970mh,metropolis1953equation} is one of the most commonly used MCMC methods. In each step, MH generates a proposal $\theta'$ from a distribution $q(\cdot|\theta)$, and accepts it with probability 
\begin{equation}
\label{eqnMHaccprob}
\textstyle
a(\theta,\theta') = \min\left(1, \frac{\pi(\theta')q(\theta|\theta')}{\pi(\theta)q(\theta'|\theta)}\right) = \min\left(1, \exp\big( \sum_{i=1}^N (U_i(\theta) - U_i(\theta')) \big) \cdot \frac{q(\theta|\theta')}{q(\theta'|\theta)}\right).
\end{equation}
If accepted, the chain transitions to $\theta'$; otherwise, it remains at the current state $\theta$. This accept/reject step can be quite costly when $N$ is large, since it entails computing a sum over the entire dataset. 

Prior work has proposed many approaches to mitigate the cost of this decision step \citep{bardenet2017mcmc}. One popular approach involves introducing stochasticity: instead of computing over the entire dataset, a subsample, or \emph{minibatch}, is used to compute an approximation. These minibatch MH methods can be divided into two classes, \emph{exact} and \emph{inexact}, depending on whether or not the target distribution $\pi$ is necessarily preserved. 
Inexact methods introduce asymptotic bias to the target distribution, trading off correctness for speedups~\cite{bardenet2014towards,korattikara2014austerity,seita2016efficient,quiroz2019speeding,quiroz2016block}. Exact methods either require impractically strong constraints on the target distribution \cite{maclaurin2015firefly, zhang2019poisson}, limiting their applicability in practice, or they negatively impact efficiency, counteracting the speedups that minibatching aims to provide in the first place \cite{banterle2015accelerating,cornish2019scalable}. Moreover, all existing exact methods operate on the belief that there is a trade-off between batch size and convergence rate---between scalability and efficiency. Yet no prior work formally exposes this trade-off, and most prior work gives no convergence rate guarantees. Given these  various considerations, it is not entirely clear how to evaluate which minibatch MH method to use.

In this paper we forge a path ahead to untangle this question. While inexact methods have been prominent recently due to their efficiency, they are not reliable: we show that the stationary distribution of any inexact method can be arbitrarily far from the target $\pi$. This means they can yield disastrously wrong inference results in practice, and it is difficult to tell just how bad those results can be. 

We therefore turn our attention to exact methods and introduce \emph{\methodname{}}.\footnote{\methodname{} since it \emph{tunes} the efficiency-scalability trade-off and uses a Poisson (French for ``fish") variable.} Compared to prior work, we make milder assumptions, which enables \methodname{} to apply to a wider variety of inference tasks. More specifically, we require local rather than global bounds on the target distribution~\cite{maclaurin2015firefly,zhang2019poisson} and do not rely on the Bernstein-von Mises approximation~\cite{cornish2019scalable,bardenet2017mcmc,bierkens2019zig}. \methodname{} is guaranteed to retain sample efficiency in the presence of minibatching: its convergence rate (measured by the spectral gap) is within a constant factor of standard, non-minibatch MH. More importantly, \methodname{} also enables us to rigorously characterize the trade-off between scalability and efficiency. It has a hyperparameter $\chi$, which enables tuning the trade-off between expected batch size and convergence rate.

By exposing this trade-off, our analysis raises the natural question: \emph{is \methodname{} optimal for this trade-off?} That is, could another exact algorithm use an asymptotically smaller average batch size while having the same convergence rate guarantees? We explore this in Section \ref{sec:optimality}; under the same mild assumptions we use to derive \methodname{}, we prove a lower bound on the expected batch size for \emph{any} exact minibatch MH method that can keep a reasonable convergence rate. To our knowledge, we are the first to prove a lower bound of this nature for minibatch MH. Moreover, \methodname{} is \emph{asymptotically optimal} in balancing the expected batch size and convergence rate. It remains exact and efficient while on average using the smallest possible number of samples. In summary:
\begin{itemize}[noitemsep,topsep=0pt, leftmargin=.4cm]
    \item We demonstrate that any inexact minibatch MH method can be arbitrarily inaccurate (Section \ref{sec:inexactproblems}).
    \item We introduce a new exact method, \methodname{} (Section \ref{sec:tuna}), with a lower bound on its convergence rate (in terms of the spectral gap) and a tunable hyperparameter to balance the trade-off between convergence rate and batch size.
    \item We prove a lower bound on the batch size for any exact minibatch MH method given a target convergence rate---the first such lower bound in this area. This result indicates that the expected batch size of \methodname{} is asymptotically optimal in terms of the problem parameters (Section \ref{sec:optimality}).
    \item We show empirically that \methodname{} outperforms state-of-the-art exact minibatch MH methods on robust linear regression, truncated Gaussian mixture, and logistic regression (Section \ref{sec:exp}).
\end{itemize}

\section{Preliminaries and Drawbacks of Prior Minibatch MH Methods}\label{sec:drawbacks}
We first formally define the class of methods that we study theoretically in this paper: minibatch MH methods of the form of Algorithm~\ref{alg:subsampledMH}. This class contains methods that sample a proposal from distribution $q$ (which we always assume results in the chain being ergodic), and choose to accept or reject it by calling some randomized subroutine, $\texttt{SubsMH}$, which outputs $1$ or $0$ for ``accept" or ``reject," respectively. Algorithms in this class have several notable properties. First, $\texttt{SubsMH}$ is \emph{stateless}: each acceptance decision is made independently, without carrying over local state associated with the MH procedure between steps. Many prior methods are stateless~\cite{korattikara2014austerity,bardenet2014towards,seita2016efficient,cornish2019scalable}. We do not consider \emph{stateful} methods, in which the decision depends on previous state; they are difficult to analyze due to running on an extended state space~\cite{andrieu2009pm, quiroz2019speeding}.
Second, $\texttt{SubsMH}$ takes a function that computes energy \emph{differences} $U_i(\theta) - U_i(\theta')$ and outputs an acceptance decision. We evaluate efficiency in terms of how many times $\texttt{SubsMH}$ calls this function, which we term the \emph{batch size} the method uses.
Third, $\texttt{SubsMH}$ takes parameters that bound the maximum magnitude of the energy differences. Specifically, as in \citet{cornish2019scalable}, we assume:
\begin{assumption}\label{assump}
For some constants $c_1, \ldots, c_N \in \R_+$, with $\sum_i c_i = C$, and symmetric function $M: \Theta \times \Theta \rightarrow \R_+$, for any $\theta, \theta' \in \Theta$, the energy difference is bounded by
$|U_i(\theta) - U_i(\theta')|\le c_i M(\theta,\theta')$.
\end{assumption}
One can derive such a bound, which can be computed in $O(1)$ time, for many common inference problems: for example, if each energy function $U_i$ is $L_i$-Lipschitz continuous, then it suffices to set $c_i = L_i$ and $M(\theta, \theta') = \|\theta - \theta'\|$ (See Appendix~\ref{app:experiments} for examples of $c_i$ and $M$ on common problems). Note that the $\texttt{SubsMH}$ method may choose \emph{not} to use these bounds in its decision. We allow this so the form of Algorithm \ref{alg:subsampledMH} can include methods that do not require such bounds.
Most existing methods can be described in this form \cite{korattikara2014austerity,bardenet2014towards,seita2016efficient,cornish2019scalable,banterle2015accelerating}. For example, standard MH can be written by setting $\texttt{SubsMH}$ to a subroutine that computes the acceptance rate $a$ as in (\ref{eqnMHaccprob}) and outputs $1$ (i.e., accept) with probability $a$. 

Such minibatch MH methods broadly come in two flavors: \emph{inexact} and \emph{exact}. We next establish the importance of being exact and demonstrate how \methodname{} resolves drawbacks in prior work.

\begin{algorithm}[t]
\caption{Stateless, Energy-Difference-Based Minibatch Metropolis-Hastings}
\label{alg:subsampledMH}
\begin{algorithmic}
\STATE \textbf{given: } state space $\Theta$, energy functions $U_1, \ldots, U_N: \Theta \rightarrow \R$, proposal dist. $q$, initial state $\theta \in \Theta$ 
\STATE \textbf{given: } parameters $c_1,\ldots,c_N$, $C$, $M$ from Assumption~\ref{assump}, randomized algorithm \texttt{SubsMH}
\LOOP
    \STATE \textbf{sample} $\theta' \sim q(\cdot|\theta)$
    \STATE \textbf{define function} $\Delta U: \{1, \ldots, N\} \rightarrow \R$, such that
    $\Delta U(i) = U_i(\theta) - U_i(\theta')$
    \STATE \textbf{call subroutine} $o \leftarrow \texttt{SubsMH}(\Delta U, N, q(\theta|\theta') / q(\theta'|\theta), c_1,\ldots,c_N, C, M(\theta,\theta'))$
    \STATE \textbf{if} $o = 1$, \textbf{update} $\theta  \leftarrow \theta'$
\ENDLOOP
\end{algorithmic}
\end{algorithm}

\subsection{The Importance of Being Exact}\label{sec:inexactproblems}

Inexact methods are popular due to helping scale MH to new heights~\cite{bardenet2014towards,korattikara2014austerity,seita2016efficient,quiroz2019speeding}. They approximate the MH acceptance ratio to within an error tolerance ($> 0$), trading off exactness for efficiency gains. Surprisingly, the bias from inexactness can be arbitrarily large even when the error tolerance is small.

\begin{theorem}\label{statement:counterexample}
Consider any minibatch MH method of the form in Algorithm~\ref{alg:subsampledMH} that is inexact (i.e. does not necessarily have $\pi$ as its stationary distribution for all $\pi$ satisfying Assump.~\ref{assump}). For any constants $\delta\in (0,1)$ and $\rho>0$, there exists a target distribution $\pi$ and proposal distribution $q$ such that if we let $\tilde{\pi}$ denote a stationary distribution of the inexact minibatch MH method on this target, it satisfies
\[
\operatorname{TV}(\pi,\tilde{\pi})\ge\delta \text{ and } \operatorname{KL}(\pi,\tilde{\pi})\ge \rho.
\]
where TV is the total variation distance and \text{KL} is the Kullback–Leibler divergence.
\end{theorem}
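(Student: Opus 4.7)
The plan is to exploit the two structural features of Algorithm~\ref{alg:subsampledMH} that the excerpt emphasizes: \texttt{SubsMH} is \emph{stateless} and it sees only the energy-\emph{difference} function $\Delta U$ (along with the bound parameters $c_i, M, C, N$ and the proposal ratio). Hence any two proposed moves presenting identical inputs to \texttt{SubsMH} receive identically distributed accept/reject decisions. I would use this fact to turn a single problem on which the method errs into a family of ``ladder'' problems whose bias can be amplified to arbitrarily bad $\operatorname{TV}$ and $\operatorname{KL}$.

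First, reduce to a two-state base instance. On $\Theta_0 = \{0,1\}$ with a symmetric proposal $q_0$ and energy functions $U_i(0) = 0, U_i(1) = v_i$, every two-state chain with symmetric proposal has stationary ratio $\tilde\pi(1)/\tilde\pi(0) = \tilde a_{01}/\tilde a_{10}$. Because the method is inexact, one can find $N_*$ and $v_* \in \R_+^{N_*}$ (with $c_i = v_{*,i}$, $M \equiv 1$ satisfying Assumption~\ref{assump}) for which $\tilde a_{01}/\tilde a_{10} \ne e^{-u_*}$, where $u_* := \sum_i v_{*,i}$; the general multi-state case is copied into this two-state setting by matching $\Delta U$ and the proposal ratio at whichever pair of states witnesses the stationarity gap. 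Set $\tilde u_* := -\log(\tilde a_{01}/\tilde a_{10}) \ne u_*$.

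Second, amplify via a birth-death ladder on $\{0,1,\ldots,K\}$ using the same $N_*$ data points but \emph{linear} energies $U_i(j) := j \, v_{*,i}$, a symmetric nearest-neighbor proposal, and self-loops at the boundaries. Assumption~\ref{assump} holds with $c_i = v_{*,i}$ and $M(j,j') = |j-j'|$, so every proposed transition $j \leftrightarrow j \pm 1$ presents \texttt{SubsMH} with \emph{exactly} the same $\Delta U, c_i, M = 1, C, N$ as the base; statelessness then forces acceptance probabilities $\tilde a_{01}, \tilde a_{10}$ at every rung. Detailed balance on this birth-death chain pins $\tilde\pi^{(K)}(j) \propto e^{-j \tilde u_*}$ while $\pi^{(K)}(j) \propto e^{-j u_*}$. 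Because $u_* \ne \tilde u_*$: in the ``opposite-sign'' regime a direct truncated-geometric computation shows $\operatorname{TV}(\pi^{(K)}, \tilde\pi^{(K)}) \to 1$ and $\operatorname{KL}(\pi^{(K)} \| \tilde\pi^{(K)}) \to \infty$ as $K \to \infty$; in the ``same-sign small-bias'' regime one further takes an $m$-fold product of independent ladders with a coordinate-selecting proposal, driving $\operatorname{TV}$ past $\delta$ by tensorization and $\operatorname{KL}$ past $\rho$ by additivity of KL under products.

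The main obstacle is ensuring that the ladder (and its product) really \emph{replays} the base acceptance behavior. Linearity of $U_i$ in $j$ is what makes $\Delta U$ identical across rungs, and the boundary self-loops do not perturb the birth-death stationarity analysis. The subtlest point is the product step: one must prevent a clever \texttt{SubsMH} from responding differently to a $\Delta U$ padded with zeros on the inactive coordinates or to inflated $(N, C)$. This can be handled by invoking Assumption~\ref{assump} with $c_i = 0$ on the inactive indices (permitted since those $U_i$ are constant along any single-coordinate proposal), so that the effective input to \texttt{SubsMH} on each coordinate update is indistinguishable from that of the base instance.
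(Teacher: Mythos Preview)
Your ladder amplification is the same core mechanism the paper uses: exploit statelessness to replay the one defective accept/reject decision at every rung of a birth--death chain with energies linear in the rung index. The paper, however, avoids your case split and product step entirely. It augments each rung $k$ with an auxiliary coordinate $z$ ranging over $[0,\exp(kX)]$ (where $X=\sum_i x_i$ is the total base energy difference), which forces the \emph{true} marginal in $k$ to be uniform while the chain's marginal remains a truncated geometric with ratio $A=(\alpha_+/\alpha_-)\exp(X)\neq1$. Uniform versus geometric on $\{0,\dots,K-1\}$ already has $\operatorname{TV}\to1$ and $\operatorname{KL}\to\infty$ as $K\to\infty$ for any $A\neq1$, so a single parameter $K$ does all the work. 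The paper also handles a possibly asymmetric base proposal ratio by making the ladder proposal itself asymmetric (probability $\tfrac14$ to decrease $k$, probability $\tfrac14\cdot q(\theta\mid\theta')/q(\theta'\mid\theta)$ to increase $k$), so that the ratio fed to \texttt{SubsMH} at every rung matches the base exactly; your ``symmetric nearest-neighbor proposal'' would feed \texttt{SubsMH} a ratio of $1$ even when the base ratio is not.

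Your product step, as written, has a genuine gap. The constants $c_i$ in Assumption~\ref{assump} are global: they must bound $|U_i(\theta)-U_i(\theta')|/M(\theta,\theta')$ over \emph{all} pairs, not just single-coordinate moves, so you cannot set $c_i=0$ on ``inactive'' indices when those same $U_i$ vary along their own coordinate. With $m$ separate copies of the energies the subroutine would then receive $N=mN_*$, $C=mC_*$, and a $\Delta U$ defined on a larger index set; \texttt{SubsMH} is free to behave differently on such inputs (AustereMH's confidence bounds, for instance, depend explicitly on $N$), so the per-coordinate acceptance need not equal the base $\tilde a_{01},\tilde a_{10}$. This is fixable---take the \emph{same} $N_*$ energies with $U_i(\mathbf{j})=\bigl(\sum_\ell j_\ell\bigr)v_{*,i}$, which yields the product target while keeping $(N,C,c_i,\Delta U,M)$ identical to the base on every single-coordinate move---but as stated the argument does not go through, and the paper's $z$-augmentation sidesteps the issue altogether.
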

Theorem \ref{statement:counterexample} shows that when using any inexact method, there always exists a target distribution $\pi$ (factored in terms of energy functions $U_i$) and proposal distribution $q$ such that it will approximate $\pi$ arbitrarily poorly. This can happen even when individual errors are small; they can still accumulate a very large overall error. We prove Theorem~\ref{statement:counterexample} via a simple example---a random walk along a line, in which the inexact method causes the chain to step towards one direction more often than the other, even though its steps should be balanced (Appendix~\ref{app:proof:counterexample}). Note that it may be possible to avoid a large error by using some specific proposal distribution, but such a proposal is hard to know in general. 

We use AustereMH~\cite{korattikara2014austerity} and MHminibatch~\cite{seita2016efficient} to empirically validate Theorem~\ref{statement:counterexample}. For these inexact methods, we plot density estimates with the number of states $K=200$ in Figure \ref{fig:counter-example}a (see Appendix \ref{app:experiments:counterexample} for using other $K$); the stationary distribution diverges from the target distribution significantly. Moreover, the TV distance between the density estimate and the true density increases as $K$ increases on this random walk example (Figure \ref{fig:counter-example}b). By contrast, our exact method (Section \ref{sec:tuna}) keeps a small TV distance on all $K$ and estimates the density accurately with an even smaller average batch size. We also tested AustereMH on robust linear regression, a common task, to show that the error of inexact methods can be large on standard problems (Appendix~\ref{app:experiments:counterexample}).

\begin{figure*}[t!]
    \centering
    \begin{tabular}{cccc}		
    \includegraphics[width=4.3cm]{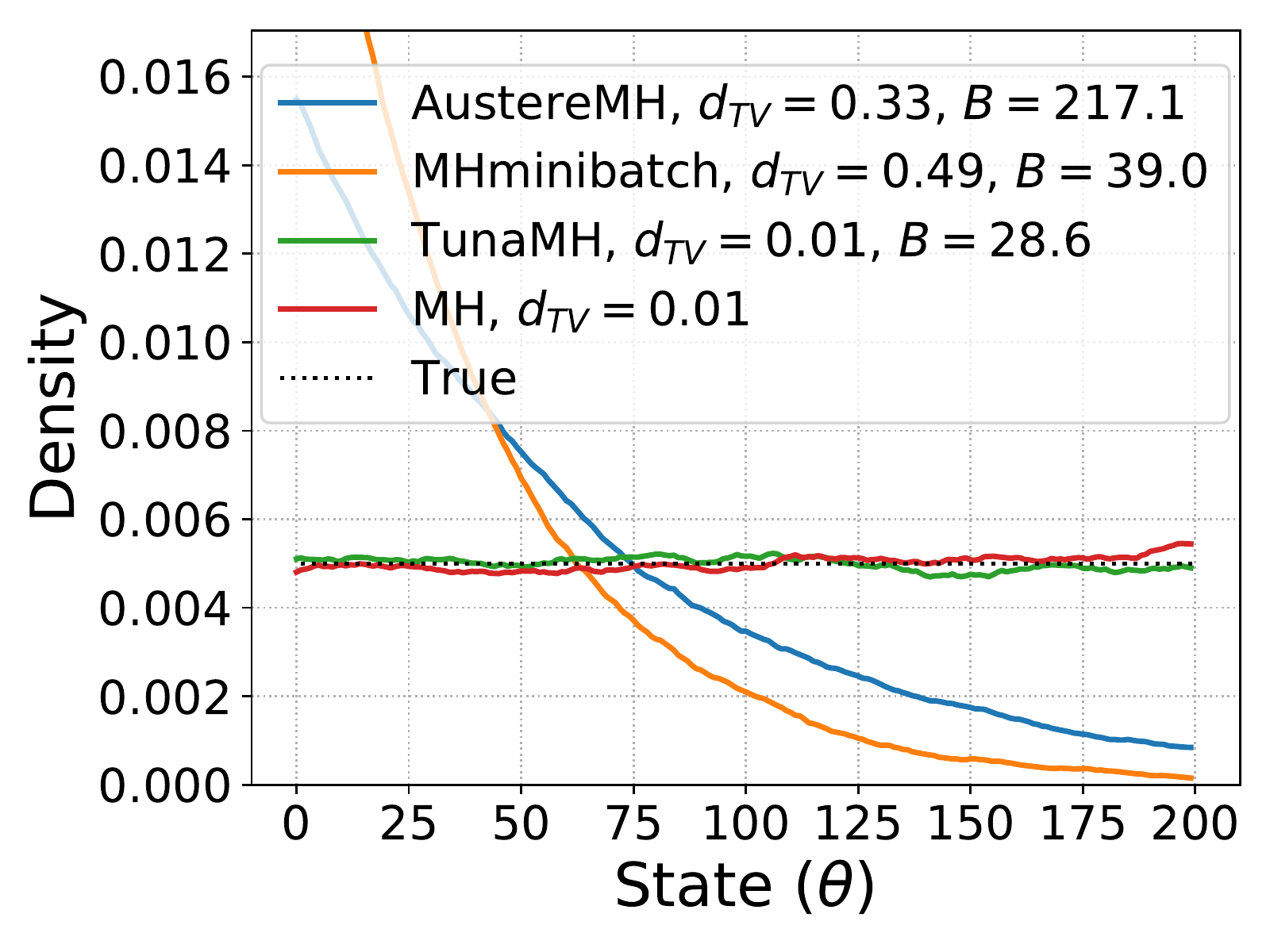} &
    \hspace{-4mm}
    	\includegraphics[width=4.3cm]{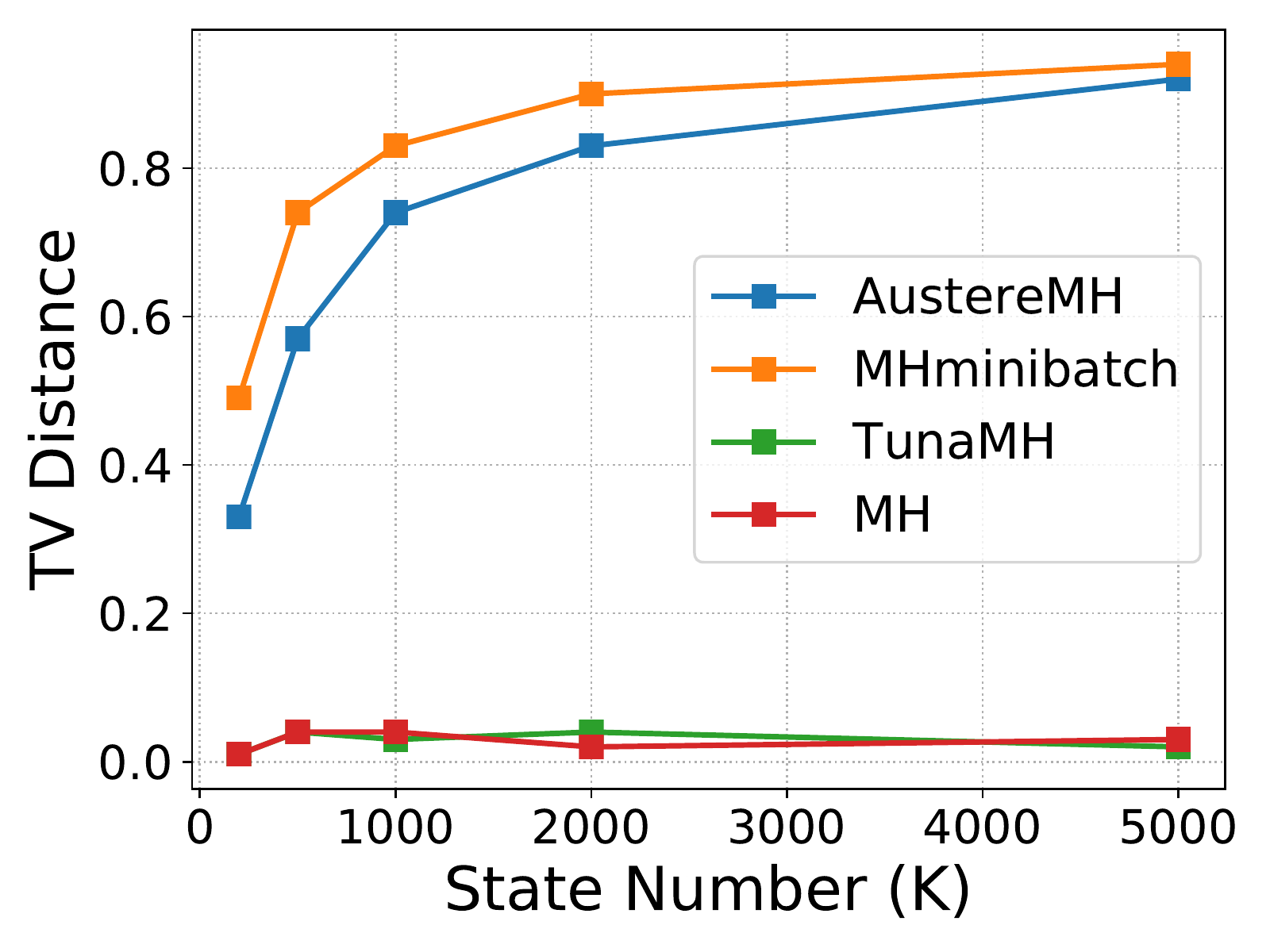}  &
    		\hspace{-4mm}
    	\includegraphics[width=4.3cm]{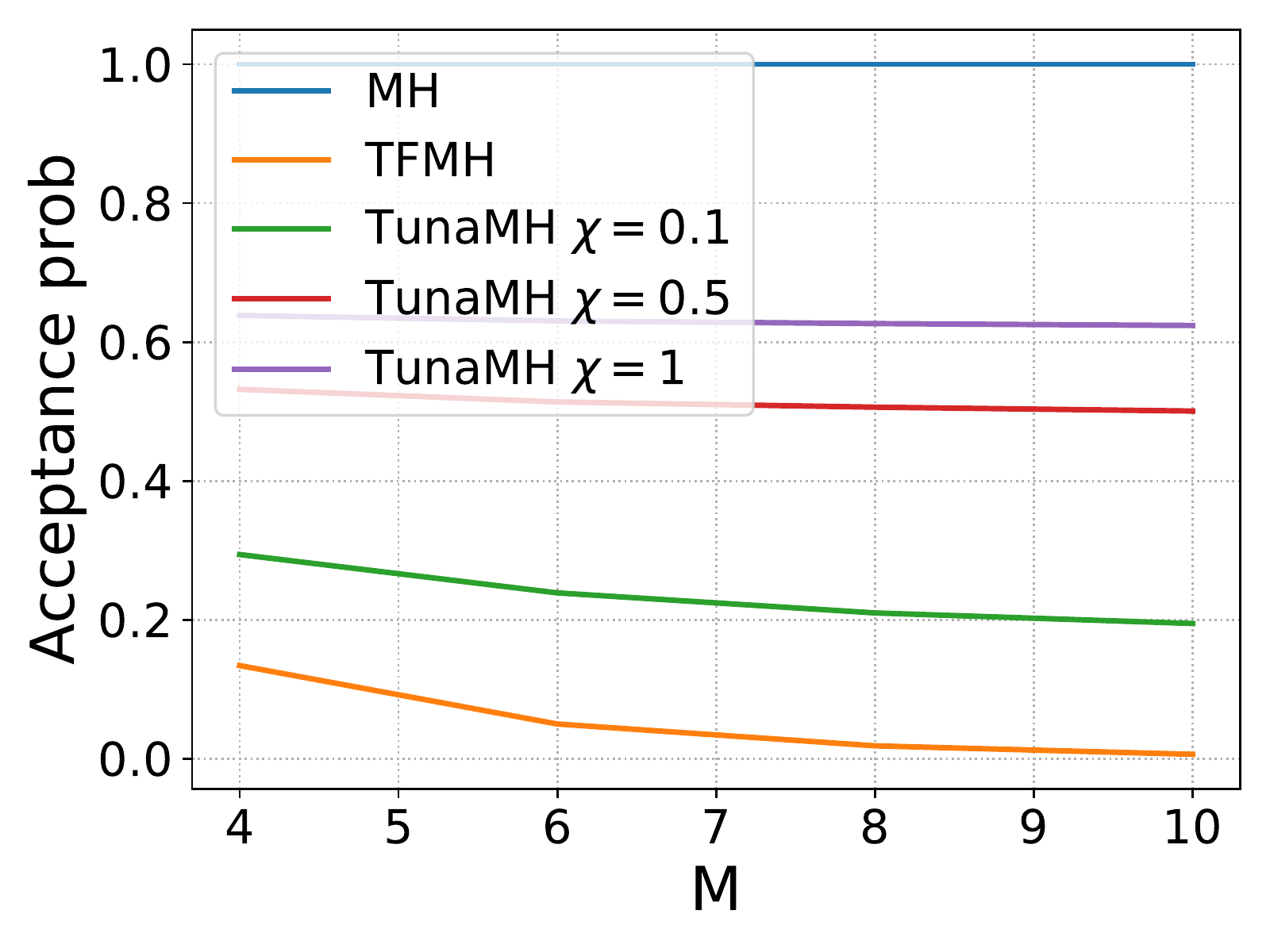} &
    	\\		
    	(a) &
    	(b) &
    	(c) 
    	\hspace{-0mm}\\		
    \end{tabular}
    \caption{Existing MH method issues. (a)-(b) Inexact methods can diverge a lot from true distribution. ``$d_{TV}$'' and ``$B$'' denote the TV distance and the batch size respectively. (c) SMH has low and \methodname{} with different values of hyperparameter $\chi$ has high acceptance rates.}
    \label{fig:counter-example}
\end{figure*}

\subsection{Issues with Existing Exact Methods} \label{sec:issue-of-exact}
This observation suggests that we should be using exact methods when doing minibatch MH. However, existing approaches present additional drawbacks, which we discuss below.

\textbf{Factorized MH and Scalable MH} 
are stateless, exact minibatch methods. Factorized MH (FMH) decomposes the acceptance rate into a product of factors, which allows for rejecting a proposal based on a minibatch of data  \citep{ceperley1995path,christen2005markov,banterle2015accelerating}. Truncated FMH (TFMH) is a FMH variant that maintains geometric ergodicity; it falls back on standard MH in a step when the bound on the factors reaches a certain threshold \citep{cornish2019scalable}. No matter how this threshold is set, we can construct tasks where TFMH is either arbitrarily inefficient (rejecting arbitrarily often, slowing convergence), or degrades entirely to standard MH.
\begin{statement}\label{statement:fmh}
For any constant $p\in (0,1)$, there exists a target distribution such that TFMH either has an acceptance rate which is less than p times that of standard MH, or it completely degrades to standard MH (summing over the whole dataset at each step).
\end{statement}
We prove this statement in Appendix \ref{app:proof:smh} using an example of a uniform distribution along a line, where we let $x_i$ take one of two values, $\{-M/N, M/N\}$ with $M>0$. We show that the acceptance rate of TFMH can be arbitrarily low by increasing $M$, which we also empirically verify in Figure \ref{fig:counter-example}c.

To improve the acceptance rate of TFMH, Scalable MH (SMH) introduces control variates, which approximate $U_i$ with a Taylor series around the mode \citep{cornish2019scalable}. However, it only works with unimodal posteriors and high-quality Bernstein-von Mises approximations---conditions that do not hold for many common inference tasks.

\textbf{PoissonMH} is a stateless minibatch MH method adapted from an algorithm designed for scaling Gibbs sampling on factor graphs \cite{zhang2019poisson}. However, unlike our method, it requires strong assumptions---specifically, a global upper bound on the energy. Such an upper bound usually does not exist and, even if it does, can be very large, resulting in an impractically large batch size.

\textbf{FlyMC} is a stateful method, which means it uses auxiliary random variables to persist state across different MH steps \cite{maclaurin2015firefly}. It requires a lower bound on the likelihood function, which is typically more demanding than Assumption~\ref{assump} and does not have theoretical performance guarantees.

\textbf{Other exact methods} exist based on Piecewise Deterministic Markov Processes \cite{bouchard2018bouncy,bierkens2019zig}. They require regularity conditions only available for some problems, so their practical utility is limited.

\section{\methodname: Asymptotically Optimal Exact MH} \label{sec:tuna}

In this section, we present our method, \methodname{}, which evades the issues of prior exact methods discussed in Section \ref{sec:issue-of-exact}.
Like SMH \cite{cornish2019scalable}, our method works on distributions for which an \emph{a priori} bound on the energy differences is known (Assumption \ref{assump}). 

Our algorithm, presented in Algorithm~\ref{alg:poisson-mh}, takes as parameters $c_1, \ldots, c_N$, $C$, and $M$ from Assumption \ref{assump}, along with an additional hyperparameter, $\chi>0$.
It proceeds in four steps.
First, like any MH method, it generates a proposal $\theta'$ from given distribution $q$.
Second, it samples a batch size $B$ from a Poisson distribution. This makes the expected number of energy functions $U_i$ evaluated by our method at each step
$\mathbf{E}[B] = \chi C^2 M^2(\theta, \theta') + C M(\theta, \theta')$\footnote{Note that $\mathbf{E}[B]$ is typically  $<<$ $N$ and can be decreased using small step sizes. If, however, $\mathbf{E}[B]>N$, then we can simply use standard MH in that iteration, similar to TFMH.}.
Importantly, this means the batch size may vary from iteration to iteration, and the expected size depends on $\theta$ and $\theta'$. For example, \methodname{} may tend to set $B$ larger for larger-distance proposals with a higher $M(\theta, \theta')$.
Third, it samples (with replacement) a minibatch of size $B$, but for each data point it samples, it has some probability of \emph{ejecting} this point from the minibatch.
Finally, it accepts the proposed $\theta'$ with some probability, computed using a sum over the post-ejection minibatch.
Our method can be derived by carefully replacing the auxiliary variables in PoissonMH with \emph{local} Poisson variables whose distributions change each iteration depending on the pair $(\theta, \theta')$ (Appendix \ref{app:algo-derivation}). By construction \methodname{} is exact; it preserves the target distribution $\pi$ as its stationary distribution.
This is because \methodname{} is \emph{reversible}, meaning its transition operator $T$ satisfies $\pi(\theta) T(\theta , \theta') = \pi(\theta') T(\theta' , \theta)$ for any $\theta, \theta' \in \Theta$. This is a common condition that guarantees that a MCMC method has $\pi$ as its stationary distribution~\cite{levin2017markov, brooks2011handbook}.


Compared to previous exact methods, a significant benefit of \methodname{} is that we can prove theoretical guarantees on its efficiency. Specifically, its convergence speed is guaranteed to be close to standard MH and $\chi$ allows us to control how close. To show this, we lower bound the convergence rate of \methodname{} in terms of the \emph{spectral gap}, which is commonly used to characterize convergence speed in the MCMC literature \cite{rudolf2011explicit,hairer2014spectral,levin2017markov,zhang2019poisson,zhang2020amagold}. The larger the spectral gap, the faster the chain converges.

\begin{algorithm}[t]
  \caption{\methodname{}}
  \begin{algorithmic}
    \label{alg:poisson-mh}
    \STATE \textbf{given:} initial state $\theta \in \Theta$; proposal dist. $q$; hyperparameter $\chi$; Asm.~\ref{assump} parameters $c_i$, $C$, $M$
    \LOOP
      \STATE \textbf{propose} $\theta'\sim q(\cdot|\theta)$ and \textbf{compute} $M(\theta, \theta')$
      \vspace{0.5em}
      \STATE $\triangleright$ Form minibatch $\mathcal{I}$
      \STATE \textbf{sample} $B \sim \text{Poisson}\left( \chi C^2M^2(\theta,\theta')  + CM(\theta,\theta')\right)$
      \STATE \textbf{initialize minibatch indices} $\mathcal{I} \leftarrow \emptyset$  (an initially empty multiset)
      \FOR{$b \in \{1,\ldots,B\}$}
        \STATE \textbf{sample} $i_b$ such that $\mathbf{P}(i_b = i) = c_i/C$, for $i=1\ldots N$
        \STATE \textbf{with probability} $\frac{\chi c_{i_b} C M^2(\theta, \theta') + \frac{1}{2}(U_{i_b}(\theta') - U_{i_b}(\theta) + c_{i_b}M(\theta,\theta'))}{\chi c_{i_b}C  M^2(\theta, \theta') + c_{i_b} M(\theta, \theta')}$ \textbf{add} $i_b$ to $\mathcal{I}$ 
      \ENDFOR
      \vspace{0.5em}
      \STATE $\triangleright$ Accept/reject step based on minibatch $\mathcal{I}$
      \STATE \textbf{compute MH ratio} $r \leftarrow
        \exp\left(2 \sum_{i \in \mathcal{I}} \operatorname{artanh}\left(
        \frac{U_i(\theta) - U_i(\theta') }{c_{i}M(\theta,\theta') (1 + 2 \chi CM(\theta,\theta'))} 
    \right) \right)
        \cdot \frac{q(\theta'|\theta)}{q(\theta|\theta')}$
      \STATE \textbf{with probability} $\min(1,r)$, set $\theta \leftarrow \theta'$
    \ENDLOOP
  \end{algorithmic}
\end{algorithm}
  
\begin{definition}
The \emph{spectral gap} of a reversible Markov chain is the distance between the largest and second-largest eigenvalues of its transition operator. That is, if the eigenvalues of the transition operator are $1 = \lambda_1 > \lambda_2 \ge \lambda_3 \cdots$, then the spectral gap is $\gamma = 1 - \lambda_2$. 
\end{definition}

\begin{theorem}\label{thm:spectral-gap}
\methodname{} (Algorithm~\ref{alg:poisson-mh}) is reversible with stationary distribution $\pi$. Let $\bar\gamma$ denote the spectral gap of \methodname{}, and let $\gamma$ denote the spectral gap of standard MH with the same target distribution and proposal distribution. Then,
  \[
    \textstyle
    \bar{\gamma}
    \ge
    \exp \left(-\frac{1}{\chi} - 2\sqrt{\frac{\log 2}{\chi}} \right)\cdot\gamma.
  \]
\end{theorem}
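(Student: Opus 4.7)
The plan is to handle reversibility and the spectral-gap bound separately. For reversibility, I would first unfold the two-stage sampling in Algorithm~\ref{alg:poisson-mh} via Poisson splitting: the outer $B$, the $c_i/C$ index draws, and the ejection step jointly produce independent counts $N_i \sim \text{Poisson}(\lambda_i(\theta,\theta'))$ with $\lambda_i(\theta,\theta') = \tfrac{1}{2}\bigl(c_i M(\theta,\theta')(1+2\chi C M(\theta,\theta')) - (U_i(\theta)-U_i(\theta'))\bigr)$. Writing the forward transition kernel as $q(\theta'|\theta)\prod_i \text{Pois}(N_i;\lambda_i(\theta,\theta'))\min(1,r)$ and checking detailed balance, the ratio $\prod_i \lambda_i(\theta,\theta')^{N_i}/\lambda_i(\theta',\theta)^{N_i}$ should collapse to $\exp(-2\sum_i N_i \operatorname{artanh}(\alpha_i))$, which is exactly inverted by the $\operatorname{artanh}$ sum inside $r$; the surviving $\exp(\sum_i(U_i(\theta)-U_i(\theta')))$ factor then balances $\pi(\theta)/\pi(\theta')$. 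Together with the usual MH identity $\min(1,r(\theta,\theta')) = r(\theta,\theta')\min(1,r(\theta',\theta))$ (valid because the algorithm is set up so that $r(\theta,\theta')r(\theta',\theta)=1$), this delivers detailed balance and hence $\pi$-stationarity.

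For the spectral-gap bound, I would use a Dirichlet-form comparison. Since TunaMH and standard MH share the same proposal $q$, it suffices to establish a pointwise lower bound $\bar a(\theta,\theta') \ge K\,a_{\mathrm{MH}}(\theta,\theta')$ on the acceptance probabilities for every $\theta \ne \theta'$; this yields $\bar{\mathcal E}(f,f) \ge K\,\mathcal E_{\mathrm{MH}}(f,f)$ for each $f$, and the variational characterization $\gamma = \inf_f \mathcal E(f,f)/\operatorname{Var}_\pi(f)$ then gives $\bar\gamma \ge K\gamma$. Using the Poisson MGF together with the algebraic identity $\lambda_i \cdot 2\alpha_i/(1-\alpha_i) = U_i(\theta)-U_i(\theta')$ (exactly how $\lambda_i$ was engineered), one gets $\mathbb{E}[r] = L$, where $L$ is the standard MH ratio and $a_{\mathrm{MH}} = \min(1,L)$. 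The remaining step is a reverse-Jensen-style bound $\mathbb{E}[\min(1,r)] \ge K \min(1,\mathbb{E}[r])$ with $K = \exp(-1/\chi - 2\sqrt{\log 2/\chi})$.

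This reverse-Jensen estimate is the main obstacle, and my approach is a concentration argument for $\log r$ around $\log L$. The bound $|\alpha_i| \le (1+2\chi CM)^{-1}$ coming from Assumption~\ref{assump}, combined with a Taylor-remainder estimate on $\operatorname{artanh}$, makes $\log r$ a Poisson sum whose variance, and whose Jensen gap $\log \mathbb{E}[e^{\log r}] - \mathbb{E}[\log r]$, are both of order $1/\chi$; this Jensen gap is what produces the $\exp(-1/\chi)$ factor in $K$. A Markov-type concentration tuned so that the deviation event has probability $1/2$ (the source of the $\log 2$) then shows that $r \ge e^{-2\sqrt{\log 2/\chi}} L$ on an event of probability at least $1/2$, yielding $\mathbb{E}[\min(1,r)] \ge \tfrac{1}{2} e^{-2\sqrt{\log 2/\chi}}\min(1,L)$, and absorbing the $\tfrac12$ into the Jensen correction gives exactly the stated $K$. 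The delicate point is keeping the constants tight enough to land on precisely $1/\chi + 2\sqrt{\log 2/\chi}$ in the exponent: a generic sub-Gaussian bound on $\log r$ would be too lossy, so the explicit Poisson-cumulant structure of the algorithm is essential. Everything else --- the reversibility check and the MGF identity $\mathbb{E}[r]=L$ --- is a bookkeeping exercise once the Poisson-thinning picture is in place.
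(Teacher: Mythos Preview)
Your reversibility argument via Poisson thinning is correct and essentially what the paper does (the paper works with the equivalent Algorithm~\ref{alg:poisson-mh2} form and checks directly that $\pi(\theta)T(\theta,\theta')$ is symmetric). Your MGF identity $\mathbf{E}[r]=L$ is also correct and is a nice observation.

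The gap is in your reverse-Jensen step. Your sketch produces, at best, $\mathbf{E}[\min(1,r)]\ge \tfrac12\,e^{-c_1/\chi - c_2/\sqrt\chi}\min(1,L)$ for some constants $c_1,c_2$ coming from the Jensen gap and the variance of $\log r$; the claim that ``absorbing the $\tfrac12$ into the Jensen correction gives exactly the stated $K$'' does not hold, since $\tfrac12=e^{-\log 2}$ is an additive $\log 2$ in the exponent, not a $1/\chi$ term. More fundamentally, a Markov/Chebyshev bound on $\log r$ cannot produce the precise exponent $-1/\chi-2\sqrt{\log 2/\chi}$: nothing in your argument forces the variance constant to be exactly $2\log 2$, and the leftover probability factor is separate from the Jensen gap. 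Your diagnosis of where the $\log 2$ comes from (a probability-$1/2$ event) is also not how it arises in the paper.

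The paper takes a different route that sidesteps reverse Jensen entirely. It rewrites $\pi(\theta)T(\theta,\theta')$ as an expectation over \emph{direction-neutral} Poisson variables $r_i\sim\text{Poisson}(\lambda c_i/C)$ (note the rate does not depend on $\theta$ versus $\theta'$), obtaining a symmetric $\min$ of two exponentials. It then bounds the ratio $\pi(\theta)T(\theta,\theta')/\pi(\theta)G(\theta,\theta')$ via $\min(A,B)/\min(C,D)\ge\min(A/C,B/D)$, flips the resulting $\min$ to a $\max^{-1}$, and applies the $p$-norm inequality $\max(A,B)\le(A^p+B^p)^{1/p}$ for a free parameter $p>0$. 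After two Jensen steps and the Poisson MGF, a second-order Taylor bound reduces everything to $2^{-1/p}\exp(-(p+1)E)$ with $E=C^2M^2/\lambda=1/\chi$; optimizing over $p$ at $p=\sqrt{\log 2/E}$ yields exactly $\exp(-E-2\sqrt{E\log 2})$. So the $\log 2$ comes from the \emph{two} terms in the $p$-norm bound, and the $-1/\chi-2\sqrt{\log 2/\chi}$ shape is the outcome of the $p$-optimization, not a concentration estimate. If you want to salvage a concentration-style proof, you would need an exponential (Chernoff) bound on $\log r$ with a free tilt parameter and then optimize over that parameter --- which, once you unwind it, is essentially the paper's $p$-trick in disguise.
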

Intuitively, this theorem (proof in Appendix \ref{app:proof:spectral-gap}) suggests the convergence rate of \methodname{} is at most a constant slower than that of standard MH, and can be increased by adjusting the hyperparameter $\chi$.
Recall that $\chi$ also controls the batch size of \methodname{}. Effectively, this means $\chi$ is a \emph{dial} that allows us to directly tune the trade-off between convergence rate and batch size. When $\chi$ is large, the batch size $B$ is large and the spectral gap ratio, $\bar \gamma / \gamma$, is close to 1: the larger batch size is less scalable but keeps a high convergence rate. Conversely, when $\chi$ is small, the batch size is small and the spectral gap ratio is close to 0: we trade off slow-downs in convergence rate for scalability.
For example, for any $0 < \kappa < 1$, to guarantee the spectral gap ratio $\bar \gamma / \gamma \ge \kappa$ it suffices to set (Appendix~\ref{app:chi-value})
\begin{equation}
    \label{eq:TunaMHEB}
    \textstyle
    \chi = \frac{4}{(1-\kappa)\log(1/\kappa)}, \;\; \text{giving an average batch size of} \;\; \mathbf{E}[B] = \frac{4 C^2 M^2(\theta,\theta')}{(1-\kappa)\log(1/\kappa)} + C M(\theta,\theta').
\end{equation}
In practice, we usually want to minimize the wall-clock time to achieve a certain estimate error, which requires tuning $\chi$ to optimally balance scalability and efficiency. We attempt to derive a theoretically optimal value of $\chi$ in Appendix~\ref{app:optimal-value} by minimizing the product of the relaxation time---a measure of the number of steps needed---and the expected wall-clock time per step. Note that this product may be loose in bounding the total wall-clock time (we leave tightening this bound to future work), making the derived $\chi$ larger than necessary. In Section \ref{sec:exp} we give a simple heuristic to tune $\chi$, which works well and is generally better than the derived value.

Theorem \ref{thm:spectral-gap} only requires the mild constraints of Assumption \ref{assump} on the target distribution, so applies in many scenarios and compares well to other exact methods. SMH further requires a Bernstein-von Mises approximation to have guarantees on its batch size and acceptance rate. PoissonMH provides convergence rate guarantees, but demands the strong assumption that the target distribution has a global upper bound on the energy. FlyMC does not have any theoretical guarantees on performance. 

\section{Towards Optimal Exact Minibatch MH}\label{sec:optimality}

In Theorem \ref{thm:spectral-gap}, we expose the trade-off between convergence rate and batch size in \methodname{}. Here, we take this analysis a step further to investigate the limits of how efficient an exact minibatch MH method can be. To tackle this problem, we derive a lower bound on the batch size for any minibatch MH method that retains exactness and fast convergence. We then show that \methodname{} is asymptotically optimal in terms of its dependence on the problem parameters $C$ and $M$. In other words, it is not possible to outperform \methodname{} in this sense with a method in the class described by Algorithm~\ref{alg:subsampledMH}.

\begin{theorem}\label{thm:optimality}
Consider any stateless exact minibatch MH algorithm described by Algorithm~\ref{alg:subsampledMH}, any state space $\Theta$ (with $|\Theta| \ge 2$), any $C > 0$, and any function $M: \Theta \times \Theta \rightarrow \R^+$.
Suppose that the algorithm guarantees that, for some constant $\kappa \in (0,1)$, for any distribution, the ratio between the spectral gap of minibatch MH $\hat \gamma$ and the spectral gap of standard MH $\gamma$ is bounded by $\hat \gamma \ge \kappa \gamma$.
Then there must exist a distribution $\pi$ over $\Theta$ and
proposal $q$ such that the batch size $B$ of that algorithm, when deciding whether to accept any transition $\theta 
\rightarrow \theta'$, is bounded from below by
\begin{align}\label{eq:lower-bound}
    \mathbf{E}[B] \ge \zeta \cdot \kappa \cdot \left(C^2 M^2(\theta,\theta') + C M(\theta,\theta') \right)
\end{align}
for some constant $\zeta > 0$ independent of algorithm and problem parameters.
\end{theorem}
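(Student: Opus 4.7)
The plan is to reduce to a two-state space $\Theta=\{A,B\}$ (the minimal case allowed by $|\Theta|\ge 2$) and run a two-point indistinguishability / Le Cam argument. On two states, any chain that is exact for a given $\pi$ automatically satisfies detailed balance, so the induced chain is reversible and its spectral gap admits the closed form $\hat\gamma=\tfrac12(\hat a(A,B)+\hat a(B,A))$ under a uniform random proposal. Combining detailed balance $\hat a(A,B)/\hat a(B,A)=\pi(B)/\pi(A)$ with the Peskun/Tierney extremality $\hat a\le a_{\mathrm{MH}}$ for any reversible acceptance rule, the hypothesis $\hat\gamma\ge\kappa\gamma$ pinches the expected acceptance into
\[
\kappa\cdot a_{\mathrm{MH}}(\theta,\theta')\;\le\;\hat a(\theta,\theta')\;\le\;a_{\mathrm{MH}}(\theta,\theta')
\]
in each direction, an interval of width $(1-\kappa)\,a_{\mathrm{MH}}$.

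Next I would index a family of problems by the energy-sum $S=\sum_i(U_i(A)-U_i(B))$. Taking $N$ large, $c_i=C/N$, and $\Delta U(i)\in\{\pm CM/N\}$, Assumption~\ref{assump} is saturated with $M(A,B)=M$ and $S=(2p-1)\,CM$, where $p$ is the fraction of $+$-indices. Any stateless sampling rule in Algorithm~\ref{alg:subsampledMH} returns on each query an (essentially) Bernoulli$(p)$-valued outcome, so the subroutine is effectively estimating $p$ from Bernoulli samples. For two problems with $+$-fractions $p_1,p_2$, the single-query KL is $\lesssim (p_1-p_2)^2$, and by data processing the induced expected acceptance probabilities satisfy
\[
\bigl|\hat a^{(1)}-\hat a^{(2)}\bigr|\;\le\;\mathrm{TV}\bigl(\nu_1^{\otimes B},\nu_2^{\otimes B}\bigr)\;\le\;\sqrt{\tfrac12\,B\,\mathrm{KL}(\nu_1\|\nu_2)}\;\lesssim\;\frac{|S_1-S_2|\sqrt{B}}{CM}.
\]

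Choosing $S_1,S_2$ so that the admissible intervals from the first paragraph become disjoint (for example $S_1=0$ and $S_2=-2\log(1/\kappa)$) forces a separation $|\hat a^{(1)}-\hat a^{(2)}|$ of order $\kappa$; combining with the TV bound and solving for $B$ yields $\mathbf{E}[B]\gtrsim\kappa\cdot C^2M^2(\theta,\theta')$, matching the quadratic term of the claim. The linear $CM(\theta,\theta')$ term is handled separately in the regime where $CM$ is small: a direct argument shows that if $\mathbf{E}[B]=o(CM)$, then with constant probability the batch is empty and the output is independent of $\Delta U$, which cannot simultaneously satisfy exactness across the whole family of $\pi$ compatible with $(c_i,C,M)$. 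Adaptive sampling and random batch sizes are absorbed by replacing the i.i.d.\ KL bound with its chain-rule form along the query history and a Wald-type identity relating the expected stopping time to the information accumulated.

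The main obstacle I expect is tightening the $\kappa$-dependence down to the linear factor claimed in~\eqref{eq:lower-bound} rather than the $\kappa^2$ that falls out of a naive two-point Le Cam calculation; this likely requires either a sharper choice of $(S_1,S_2)$ that exploits Peskun extremality more carefully, or a Fano-style multi-hypothesis argument on a geometric grid of $S$-values that aggregates the near-indistinguishability of adjacent pairs. A secondary technical point is formalizing that the lower bound holds on $\mathbf{E}[B]$ for a \emph{specific} transition $\theta\to\theta'$ rather than in some amortized sense: one must argue that the algorithm cannot freely shift queries between directions without violating the $\kappa$-gap guarantee on some other distribution in the family.
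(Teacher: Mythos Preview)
Your high-level plan---reduce to two states, model the oracle as Bernoulli samples, run a Le Cam indistinguishability argument for the quadratic term, and use an ``empty batch'' argument for the linear term---matches the paper's structure closely. The paper also splits into two lemmas: one for $C^2M^2$ and one for $CM$. Your linear-term argument is essentially the same as the paper's Lemma~2.

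However, the obstacle you flag (linear $\kappa$ versus $\kappa^2$) is real, and neither of your proposed fixes is what the paper does. A single-step Le Cam argument cannot reach $\kappa$: with the admissible interval $[\kappa\,a_{\mathrm{MH}},a_{\mathrm{MH}}]$, the smallest separation you can force between two problems is of order $\kappa$, but the $|S_1-S_2|$ needed to make the intervals disjoint is at least $\log(1/\kappa)$, so your displayed TV inequality gives $\mathbf{E}[B]\gtrsim \kappa^2 C^2M^2/\log^2(1/\kappa)$ at best. A Fano grid over $S$-values does not obviously help, because the constraint is still a per-step separation of order $\kappa$.

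The paper's key additional idea is to \emph{amortize over multiple MH steps}. It runs the two-state chain for $t=\lceil 2\kappa^{-1}\log 2\rceil$ iterations and lower-bounds the TV distance between the two chains' marginals at time $t$ (not between single acceptance probabilities). With $t$ of order $1/\kappa$, the spectral-gap factor $(1-\kappa/2)^t$ is driven below $1/2$, so the target TV separation becomes $\Theta(1)$ rather than $\Theta(\kappa)$. A coupling argument then says the total number of oracle queries across all $t$ steps must be $\gtrsim C^2M^2$ (with no $\kappa$), and dividing by $t$ gives the per-step bound $\gtrsim \kappa\,C^2M^2$. This multi-step amplification is precisely what converts the $\kappa^2$ you are seeing into the linear $\kappa$ of~\eqref{eq:lower-bound}. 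It also sidesteps your secondary worry about ``shifting queries between directions'': the coupling is over the entire $t$-step trajectory, so the bound is automatically on the amortized (hence some-step) expected batch size for the transition in question.

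Two smaller points: your choice $S_2=-2\log(1/\kappa)$ implicitly requires $CM\ge 2\log(1/\kappa)$, which you should state; and the paper uses the TV bound of Adell--Jacob for binomials rather than Pinsker, though this is cosmetic.
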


To prove this theorem, we construct a random walk example over two states, then consider the smallest batch size a method requires to distinguish between two different stationary distributions~(Appendix~\ref{app:proof:optimality}). The impact of Theorem \ref{thm:optimality} is three-fold: 

First, it provides an upper bound on the performance of algorithms of Algorithm \ref{alg:subsampledMH}'s form: in each iteration, the average batch size of any exact minibatch MH method of the form of Algorithm \ref{alg:subsampledMH} must be set as in (\ref{eq:lower-bound}) in order to maintain a reasonable convergence rate. To the best of our knowledge, this is the first theorem that rigorously proves a ceiling for the possible performance of minibatch MH. 

Second, \methodname{} achieves this upper bound. In fact, Theorem \ref{thm:optimality} suggests that \methodname{} is \emph{asymptotically optimal} in terms of the problem parameters, $C$ and $M$. To see this, observe that when we ignore $\kappa$, both expressions that bound $\mathbf{E}[B]$ in (\ref{eq:TunaMHEB}) and (\ref{eq:lower-bound}) are $\bigTheta(C^2 M^2(\theta, \theta') + C M (\theta, \theta'))$.
Thus \methodname{} reaches the lower bound, achieving asymptotic optimality in terms of $C$ and $M$. (Of course, this sense of ``optimality'' does not rule out potential constant-factor improvements over \methodname{} or improvements that depend on $\kappa$.) 

Lastly, this result suggests directions for developing new exact minibatch MH algorithms: to be significantly faster than \methodname, we either need to introduce additional assumptions to the problem or to develop new stateful algorithms.

In prior work, when assuming a very concentrated posterior, some methods' batch size can scale in $\mathcal{O}(1)$ \cite{bardenet2017mcmc,bierkens2019zig,cornish2019scalable} or $\mathcal{O}(1/\sqrt{N})$ \cite{cornish2019scalable} in terms of the dataset size $N$ while maintaining efficiency. Theorem \ref{thm:optimality} is compatible with these results, further 
demonstrating this is essentially the \emph{best} dependency on $N$ an exact minibatch MH method can achieve. We show this by explicitly assuming the dependency of $C$ and $M$ on $N$, as in SMH \cite{cornish2019scalable}, yielding the following corollary (proof in Appendix~\ref{app:proof:cor1}): 
\begin{corollary}\label{col:bound}
Suppose that $C$ increases linearly with $N$ ($C = \bigTheta(N)$) and $M(\theta, \theta')$ scales in $\bigTheta(N^{-(h+1)/2})$ for some constant $h > 0$. Then the lower bound in Theorem \ref{thm:optimality} becomes $\bigTheta(N^{(1-h)/2})$. In particular, it is $\bigTheta(1)$ when $h=1$, and $\bigTheta(1/\sqrt{N})$ when $h=2$.
\end{corollary}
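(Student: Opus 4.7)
The plan is to treat this as a direct substitution into the lower bound of Theorem~\ref{thm:optimality}, which (absorbing $\kappa$ and $\zeta$ into the $\bigTheta$ since we are tracking dependence on $N$) reads
\[
\mathbf{E}[B] \;=\; \Omega\bigl(C^2 M^2(\theta,\theta') + C M(\theta,\theta')\bigr).
\]
First I would plug in the assumed scalings $C = \bigTheta(N)$ and $M(\theta,\theta') = \bigTheta\bigl(N^{-(h+1)/2}\bigr)$ and compute each of the two summands separately, obtaining
\[
C^2 M^2(\theta,\theta') = \bigTheta\bigl(N^{2} \cdot N^{-(h+1)}\bigr) = \bigTheta\bigl(N^{1-h}\bigr),
\]
\[
C M(\theta,\theta') = \bigTheta\bigl(N \cdot N^{-(h+1)/2}\bigr) = \bigTheta\bigl(N^{(1-h)/2}\bigr).
\]

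Next I would compare the two terms. Since $1-h \le (1-h)/2$ precisely when $h \ge 1$, the linear term $CM$ dominates in the regime of interest (where the posterior is sufficiently concentrated that the scaling becomes nontrivial), so that
\[
C^2 M^2(\theta,\theta') + C M(\theta,\theta') \;=\; \bigTheta\bigl(N^{(1-h)/2}\bigr),
\]
which is the claimed asymptotic rate. Finally, I would verify the two specific instances by substituting: $h=1$ yields $\bigTheta(N^{0}) = \bigTheta(1)$, and $h=2$ yields $\bigTheta(N^{-1/2}) = \bigTheta(1/\sqrt{N})$, matching the parenthetical claims.

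There is no real obstacle here; the proof is essentially a one-line exponent calculation, and the only mild subtlety is identifying which of the two terms in the lower bound dominates as a function of $h$. The conceptual content of the corollary lies entirely in Theorem~\ref{thm:optimality}, and this corollary's role is to translate that abstract bound into the $N$-dependence regime considered by prior work (e.g.~SMH), thereby showing that the $\bigTheta(1)$ and $\bigTheta(1/\sqrt{N})$ rates achieved there are essentially the best possible for any exact stateless minibatch MH method.
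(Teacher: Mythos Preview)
Your proposal is correct and follows essentially the same route as the paper's proof: substitute the assumed scalings into the bound from Theorem~\ref{thm:optimality}, compute each summand, and read off the dominant term. You are in fact slightly more careful than the paper, which simply asserts $\bigTheta(C^2M^2 + CM) = \bigTheta(CM)$ without noting that this requires $h \ge 1$; your observation that the linear term dominates precisely in this regime is a worthwhile clarification.
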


That is, \methodname{} matches the state-of-the-art's dependency on $N$, and this dependency is optimal. Similarly, since $C$ and $M$ are the only problem parameters in the lower bound in Theorem~\ref{thm:optimality}, we can also get the optimal dependency on the other problem parameters by explicitly assuming the relation of them with $C$ and $M$.

\section{Experiments}\label{sec:exp}
We compare \methodname{} to MH, TFMH, SMH (i.e. TFMH with MAP control variates) and FlyMC. We only include PoissonMH in the Gaussian mixture experiment, as it is not applicable in the other tasks. All of these methods are unbiased, so they have the same stationary distribution. To ensure fair wall-clock time comparisons, we coded each method in Julia; our implementations are at least as fast as, if not faster than, prior implementations. For each trial, we use Gaussian random walk proposals. We tune the proposal stepsize separately for each method to reach a target acceptance rate, and report averaged results and standard error from the mean over three runs. We set $\chi$ to be roughly the largest value that keeps $\chi C^2M^2(\theta,\theta')<1$ in most steps; we keep $\chi$ as high as possible while the average batch size is around its lower bound $CM(\theta,\theta')$. We found this strategy works well in practice. We released the code at \url{https://github.com/ruqizhang/tunamh}.
\subsection{Robust Linear Regression}\label{sec:rlr}
We first test \methodname{} on robust linear regression \cite{cornish2019scalable,maclaurin2015firefly}. We use a Student's t-distribution with degree of freedom $v=4$ and set data dimension $d=100$ (Appendix \ref{app:experiments}). We tune each method separately to a 0.25 target acceptance rate. To measure efficiency, we record effective sample size (ESS) per second---a common MCMC metric for quantifying the number of effectively independent samples a method can draw from the posterior each second \cite{brooks2011handbook}. Figure \ref{fig:linear}a shows \methodname{} is the most efficient for all dataset sizes $N$; it has the largest ESS/second. For minibatch MH methods, Figure \ref{fig:linear}b compares the average batch size. \methodname's batch size is significantly smaller than FlyMC's---about 35x with $N=10^5$. TFMH has the smallest batch size, but this is because it uses a very small step size to reach the target acceptance rate (Table \ref{tab:stepsize} in Appendix~\ref{app:experiments:rlr}). This leads to poor efficiency, which we can observe in its low ESS/second.

\textbf{MAP variants} Since TFMH and FlyMC have variants that use the \emph{maximum a posteriori} (MAP) solution to boost performance, we also test \methodname{} in this scheme. SMH uses MAP to construct control variates for TFMH to improve low acceptance rates. We consider both first- and second-order approximations (SMH-1 and SMH-2). FlyMC uses MAP to tighten the lower bound (FlyMC-MAP). For our method (\methodname-MAP) and MH (MH-MAP), we simply initialize the chain with the MAP solution. Figure \ref{fig:linear}c shows that \methodname{} performs the best even when previous methods make use of MAP. With control variates, SMH does increase the acceptance rate of TFMH, but this comes at the cost of a drastically increased batch size (Figure \ref{fig:linear}d) which we conjecture is due to the control variates scaling poorly in high dimensions ($d=100$).\footnote{Control variates worked well in the SMH paper \cite{cornish2019scalable} because all experiments had small dimension ($d=10$).} FlyMC-MAP tightens the bounds, entailing a decrease in the batch size. However, as clear in the difference in ESS/second, it is still less efficient than \methodname{} due to its strong dependence between auxiliary variables and the model parameters---an issue that previous work also documents~\cite{quiroz2019speeding}.

\begin{figure*}[t!]
    \centering
    \begin{tabular}{cc}
    	\includegraphics[width=4.5cm]{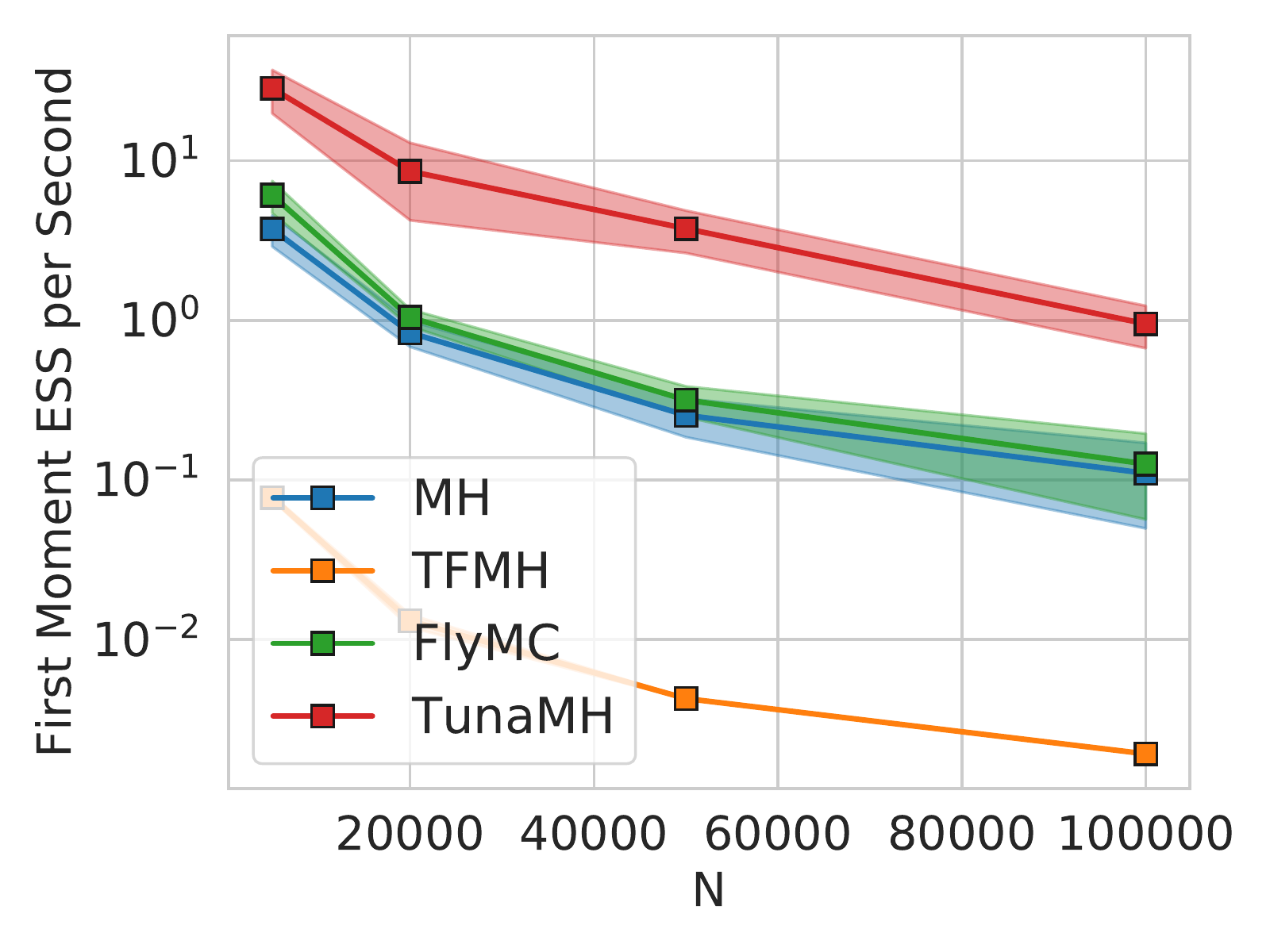} &
    	\includegraphics[width=4.5cm]{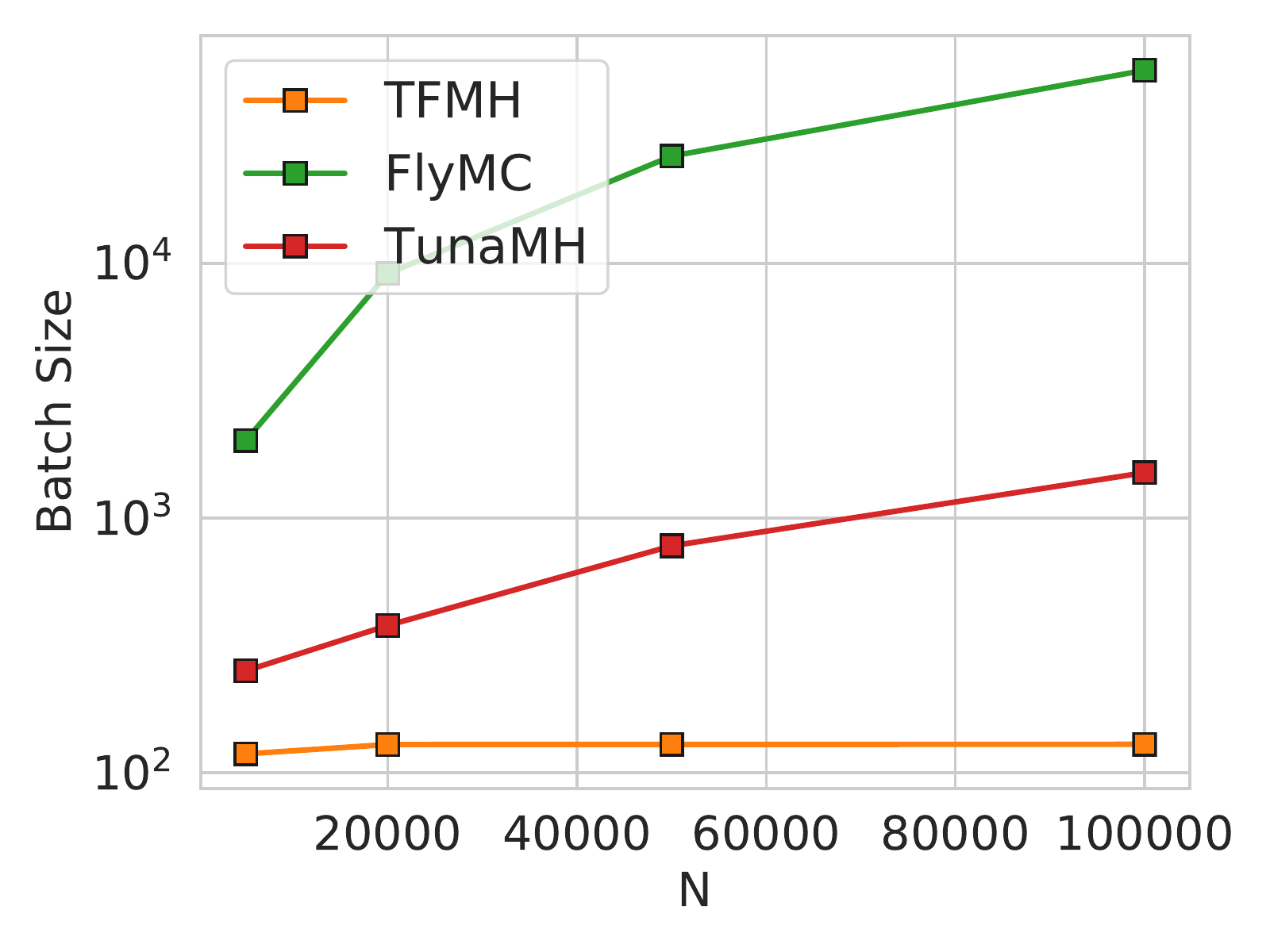}
    	\\
    	(a) &
    	(b) \\
    	\includegraphics[width=4.5cm]{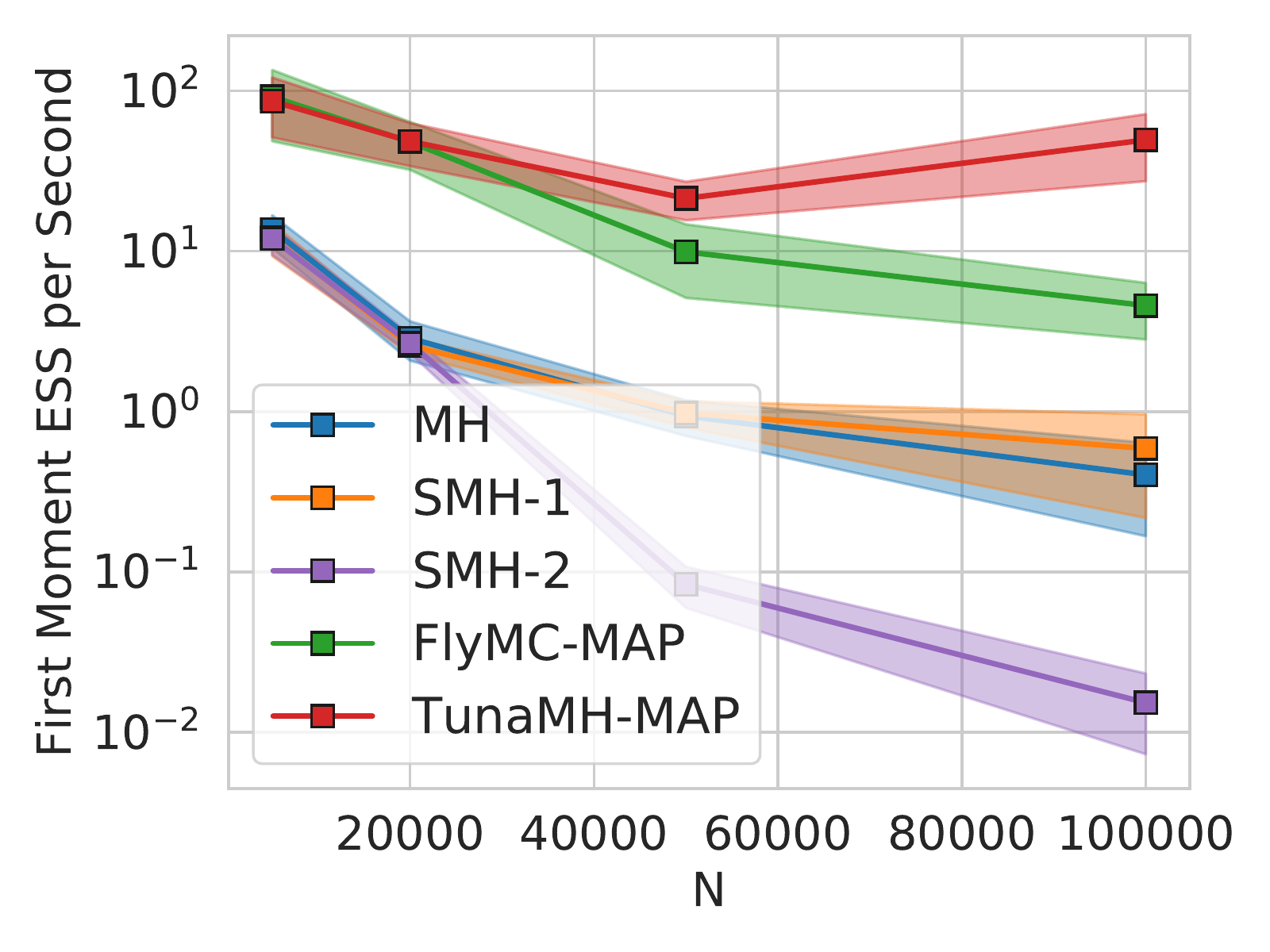} &
    	\includegraphics[width=4.5cm]{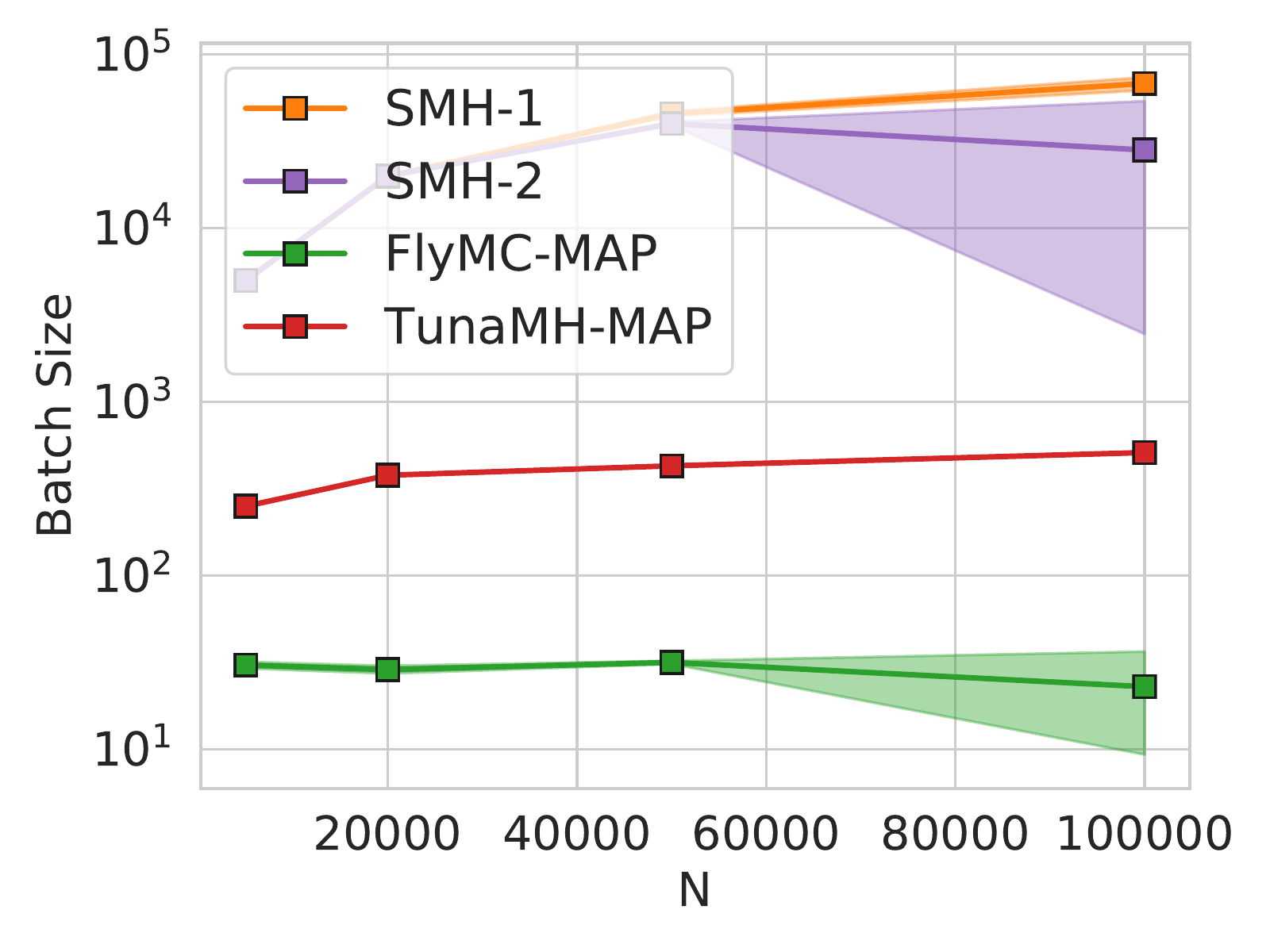}
    	\\		
    	(c) &
    	(d) \\		
    \end{tabular}
    \caption{Robust linear regression, $d=100$. (a) ESS/second without MAP. (b) Average batch size without MAP. (c)  ESS/second with MAP. (d) Average batch size with MAP.}
    \label{fig:linear}
\end{figure*}

\subsection{Truncated Gaussian Mixture}\label{sec:mog}
Next we test on a task with a multimodal posterior, a very common problem in machine learning. This demonstrates the advantage of \methodname{} not relying on MAP, because MAP is a single solution and therefore is unable to reflect all possible modes in multimodal distributions. As a result, methods that rely on MAP tuning or MAP-based control variates are unable to perform well on such problems.

We consider a Gaussian mixture. To get bounds on \methodname, TFMH, SMH, and FlyMC, we truncate the posterior, bounding $\theta_1, \theta_2\in [-3, 3]$ similar to \citet{zhang2019poisson}. We can include PoissonMH because its required bound exists after truncation. As in \citet{seita2016efficient}, we use a tempered posterior $\pi(\theta)\propto \exp \left(-\beta\sum_i U_i(\theta)\right)$ with $N = 10^6$ and $\beta=10^{-4}$. Figure \ref{fig:mog}a compares performance, showing symmetric KL versus wall-clock time. \methodname{} is the fastest, converging after 1 second, whereas the others take much longer. As expected, SMH-1 performs worse than TFMH, verifying the control variate is unhelpful for multimodal distributions. FlyMC and FlyMC-MAP are also inefficient; their performance is on par with standard MH, indicating negligible benefits from minibatching. 

\begin{figure*}[t!]
    \centering
    \begin{tabular}{ccc}		
    	\includegraphics[width=4.2cm]{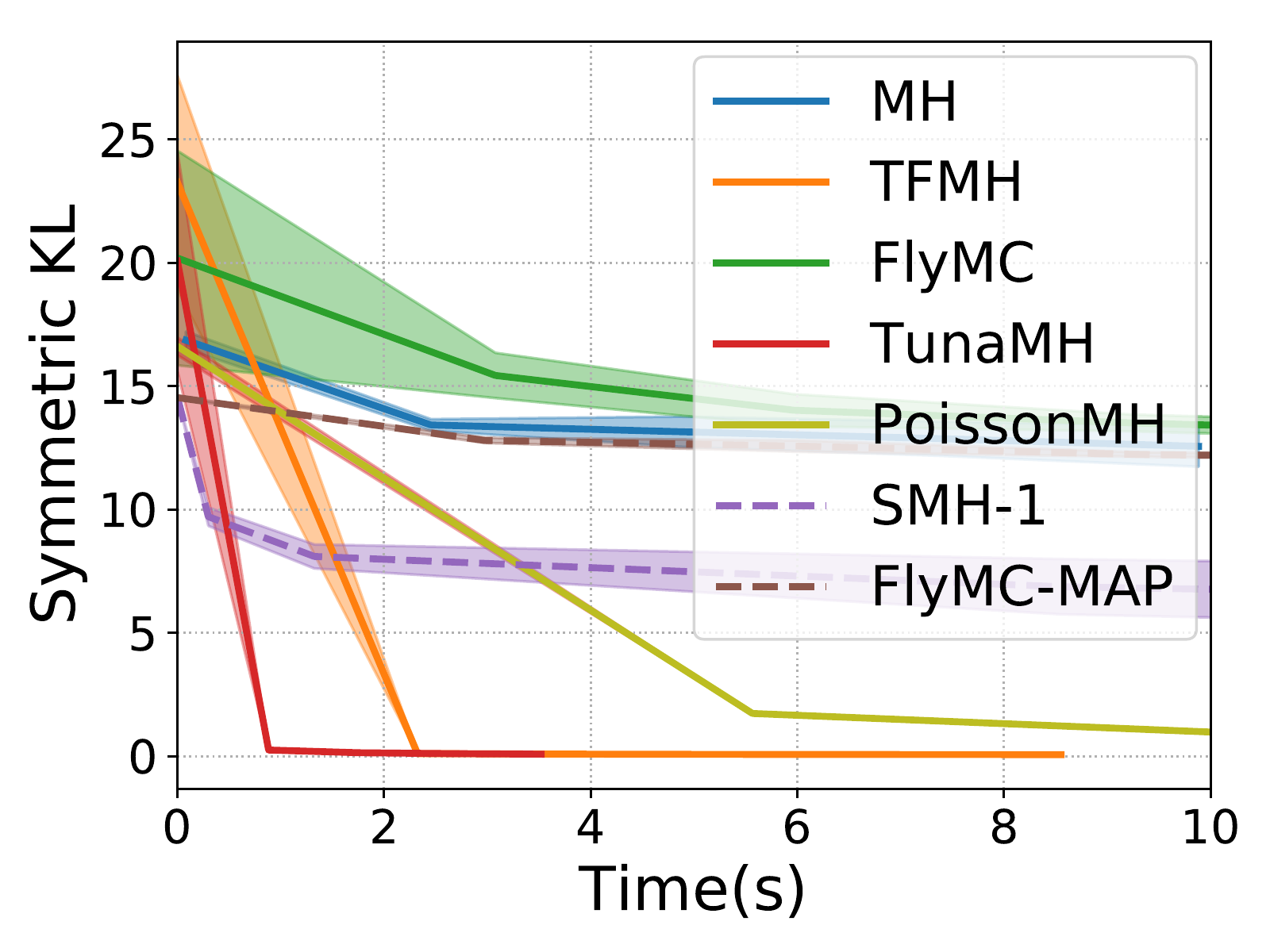}  &
    	\includegraphics[width=4.2cm]{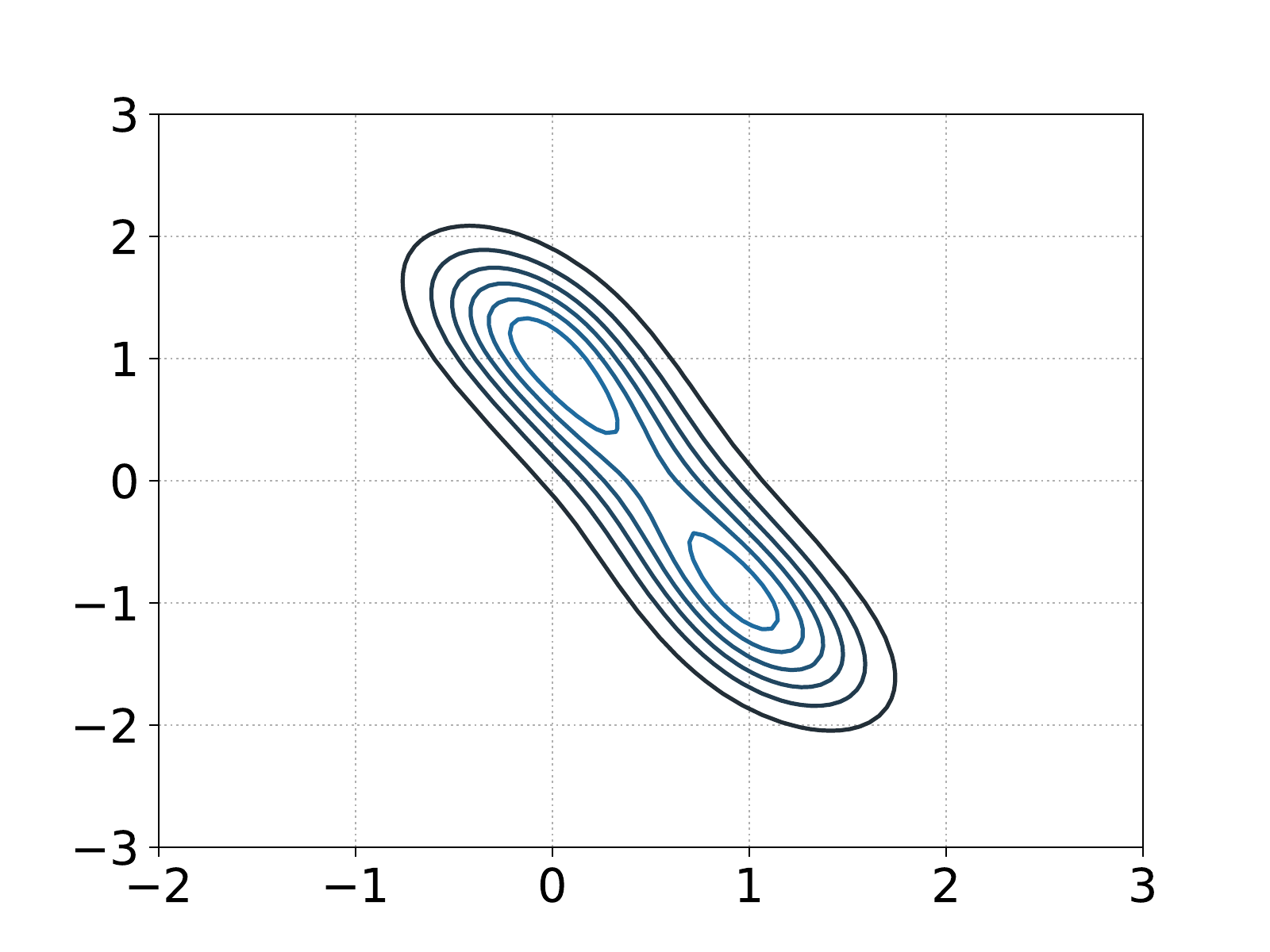} &
    	\includegraphics[width=4.2cm]{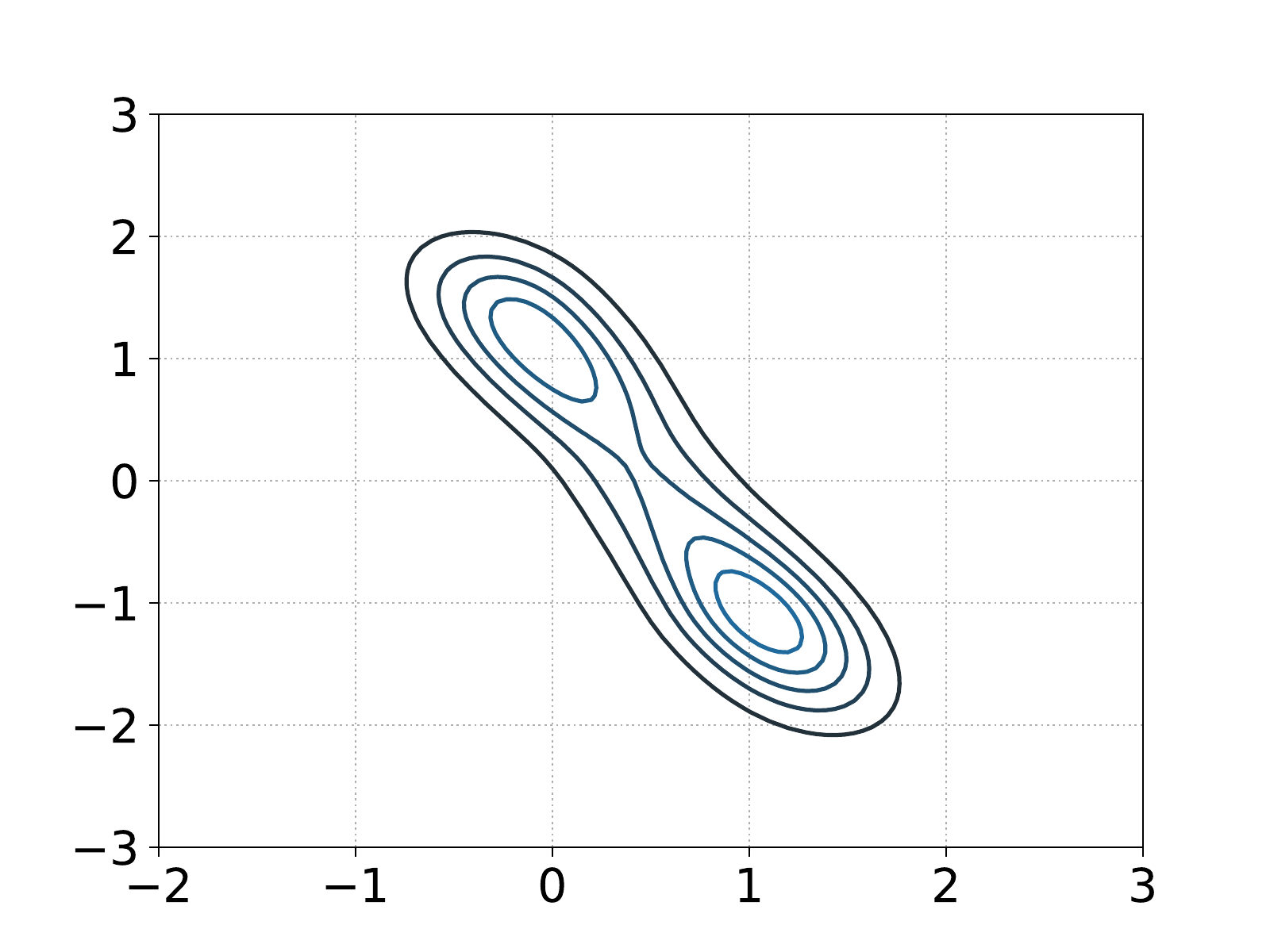}
    	\\		
    	(a) &
    	(b) &
    	(c) 
    	\hspace{-0mm}\\		
    \end{tabular}
    \caption{Truncated Gaussian mixture. (a) Symmetric KL comparison. (b) True distribution. (c) Denstity estimate of \methodname{} after 1 second.}
    \label{fig:mog}
\end{figure*}

\methodname{} also performs significantly better in terms of batch size, especially in comparison to PoissonMH (Table \ref{tab:bs}). This is due to \methodname's local bound on the energy, as opposed to PoissonMH's global bound. This also allows \methodname{} to run on more problem types, such as robust linear (Section \ref{sec:rlr}) and logistic (Section \ref{sec:lr}) regression. To illustrate the estimate quality, we also visualize the density estimate after 1 second; \methodname{}'s estimate (Figure \ref{fig:mog}c) is very close to the true distribution (Figure \ref{fig:mog}b), while the other methods do not provide on-par estimates within the same time budget (Appendix~\ref{app:experiments:mog}).

\subsection{Logistic Regression on MNIST}\label{sec:lr}

Lastly we apply \methodname{} to logistic regression on the MNIST image dataset of handwritten number digits. Mirroring the work of FlyMC \cite{maclaurin2015firefly}, we aim to classify 7s and 9s using the first 50 principal components as features. We set $\chi=10^{-5}$ following our heuristic. In Figure \ref{fig:logistic}a we see that \methodname{} is the fastest of all methods to converge, as measured by wall-clock time. We also compare average batch size in Table \ref{tab:bs}. \methodname's average batch size is 4x smaller than FlyMC's. TFMH again has the smallest batch size, but sacrifices efficiency by using a small step size in order to achieve the target acceptance rate. Thus, overall, TFMH is again inefficient in these experiments.

\addtolength{\tabcolsep}{-1pt} 
\begin{table}[h]
  \caption{Avg. batch size $\pm$ SE from the mean on 3 runs. PoissonMH not applicable to logistic reg.}
  \label{tab:bs}
  \centering
  \begin{tabular}{lcccccc}
    \toprule
    Tasks   & TFMH & FlyMC & PoissonMH & \methodname{}\\
    \midrule
    Gaussian Mixture   & $13.91\pm0.016$  & $811.52\pm234.16$ & $3969.67\pm327.26$ & $86.45\pm0.04$ \\
    Logistic Regression & $39.28\pm0.12$ & $1960.19\pm150.96$ & --- & $504.07\pm0.33$ \\
    \bottomrule
  \end{tabular}
\end{table}
\addtolength{\tabcolsep}{1pt} 

\textbf{Effect of Hyperparameter $\chi$}
To understand the effect of $\chi$ in \methodname{}, we report results with varying $\chi$. Figure \ref{fig:logistic}b plots test accuracy as a function of the number of iterations. As $\chi$ increases, \methodname's convergence rate approaches standard MH. This verifies our theoretical work: $\chi$ acts like a dial to control convergence rate and batch size trade-off---mapping to the efficiency-scalability trade-off. Figure \ref{fig:logistic}c shows \methodname's wall-clock time performance is not sensitive to $\chi$, as the performance is superior to standard MH regardless of how we set it. However, $\chi$ needs to be tuned in order to achieve the best performance. Previous methods do not have such a dial, so they are unable to control this trade-off to improve the sampling efficiency.

\begin{figure*}[t!]
    \centering
    \begin{tabular}{cccc}		
    	\includegraphics[width=4.2cm]{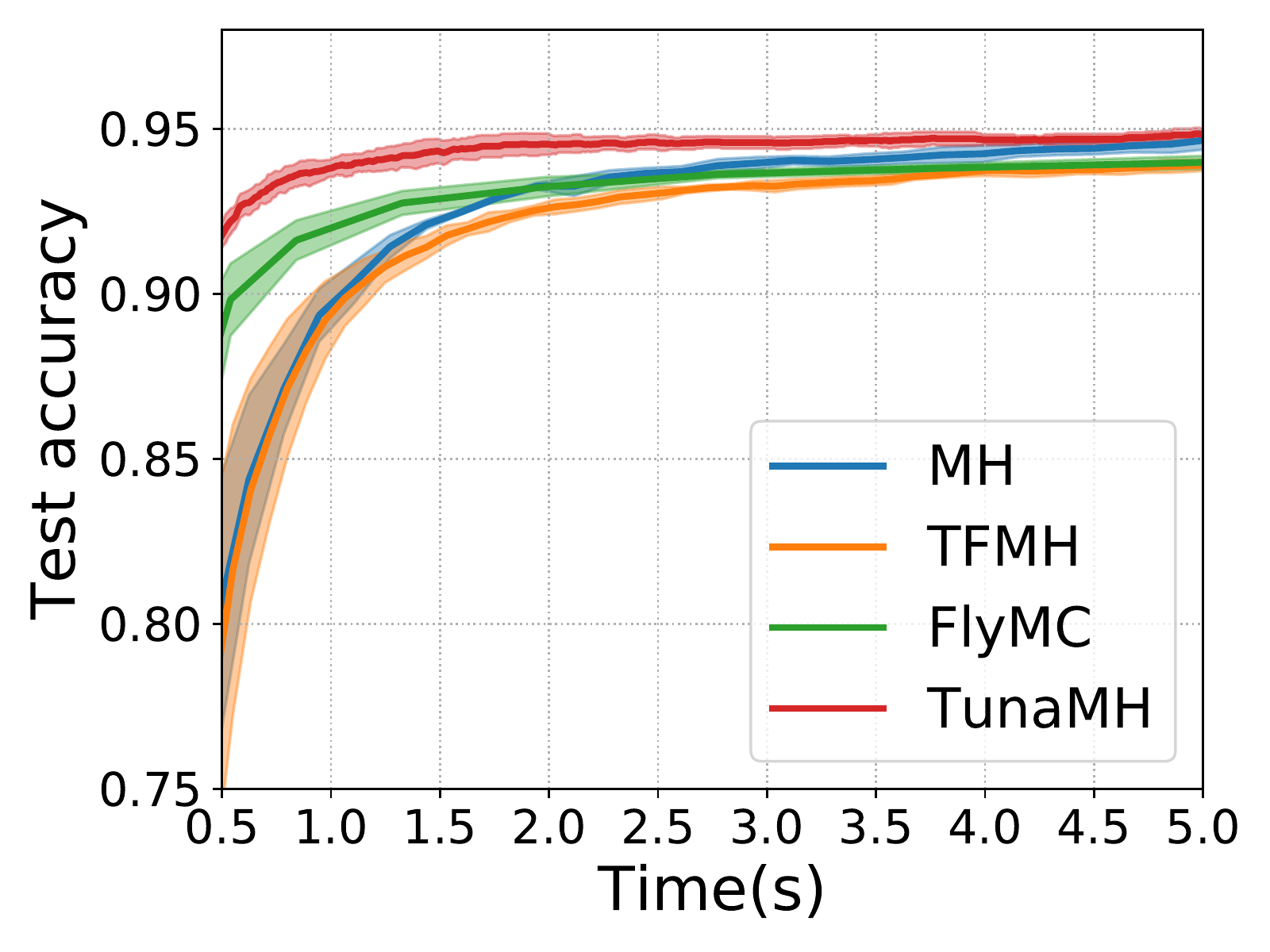}  &
    	\includegraphics[width=4.2cm]{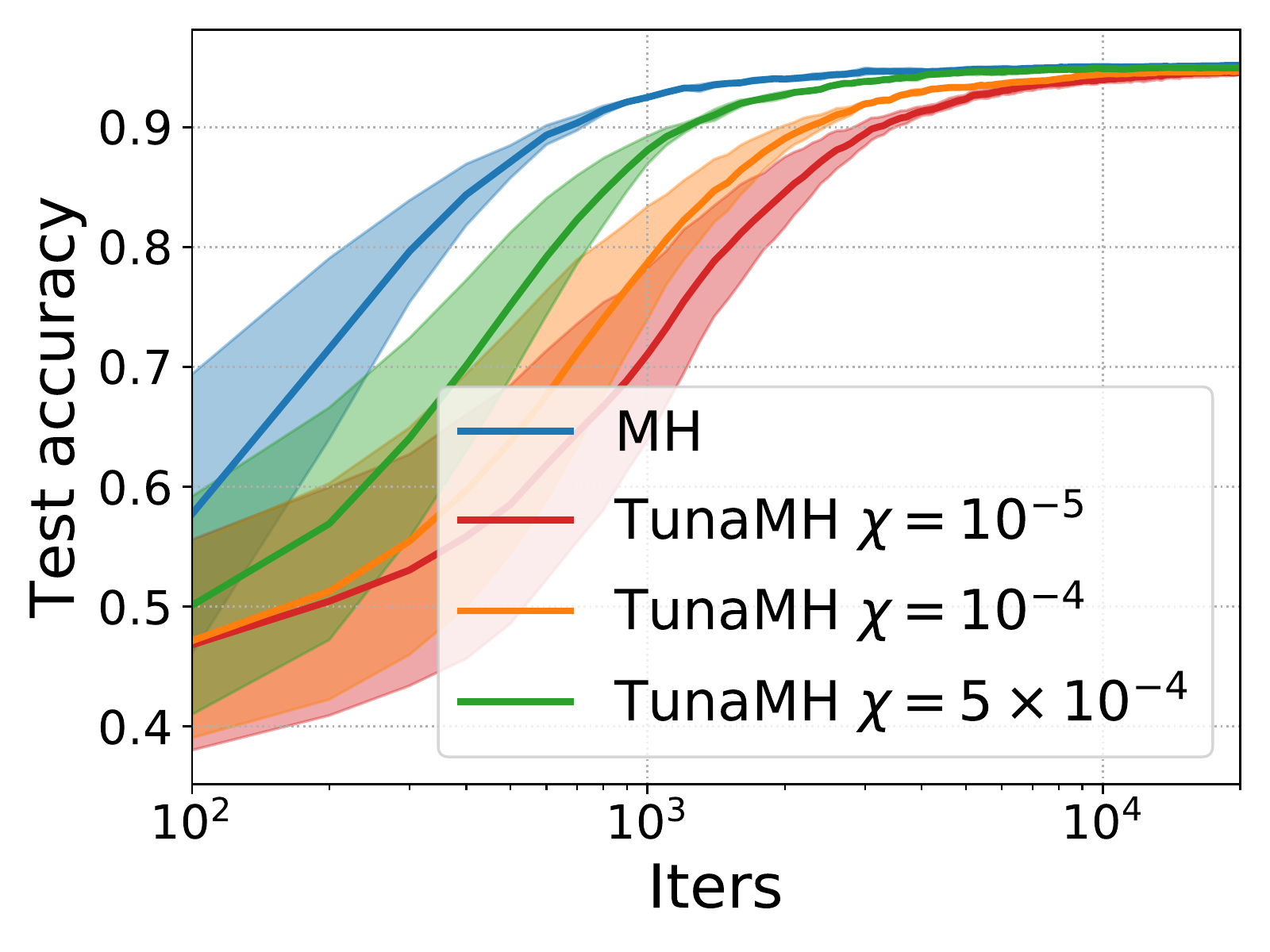} &
    	\includegraphics[width=4.2cm]{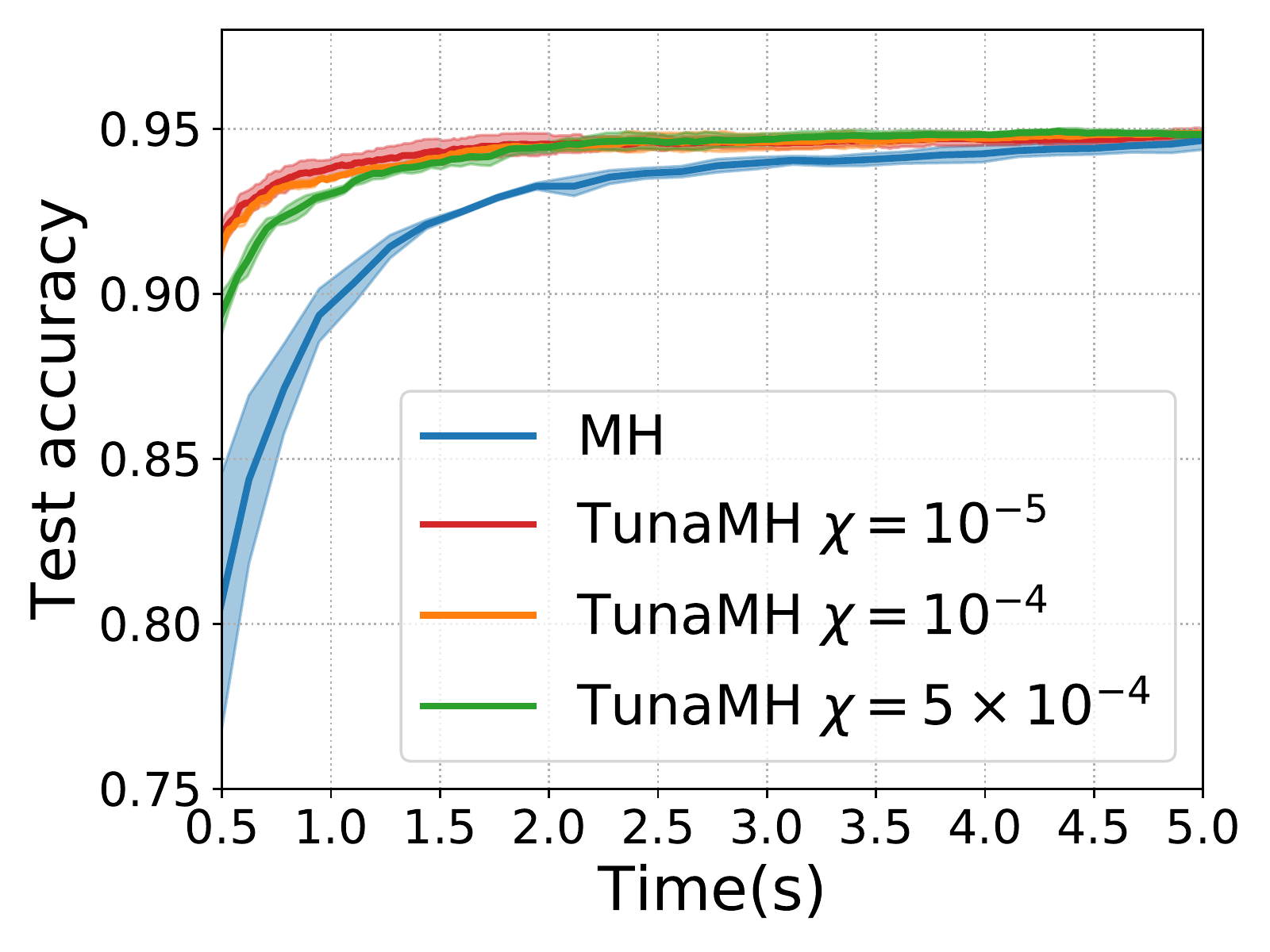}
    	\\		
    	(a)  &
    	(b) &
    	(c) 
    	\hspace{-0mm}\\		
    \end{tabular}
    \caption{MNIST logistic regression. (a) Test accuracy comparison. (b)-(c) \methodname{}'s test accuracy for various $\chi$. Batch size for $\chi=10^{-5}, 10^{-4}, 5\times 10^{-4}$ is 504.07, 810.35 and 2047.91 respectively.}
    \label{fig:logistic}
\end{figure*}

\section{Conclusion and Future Work}
After demonstrating that inexact methods can lead to arbitrarily incorrect inference, we focus our work in this paper on exact minibatch MH methods. We propose a new exact method, \methodname{}, which lets users trade off between batch size and guaranteed convergence rate---between scalability and efficiency. We prove a lower bound on the batch size that any minibatch MH method must use to maintain exactness and convergence rate, and show \methodname{} is asymptotically optimal. Our experiments validate these results, demonstrating that \methodname{} outperforms state-of-the-art exact methods, particularly on high-dimensional and multimodal distributions.

To guide our analysis, we formalized a class of stateless, energy-difference-based minibatch MH methods, to which most prior methods belong. While \methodname{} is asymptotically optimal for this class, future work could develop new exact methods that are better by a constant factor or on some restricted class of distributions. It would also be interesting to develop effective theoretical tools for analyzing stateful methods, since these methods could potentially bypass our lower bound.

\section*{Broader Impact}

Our work shines a light on how to scale MCMC methods responsibly. We make the case that inexact minibatch MH methods can lead to egregious errors in inference, which suggests that---particularly for high-impact applications \cite{gelman2007racialbias, pierson2018fasttt}---we should avoid their use. We provide an alternative: a minibatch MH method that guarantees correctness, while also maintaining an optimal balance between efficiency and scalability, enabling its safe use on large-scale applications. 

\section*{Acknowledgements}

This work was supported by a gift from SambaNova Systems, Inc. and funding from Adrian Sampson. We thank Jerry Chee, Yingzhen Li, and Wing Wong for helpful feedback on the manuscript. 

\bibliography{references}
\bibliographystyle{plainnat}

\appendix
\newpage



\section{Proof of Theorem \ref{statement:counterexample}}\label{app:proof:counterexample}

In this section, we prove Theorem \ref{statement:counterexample}, which asserts that any inexact stateless MH algorithm can produce arbitrarily large bias between its target distribution (the distribution we are trying to sample from) and its stationary distribution (the distribution that the chain actually produces samples from asymptotically). 
\begin{proof}
Let $\mathcal{A}$ denote the \texttt{SubsMH} in Algorithm~\ref{alg:subsampledMH} of the minibatch MH method in question.
Since $\mathcal{A}$ is inexact, there must exist a state space $\Theta$, proposal distribution $q$, and target distribution $\mu$, satisfying Assumption~\ref{assump} with parameters $c_1, \ldots, c_N, C, M$, where
\[
    \mu(\theta) \propto \exp\left( -\sum_{i=1}^N V_i(\theta) \right)
\]
for some $N$ and energy functions $V_1, \ldots, V_N$, such that $\mathcal{A}$ run on $\mu$ with proposal distribution $q$ does not have stationary distribution $\mu$.

Next, let $a_{\mu}(\theta, \theta')$ denote the acceptance probability of algorithm $\mathcal{A}$ on the above task for a proposed transition from $\theta$ to $\theta'$.
Assume by way of contradiction that on this problem, it is always true that
\[
    \frac{a_{\mu}(\theta, \theta')}{a_{\mu}(\theta', \theta)}
    =
    \frac{
        \mu(\theta') q(\theta|\theta')
    }{
        \mu(\theta) q(\theta'|\theta)
    }.
\]
If this were true, then the overall transition probability of this chain, for $\theta \ne \theta'$, would be
\[
    T_{\mu}(\theta, \theta') = q(\theta'|\theta) \cdot a_{\mu}(\theta, \theta')
\]
and it would hold that
\[
    \mu(\theta) T_{\mu}(\theta, \theta') = \mu(\theta') T_{\mu}(\theta', \theta).
\]
That is, the chain would be reversible, also known as satisfying detailed balance.
But it is a standard result that for any reversible chain, $\mu$ must be a stationary distribution of that chain.
We have now derived a contradiction, which establishes that our assumption is false.
That is, there exists a $\theta, \theta' \in \Theta$ such that
\[
    \frac{a_{\mu}(\theta, \theta')}{a_{\mu}(\theta', \theta)}
    \ne
    \frac{
        \mu(\theta') \cdot q(\theta|\theta')
    }{
        \mu(\theta) \cdot q(\theta'|\theta)
    }.
\]
Explicitly, this means that if we define the function $\Delta V$ such that
\[
    \Delta V(i) = V_i(\theta) - V_i(\theta'),
\]
then for this subsampling problem,
\begin{equation}
    \label{eqnStmt1Proof1}
    \frac{
        \mathbf{E}\left[\mathcal{A}(\Delta V, N, q(\theta|\theta')/q(\theta'|\theta), c_1, \ldots, c_N, C, M(\theta,\theta')) \right]
    }{
        \mathbf{E}\left[\mathcal{A}(-\Delta V, N, q(\theta'|\theta)/q(\theta|\theta'), c_1, \ldots, c_N, C, M(\theta,\theta')) \right]
    }
    \ne
    \frac{
        \mu(\theta') \cdot q(\theta|\theta')
    }{
        \mu(\theta) \cdot q(\theta'|\theta)
    }.
\end{equation}
Without loss of generality, assume that
\[
    q(\theta|\theta')/q(\theta'|\theta) \le 1.
\]
(This is without loss of generality since we can ensure it is the case by swapping $\theta$ and $\theta'$.) We fixed $\theta$ and $\theta'$ to be the pair satisfying Equation~\ref{eqnStmt1Proof1} throughout this section.

\paragraph{Constructing an example.} We use this to prove the theorem by a constructive example. Let $x_1, \ldots, x_N$ be defined by
\[
    x_i = \Delta V(i) = V_i(\theta) - V_i(\theta').
\]
Define $X$ as the sum
\[
    X = \sum_{i=1}^N x_i.
\]
For some parameter $K \in \mathbb{N}$ (to be defined later), consider the state space $\Omega$ defined as
\[
    \Omega = \{(k,z) \mid k \in \{0, \ldots, K-1\}, \; 0 \le z \le \exp(k X) \},
\]
using the natural measure for a finite disjoint union of measure spaces.
Define a target distribution over $\Omega$ given by the density
\[
  \pi(k,z) \propto \exp\left(-\sum_{i=1}^N k \cdot x_i\right),
\]
or equivalently
\[
    \pi(k,z) \propto \exp\left(-\sum_{i=1}^N U_i(k,z) \right) \; \text{where} \; U_i(k,z) = k x_i.
\]
Define a proposal distribution $\hat q$, such that, starting from $(k,z)$:
\begin{itemize}
    \item With probability $1/4$, we sample $z'$ uniformly from $[0, \exp(k X)]$ and propose a transition to $(k,z')$.
    \item With probability $1/4$, we propose a transition to $(k-1,z)$, if it is in $\Omega$.
    \item With probability $\frac{1}{4} \cdot \frac{q(\theta|\theta')}{q(\theta'|\theta)}$, we propose a transition to $(k+1,z)$, if it is in $\Omega$.
    \item With the remaining probability, we just propose to stay at $(k,z)$.
\end{itemize}
This is effectively acting as a random walk over $k$, and our goal will be to show that while the true target distribution $\pi$ has a marginal in $k$ that is the uniform distribution, the minibatch MH method causes the chain's transition to be biased to step more in one direction than another, resulting in a highly biased stationary distribution (where we can make the bias arbitrarily large by setting $K$).

We use the same $c_i$ and $C$ as before, and define a new function $\hat M$ such that
\[
    \hat M((k,z),(k+1,z)) = \hat M((k,z),(k-1,z)) = M(\theta, \theta')
\]
and $\hat M(\cdots) = 0$ for other proposed transitions (we can set $\hat M$ however we want for pairs of states that are never proposed in a transition, since this will not affect the algorithm).
Clearly, this setup satisfies Assumption~\ref{assump}, since the original distribution did.

Now, consider what our minibatch MH method will do when run on this task. There are three cases to consider.

\paragraph{Proposed changes in $z$.} When a proposed change in $z$ is made, the resulting $\Delta U$ will be uniformly $0$, and the probability of the reverse transition will be equal (1/4 in both directions), so the algorithm will be passed the arguments
\[
    \mathcal{A}(0, N, 1, c_1, \ldots, c_N, C, 0).
\]
Since this does not depend at all on $z$ or $k$, this means that the acceptance probability of these transitions will be the same regardless of the state. Call this probability $\alpha_0$.

\paragraph{A proposal to decrease $k$.} When a proposal is made to decrease $k$, the probability of the forward and reverse transitions will be
\[
    \hat q((k-1,z)|(k,z)) = \frac{1}{4}
    \;\text{and}\;
    \hat q((k,z)|(k-1,z)) = \frac{1}{4} \cdot \frac{q(\theta|\theta')}{q(\theta'|\theta)}.
\]
It follows that
\[
    \frac{\hat q((k,z)|(k-1,z))}{\hat q((k-1,z)|(k,z))} = \frac{q(\theta|\theta')}{q(\theta'|\theta)}.
\]
The energy function difference for this proposal will be
\[
    \Delta U(i) = U_i((k,z)) - U_i((k-1,z)) = k x_i - (k - 1) x_i = x_i,
\]
so in particular $\Delta U = \Delta V$. And, of course for this transition $\hat M$ will take on the value $M(\theta, \theta')$.
So, the minibatch MH algorithm will be passed the arguments
\[
    \mathcal{A}(\Delta V, N, q(\theta|\theta')/q(\theta'|\theta), c_1, \ldots, c_N, C, M(\theta, \theta')),
\]
and so it will accept with probability
\[
    \mathbf{E}\left[ \mathcal{A}(\Delta V, N, q(\theta|\theta')/q(\theta'|\theta), c_1, \ldots, c_N, C, M(\theta, \theta')) \right].
\]
Call this probability $\alpha_-$.

\paragraph{A proposal to increase $k$.} When a proposal is made to increase $k$, the probability of the forward and reverse transitions will be
\[
    \hat q((k+1,z)|(k,z)) = \frac{1}{4} \cdot \frac{q(\theta|\theta')}{q(\theta'|\theta)}.
    \;\text{and}\;
    \hat q((k,z)|(k+1,z)) = \frac{1}{4}.
\]
It follows that
\[
    \frac{\hat q((k,z)|(k+1,z))}{\hat q((k+1,z)|(k,z))} = \frac{q(\theta'|\theta)}{q(\theta|\theta')}.
\]
The energy function difference for this proposal will be
\[
    \Delta U(i) = U_i((k,z)) - U_i((k+1,z)) = k x_i - (k + 1) x_i = -x_i,
\]
so in particular $\Delta U = -\Delta V$. And, as before for this transition $\hat M$ will take on the value $M(\theta, \theta')$.
So, the minibatch MH algorithm will be passed the arguments
\[
    \mathcal{A}(-\Delta V, N, q(\theta'|\theta)/q(\theta|\theta'), c_1, \ldots, c_N, C, M(\theta, \theta')),
\]
and so it will accept with probability
\[
    \mathbf{E}\left[ \mathcal{A}(-\Delta V, N, q(\theta'|\theta)/q(\theta|\theta'), c_1, \ldots, c_N, C, M(\theta, \theta')) \right].
\]
Define the probability $\alpha_+$ as
\[
    \alpha_+ = \mathbf{E}\left[ \mathcal{A}(-\Delta V, N, q(\theta'|\theta)/q(\theta|\theta'), c_1, \ldots, c_N, C, M(\theta, \theta')) \right] \cdot \frac{q(\theta|\theta')}{q(\theta'|\theta)}.
\]

\paragraph{The resulting Markov chain.}
From the above analysis, we can conclude that the Markov chain that results from subsampling algorithm $\mathcal{A}$ applied to this method is as follows.
Starting from $(k,z)$, if we let $\hat T$ denote the transition operator of this Markov chain,
\begin{itemize}
    \item With probability $\frac{1}{4} \cdot \alpha_0$, we sample $z'$ uniformly from $[0, \exp(k X)]$ and transition to $(k,z')$.
    \item With probability $\frac{1}{4} \cdot \alpha_-$, we transition to $(k-1,z)$, if it is in $\Omega$.
    \item With probability $\frac{1}{4} \cdot \alpha_+$, we transition to $(k+1,z)$, if it is in $\Omega$.
    \item With the remaining probability, we just stay at $(k,z)$.
\end{itemize}
Consider the distribution
\[
    \nu(k,z) \propto \left( \frac{\alpha_+}{\alpha_-} \right)^k.
\]
It is easy to see that this Markov chain satisfies detailed balance with $\nu$ as its stationary distribution. In particular,
\begin{align*}
        \nu(k,z) \cdot T((k-1,z)|(k,z)) &= \left( \frac{\alpha_+}{\alpha_-} \right)^k \cdot \frac{1}{4} \cdot \alpha_-\\
    &=
    \left( \frac{\alpha_+}{\alpha_-} \right)^{k-1} \cdot \frac{1}{4} \cdot \alpha_+ \\
    &= \nu(k-1,z) \cdot T((k,z)|(k-1,z)). 
\end{align*}

So $\nu$ will be a stationary distribution of the minibatch MH chain $\hat T$.

Observe that the marginal distribution of $k$ in $\pi$ is
\[
    \pi(k) = \int_{0}^{\exp(kX)} \pi(k,z) \; dz \propto \exp\left(-\sum_{i=1}^N k \cdot x_i\right) \cdot \exp(kX) = 1,
\]
so the marginal distribution of $k$ in the target distribution is actually the uniform distribution.
On the other hand, using the same derivation, the marginal distribution of $k$ in $\nu$ is
\[
    \nu(k) \propto \left( \frac{\alpha_+}{\alpha_-} \right)^k \cdot \exp(kX) = \left( \frac{\alpha_+}{\alpha_-} \cdot \exp(X) \right)^k.
\]

We know immediately by substituting our definitions of $\alpha_+$ and $\alpha_-$ into (\ref{eqnStmt1Proof1}) that
\[
    \frac{
        \alpha_-
    }{
        \alpha_+
    }
    \ne
    \frac{
        \mu(\theta')
    }{
        \mu(\theta)
    }
    =
    \exp\left( \sum_{i=1}^N (V_i(\theta) - V_i(\theta') \right)
    =
    \exp\left( \sum_{i=1}^N x_i \right)
    =
    \exp(X).
\]
As a consequence, we know that
\[
    \frac{\alpha_+}{\alpha_-} \cdot \exp(X) \ne 1.
\]
Call this constant
\[
    A = \frac{\alpha_+}{\alpha_-} \cdot \exp(X),
\]
and observe that $A \ne 1$ and that $A$ is independent of our choice of $K$ (which still remains unset).
This gives
\[
    \nu(k) \propto A^k.
\]
Explicitly, this distribution will be
\[
    \nu(k) = \frac{1}{\sum_{k=0}^{K-1} A^{k}}\cdot A^{k} = \frac{1-A}{1 - A^K} \cdot A^k.
\]
Since the total variation distance between two probability measures is lower bounded by the TV-distance between their marginal distributions in any one variable, and similarly the KL divergence is \emph{also} lower bounded by the KL divergence between its marginal distributions in any one variable (both these facts follow directly from the monotonicity property of the $f$-divergence, of which the KL-divergence and TV-distance are both instances), to prove this theorem it suffices to show both TV-distance and KL-divergence bounds on the marginal distributions in $k$. We do this now.

\paragraph{Bounding the total variation distance.}

Now, we compute the total variation distance between $\pi$ and $\nu$. For this bit of the proof, we will just consider the marginal distribution in $k$, as this provides a lower bound on the TV distance between the joint distribution. For simplicity, for the rest of the proof, we let $\tilde \pi$ denote this marginal distribution of $k$ in $\nu$, and also let $\pi$ denote the marginal distribution of $k$ in $\pi$.
By the definition of total variation distance,
\begin{align*}
\text{TV}(\pi, \tilde \pi) 
&= \frac{1}{2}\sum_{k=0}^{K-1}\left| \tilde \pi(k) - \pi(k) \right| \\
&= \frac{1}{2}\sum_{k=0}^{K-1}\left|\frac{1-A}{1 - A^{K}}\cdot A^{k} - \frac{1}{K}\right|.
\end{align*}
If $A<1$,
\begin{align}\label{eq:tv}
\text{TV}(\pi, \tilde{\pi})
& =
\sum_{k=0}^{K_0}\left(\frac{1-A}{1 - A^{K}}\cdot A^{k} - \frac{1}{K}\right)\nonumber\\
& =
\frac{1-A^{K_0}}{1 - A^{K}} - \frac{K_0}{K}
\end{align}
where $K_0$ is the largest $k$ such that
\[
\frac{1-A}{1 - A^{K}}\cdot A^{k} > \frac{1}{K}.
\]
By solving the above equation, we have
\[
K_0 = \left\lfloor \frac{\log(1-A^{K}) - \log(1-A) - \log(K)}{\log(A)}\right\rfloor.
\]

We can lower bound $K_0$ by
\begin{align*}
    K_0&\ge \frac{\log(1-A^{K}) - \log(1-A) - \log(K)}{\log(A)} - 1\\
    &\ge \frac{- \log(1-A) - \log(K)}{\log(A)} - 1.
\end{align*}

It follows that the first term in (\ref{eq:tv}) becomes
\begin{align*}
    \frac{1-A^{K_0}}{1 - A^{K}}\ge
    \frac{1-\frac{1}{KA(1-A)}}{1 - A^{K}}
    \ge 1-\frac{1}{KA(1-A)}.
\end{align*}
We can also upper bound $K_0$ and then the second term can be bounded as the following
\begin{align*}
\frac{K_0}{K}
& \le
\frac{\log(1-A^{K}) - \log(K)}{K\log(A)}.
\end{align*}
When $K\ge \frac{\log\left(1 - \exp\left(-\frac{1}{2}\right)\right)}{\log(A)}$, we have $\log(1-A^{K})\ge -\frac{1}{2}$. Since $\log(K)\le K^{\frac{1}{2}}$ and $K^{-1} \le K^{-\frac{1}{2}}$, we have
\begin{align*}
\frac{K_0}{K}
& \le
\frac{-\frac{1}{2}K^{-1} - K^{-\frac{1}{2}}}{\log(A)}
\le
-\left(\frac{3}{2\log(A)}\right) K^{-\frac{1}{2}}.
\end{align*}
Therefore, the TV distance is bounded by
\begin{align*}
\text{TV}(\pi, \tilde{\pi})
& \ge
1-\frac{1}{KA(1-A)} + \left(\frac{3}{2\log(A)}\right) K^{-\frac{1}{2}}\\
&\ge 
1 + \left(\frac{3}{2\log(A)} -\frac{1}{A(1-A)}\right) K^{-\frac{1}{2}}.
\end{align*}

To make $\text{TV}(\pi, \tilde{\pi})\ge \delta$, we just need to set
\[
K \ge \frac{\left(\frac{3}{2\log(A)} -\frac{1}{A(1-A)}\right)^2}{(1 - \delta)^2}.
\]
Similarly, if $A>1$, 
\begin{align*}
\text{TV}(\pi, \tilde{\pi})
& =
\sum_{k=K_0}^{K-1}\left(\frac{1-A}{1 - A^{K}}\cdot A^{k} - \frac{1}{K}\right)\\
& =
\frac{A^{K} - A^{K_0}}{A^{K} - 1} - \frac{K - K_0}{K}\\
& = 
\frac{K_0}{K} - \frac{A^{K_0}-1}{A^{K} - 1}
\end{align*}
where
\[
K_0 = \left\lceil \frac{\log(A^{K} - 1) - \log(A - 1) - \log(K)}{\log(A)} \right\rceil
\]
which is the smallest $k$ such that
\[
\frac{1-A}{1 - A^{K}}\cdot A^{k} > \frac{1}{K}.
\]
We can get an upper bound of $K_0$ by
\begin{align*}
K_0 &\le \frac{\log(A^{K} - 1) - \log(A - 1) - \log(K)}{\log(A)} + 1\\
& = 
\log_A \left(\frac{A^{K} - 1}{K(A-1)}\right) + 1.
\end{align*}
Therefore,
\begin{align*}
\frac{A^{K_0}-1}{A^{K} - 1} 
&\le 
\frac{A\cdot\left(\frac{A^{K} - 1}{K(A-1)}\right) - 1}{A^{K}-1}\\
& = 
\frac{A}{K(A-1)} - \frac{1}{A^{K} - 1}.
\end{align*}

We can lower bound $K_0$ by
\begin{align*}
K_0
& \ge 
\log_A \left(A^{K}-1\right) - \log_A(A-1) - \log_A(K).
\end{align*}
When $K\ge 1 - \log_A (A-1)$, $A^{K}-1\ge A^{K-1}$. Then we have
\begin{align*}
K_0
& \ge 
\log_A \left(A^{K-1}\right) - \log_A(A-1) - \log_A(K)\\
& = K - 1 - \log_A(A-1) - \log_A(K).
\end{align*}
It follows that
\begin{align*}
\frac{K_0}{K}
& \ge 
1 - \frac{1}{K} - \frac{\log_A(A-1)}{K} - \frac{\log_A(K)}{K}.
\end{align*}
Since $\log(K)\le K^{\frac{1}{2}}$ and $K^{-1}\le K^{-\frac{1}{2}}$, the TV distance can be bounded by
\begin{align*}
\text{TV}(\pi, \tilde{\pi})
&\ge
1 - \frac{1}{K} - \frac{\log_A(A-1)}{K} - \frac{\log_A(K)}{K} - \frac{A}{K(A-1)} + \frac{1}{A^{K} - 1}\\
& \ge
1 - \left(1+\log_A(A-1)+\frac{1}{\log(A)}+\frac{A}{A-1}\right)K^{-\frac{1}{2}}.
\end{align*}
To make $\text{TV}(\pi, \tilde{\pi})\ge \delta$, we just need 
\[
K \ge \left(\frac{1+\log_A(A-1)+\frac{1}{\log(A)}+\frac{A}{A-1}}{1 - \delta}\right)^2.
\]
Since we could set $K$ arbitrarily, it is clear that we can do this.

\paragraph{Bounding the KL divergence.}

We can compute \text{KL} divergence between $\pi$ and $\tilde{\pi}$ as follows
\begin{align*}
    \text{KL}(\pi, \tilde{\pi}) &= \sum_{k=0}^{K-1} \frac{1}{K} \cdot \log \left(\frac{1}{K}\cdot \frac{1 - A^{K}}{(1-A)A^{k}}\right)\\
    &=
    \frac{1}{K} \cdot \sum_{k=0}^{K-1}\bigg[  \log \left(\frac{1}{K}\cdot \frac{1 - A^{K}}{(1-A)}\right) - k\log(A)\bigg]\\
    &= \log \left(\frac{1 - A^{K}}{K(1-A)}\right) -
    \frac{\log \left(A\right)}{K} \sum_{k=0}^{K-1} k \\
    & = \log \left(\frac{1 - A^{K}}{K(1-A)}\right) -
    \frac{(K-1) \log \left(A\right)}{2}\\
\end{align*}
If $A<1$, we have
\begin{align*}
    \text{KL}(\pi, \tilde{\pi}) 
    & = \log \left(1 - A^{K}\right) - \log ((1-A)K) -
    \frac{K\log \left(A\right)}{2} + \frac{\log \left(A\right)}{2}\\
    & \ge \log \left(1 - A^{K}\right) - 
    \left( \frac{1-A +\log \left(A\right)}{2}\right) K + \frac{\log \left(A\right)}{2}.
\end{align*}
The last equation is because $\log(x)\le \frac{x}{2}$.

To further simplify the above equation, we first note that $1-A +\log \left(A\right)<0$ when $A\neq 1$. And then when $K\ge \log_A\left(1 - A^{\frac{1}{2}}\right)$, we have $1 - A^{K}\ge A^{\frac{1}{2}}$. It follows that we can simplify it to be
\begin{align*}
    \text{KL}(\pi, \tilde{\pi}) 
    & \ge \log \left(A\right) - 
    \left(\frac{1-A +\log \left(A\right)}{2}\right) K.
\end{align*}
To make $\text{KL}(\pi, \tilde{\pi})\ge \rho$, it is clear that we just need to set
\[
K \ge \frac{2(\rho-\log(A))}{A-1-\log(A)}.
\]

Consider when $A>1$,
\begin{align*}
    \text{KL}(\pi, \tilde{\pi}) 
    & = \log \left(\frac{A^{K}-1}{K(A-1)}\right) -
    \frac{(K-1)\log \left(A\right)}{2}.
\end{align*}
If $K\ge \frac{\log(2)}{\log(A)}$, we have that $A^{K} - 1\ge \frac{A^{K}}{2}$. It follows that
\begin{align*}
    \text{KL}(\pi, \tilde{\pi}) 
    & \ge K\log(A) - \log(K) - \log(2A-2) -
    \frac{K\log \left(A\right)}{2}\\
    &=
    \frac{K\log \left(A\right)}{2} - \log(K) - \log(2A-2).
\end{align*}
To make $\text{KL}(\pi, \tilde{\pi})\ge \rho$, we need 
\[
\frac{K\log \left(A\right)}{2} - \log(K) \ge \rho + \log(2A-2).
\]
Let $K=\exp(y)$. By Taylor series, we know $\exp(y) \ge \frac{y^2}{2}$. Then it follows that
\[
\frac{y^2\log \left(A\right)}{4} - y \ge \rho + \log(2A-2).
\]
Solve the above inequality, we can get
\[
y \ge \frac{1+2\cdot \frac{\log \left(A\right)}{4} \cdot\bigg( \rho + \log(2A-2)\bigg)}{2 \cdot \frac{\log \left(A\right)}{4}} = \frac{2 + \log(A)\bigg(\rho+\log(2A-2)\bigg)}{\log(A)}.
\]
It follows that it suffices to set
\[
K\ge \exp\left(\frac{2 + \log(A)\bigg(\rho+\log(2A-2)\bigg)}{\log(A)}\right).
\]

\paragraph{Concluding the proof.}
The theorem now follows from choosing a $K$ large enough that both the TV distance inequality we derived and the KL divergence inequality we derived are satisfied.
\end{proof}

\section{Connection between Theorem~\ref{statement:counterexample} and TV Bound of Inexact MH Methods}

Some inexact methods such as MHSubLhd~\citep{bardenet2014towards} have bounded TV distance between the target distribution and the approximate distribution (see Proposition 3.2 in \citet{bardenet2014towards}). We would like to emphasize that 
Theorem~\ref{statement:counterexample} is compatible with these results. Specifically, Proposition 3.2 assumes $P_{\text{MH}}$ has a bounded mixing time. It is well known that this produces a TV bound for any kernel by coupling~\citep{levin2017markov}. Our theorem does not have this assumption; it suggests that for MHSubLhd, with a given user-specified error, there exists a target distribution and proposal satisfying Theorem~\ref{statement:counterexample}, on which $P_{\text{MH}}$ either does not have bounded mixing time or the mixing time is large enough such that the TV bound is greater than $\delta$.

\section{Proof of Statement \ref{statement:fmh}} \label{app:proof:smh}
\begin{proof}
We prove this by construction. Consider a dataset $\{x_i\}_{i=1}^N$. The data instances can take two values $\{-\frac{M}{N}, \frac{M}{N}\}$ where $M$ is a positive constant. Assume that half of the data instances take value $\frac{M}{N}$ and the remaining take $-\frac{M}{N}$. Let the target distribution be $\pi(\theta) = \frac{1}{Z}\exp\left(\theta\cdot\sum_{i=1}^N x_i\right)$ and the domain for $\theta$ be $\{0,1,\dots,K-1\}$. We define the proposal distribution to be the following
\[
p(\theta,\theta) = \frac{1}{2},\hspace{1em}\text{for all }\theta;\hspace{1em} p(\theta, \theta-1) = \frac{1}{4},\hspace{1em} p(\theta, \theta+1) = \frac{1}{4} \hspace{1em}\text{for }\theta\in\{1,\dots,K-2\}; 
\]
and $p(0,1)=p(K-1,K-2)=\frac{1}{2}$.

Recall that FMH factorizes the target distribution $\pi(\theta)$ and the proposal distribution $p(\theta)$ as follows
\[
\pi(\theta) \propto \prod^m_{i=1} \pi_i(\theta),\hspace{2em} p(\theta) \propto \prod^m_{i=1} p_i(\theta)
\]
where $m\geq 1$ and $\pi_i$ and $p_i$ are some non-negative functions. Then the acceptance rate is given by
\[
a_{\text{FMH}}(\theta,\theta') = \prod_{i=1}^m \min\left(1, \frac{\pi(\theta')p_i(\theta',\theta)}{\pi(\theta)p_i(\theta,\theta')}\right).
\]
A common choice is to set $m=N$. On this example, we can write the acceptance rate of transitioning from $\theta$ to $\theta'=\theta + 1$ in FMH as follows
\begin{align*}
    a_{\text{FMH}}(\theta,\theta') = \prod_{i=1}^N \min\left(1, \exp(x_i)\right)
    = \bigg(\exp\bigg(-\frac{M}{N}\bigg)\bigg)^{\frac{N}{2}}=\exp\bigg(-\frac{M}{2}\bigg).
\end{align*}
It is easy to show that the acceptance rate of transitioning from $\theta$ to $\theta'=\theta - 1$ in FMH is the same.

When $M>-2\log(p)$, it is clear that the acceptance rate of FMH is less than $p$. By contrast, the acceptance rate of standard MH is 
\[
a_{\text{MH}}(\theta,\theta') = \min\bigg(1, \exp\bigg(\pm\sum_{i=1}^N x_i\bigg)\bigg) = 1.
\]

In order to preserve geometric ergodicity, \citet{cornish2019scalable} introduces \emph{truncated FMH} (TFMH) which forces FMH degrade to standard MH when the energy exceeds a threshold $R$. If we set hyperparameter $R> M/2$, then in each step, the value of $a_{\text{TFMH}}$ will be the same as $a_{\text{FMH}}$. Therefore, if setting $M>-2\log(p)$, we have
\[
\frac{a_{\text{TFMH}}}{a_{\text{MH}}} \le \frac{p}{1} = p.
\]

If we set $R\le M/2$, TFMH falls back to standard, full-batch MH --- using the whole dataset at each step. This proves the statement.
\end{proof}

\section{Construction of Algorithm \ref{alg:poisson-mh}}\label{app:algo-derivation}

Algorithm \ref{alg:poisson-mh} can be derived by carefully replacing the global bounds on the energy in PoissonMH~\cite{zhang2019poisson} with local bounds on the energy differences (Assumption \ref{assump}). PoissonMH is a variant of Poisson Gibbs and therefore inherits the same assumptions for Gibbs sampling on graphical models, which are often violated in the applications of MH. In particular, PoissonMH works on \emph{factor graphs} which define a distribution $\pi(\theta)$ over a set of factors $\{\phi_i(\theta)\}_{i=1}^{N}$ as follows
\[
\pi(\theta) \propto \exp\left(\sum_{i=1}^N \phi_i(\theta)\right).
\]

PoissonMH assumes that each factor $\phi_i$ is non-negative without the loss of generality (we can add a positive constant to $\phi_i$ to make it non-negative without changing the distribution) and is bounded globally by a constant $M_i$. That is
\begin{align*}
   0 \le \phi_i(\theta) \le M_i \text{ for all } \theta.
\end{align*}
This assumption does not hold for most applications of MH, such as the linear and logistic regression experiments in Section \ref{sec:exp}. 

\begin{algorithm}[t]
  \caption{PoissonMH}
  \begin{algorithmic}
    \label{alg:old-poissonmh}
    \STATE \textbf{given:} initial state $\theta \in \Theta$; proposal dist. $q$; hyperparameter $\lambda$; Global bounds $M_i$, $L$
    \LOOP
      \STATE \textbf{propose} $\theta'\sim q(\cdot|\theta)$
      \FOR{$i \in \{1,\ldots,N\}$}
        \STATE \textbf{sample} $s_i \sim \text{Poisson}\left(\frac{\lambda M_i}{L} + \phi_i(\theta)\right)$ 
      \ENDFOR
      \STATE \textbf{form minibatch} $\mathcal{S}\leftarrow \{i | s_i>0\}$
      \STATE \textbf{compute MH ratio} $r \leftarrow \frac{\exp\left(\sum_{i\in \mathcal{S}} s_i\log\left( 1 + \frac{L}{\lambda M_{i}}\phi_i(\theta') \right)\right)q(\theta'|\theta)}{\exp\left(\sum_{i\in \mathcal{S}} s_i\log\left( 1 + \frac{L}{\lambda M_{i}}\phi_i(\theta) \right)\right)q(\theta|\theta')}$
      \STATE \textbf{with probability} $\min(1,r)$, set $\theta \leftarrow \theta'$
    \ENDLOOP
  \end{algorithmic}
\end{algorithm}

Let $L = \sum_i M_i$ and define Poisson auxiliary variable $s_i$ as the following

\[
s_i|\theta \sim \text{Poisson}\left(\frac{\lambda M_i}{L} + \phi_i(\theta)\right),
\]
where $\lambda>0$ is a hyperparameter. Running standard MH on the joint distribution of $\theta$ and $s_i$ results in the following acceptance ratio
\begin{align*}
    r_{\text{PoissonMH}}(\theta, \theta') = \frac{\exp\left(\sum_{i} s_{i} \log\left( 1 + \frac{L}{\lambda M_{i}}\phi_i(\theta') \right)\right)q(\theta'|\theta)}{\exp\left(\sum_{i} s_{i} \log\left( 1 + \frac{L}{\lambda M_{i}}\phi_i(\theta) \right)\right)q(\theta|\theta')}.
\end{align*}

Here, the sum is essentially performed over the set of index $i$ whose $s_i$ is greater than zero. When $s_i = 0$, it is clear that the factor $\phi_i$ will not appear in the acceptance ratio $r_{\text{PoissonMH}}$. Thus PoissonMH enables using a subset of factors for the MH decision step (Algorithm~\ref{alg:old-poissonmh}).

To construct our method from this, we can define the factor $\phi_i$ in the factor graph to be
\begin{align}\label{eq:phi}
    \phi_i(x) = \frac{U_i(\theta)+U_i(\theta')}{2} - U_i(x) + \frac{c_i}{2} M(\theta,\theta')
\end{align}

where $x\in\{\theta, \theta'\}$. It is easy to see that $\phi_i$ satisfy $0\le \phi_i(x) \le c_i M(\theta,\theta')$. And then we define the Poisson variables $s_i$ as the follows
\[
s_i|(\theta, \theta') \sim \text{Poisson}\left(\frac{\lambda c_{i}}{C} + \phi_i(\theta)\right) = \text{Poisson}\left(\frac{\lambda c_{i}}{C} + \frac{U_i(\theta') - U_i(\theta) + c_iM(\theta,\theta')}{2}\right).
\]

These Poisson auxiliary variables $\{s_i\}_{i=1}^N$ are called \emph{local}, because their distributions change each iteration depending on the current pair $(\theta, \theta')$ and only rely on local bounds in  Assumption \ref{assump}. This is in contrast to the \emph{global} auxiliary variables used in PoissonMH and FlyMC which are used to form a joint distribution with $\theta$ and both require global bounds in their conditional distributions.

The acceptance ratio $r_{\text{\methodname}}$ is the same as $r_{\text{PoissonMH}}$ but with the new definitions of $s_i$ and $\phi_i$. We outline \methodname{} using the notation of $\phi_i$ and $s_i$ in Algorithm \ref{alg:poisson-mh2}.

We now show that Algorithm \ref{alg:poisson-mh2} is statistically equivalent to Algorithm \ref{alg:poisson-mh}. To see this, we first use \emph{thinning}, a commonly used technique \citep{lewis1979simulation, bierkens2019zig,bouchard2018bouncy,cornish2019scalable,zhang2019poisson}, to quickly resample all $s_i$ from their new distributions in each iteration in Algorithm~\ref{alg:poisson-mh2}. This is achieved by replacing the global bounds with the local bounds in Algorithm 4 in the Appendix of \citet{zhang2019poisson}. Specifically, we first sample $B$ from a Poisson distribution
\[
B\sim \text{Poisson}(\lambda + CM(\theta,\theta')).
\]
Here $\lambda + CM(\theta,\theta')$ is an upper bound on $\mathbf{E}[\sum_i s_i]$. We then form the minibatch by running

\begin{figure}[h]
  \centering
  \begin{minipage}{.7\linewidth}
\begin{algorithmic}
\FOR{$b \in \{1,\ldots,B\}$}
        \STATE \textbf{sample} $i_b$ such that $\mathbf{P}(i_b = i) = c_i/C$, for $i=1\ldots N$
        \STATE \textbf{with probability} $\frac{\lambda c_{i_b} + C\phi_{i_b}(\theta)}{\lambda c_{i_b} + Cc_{i_b}M(\theta, \theta')}$ \textbf{add} $i_b$ to $\mathcal{I}$ 
        \ENDFOR
\end{algorithmic}
  \end{minipage}
\end{figure}

By substituting $\lambda=\chi C^2 M^2(\theta,\theta')$ and the expression of $\phi_i$, we can get the part of ``form minibatch $\mathcal{I}$'' in Algorithm~\ref{alg:poisson-mh}. 

To see that the MH ratio in Algorithm~\ref{alg:poisson-mh} and~\ref{alg:poisson-mh2} are equivalent, we can write out $r$ in Algorithm~\ref{alg:poisson-mh2} using the above fast way of resampling $s_i$
\[
r_{\text{\methodname}} = \frac{\exp\left(\sum_{i \in \mathcal{I}} \log\left( 1 + \frac{C}{\lambda c_{i}}\phi_i(\theta') \right)\right)q(\theta'|\theta)}{\exp\left(\sum_{i \in \mathcal{I}} \log\left( 1 + \frac{C}{\lambda c_{i}}\phi_i(\theta) \right)\right)q(\theta|\theta')}.
\]

\begin{algorithm}[t]
  \caption{\methodname{}}
  \begin{algorithmic}
    \label{alg:poisson-mh2}
    \STATE \textbf{given:} initial state $\theta \in \Theta$; proposal dist. $q$; $\lambda$; Asm.~\ref{assump} parameters $c_i$, $C$, $M$; function $\phi_i$ defined in~(\ref{eq:phi})
    \LOOP
      \STATE \textbf{propose} $\theta'\sim q(\cdot|\theta)$ and \textbf{compute} $M(\theta, \theta')$
      \FOR{$i \in \{1,\ldots,N\}$}
        \STATE \textbf{sample} $s_i \sim \text{Poisson}\left(\frac{\lambda c_i}{C} + \phi_i(\theta)\right)$ 
      \ENDFOR
      \STATE \textbf{form minibatch} $\mathcal{S}\leftarrow \{i | s_i>0\}$
      \vspace{0.5em}
      \STATE \textbf{compute MH ratio} $r \leftarrow \frac{\exp\left(\sum_{i \in \mathcal{S}} s_i \log\left( 1 + \frac{C}{\lambda c_{i}}\phi_i(\theta') \right)\right)q(\theta'|\theta)}{\exp\left(\sum_{i \in \mathcal{S}} s_i \log\left( 1 + \frac{C}{\lambda c_{i}}\phi_i(\theta) \right)\right)q(\theta|\theta')}$
      \STATE \textbf{with probability} $\min(1,r)$, set $\theta \leftarrow \theta'$
    \ENDLOOP
  \end{algorithmic}
\end{algorithm}

We then substitute the definition of $\phi_i$ in (\ref{eq:phi}) and it follows that
\begin{align*}
    r_{\text{\methodname}}
    &=
    \exp\bigg(\sum_{i \in \mathcal{I}} \bigg(\log\bigg(\frac{2\lambda c_i + C\left(U_i(\theta) - U_i(\theta') + c_iM(\theta,\theta')\right)}{2\lambda c_i + C\left(U_i(\theta') - U_i(\theta) + c_iM(\theta,\theta')\right)}\bigg)\bigg)\bigg)
        \cdot \frac{q(\theta'|\theta)}{q(\theta|\theta')}.
\end{align*}

We can rearrange the $\log$ term inside $r_{\text{\methodname}}$ as
\begin{align*}
    &\hspace{-2em}\log\left(\frac{2\lambda c_i + C\left(U_i(\theta) - U_i(\theta') + c_iM(\theta,\theta')\right)}{2\lambda c_i + C\left(U_i(\theta') - U_i(\theta) + c_iM(\theta,\theta')\right)}\right)
    \\&=
    \log\left(\frac{2\lambda c_i + C\left(U_i(\theta) - U_i(\theta') \right) + c_i C M(\theta,\theta')}{2\lambda c_i + C\left(U_i(\theta') - U_i(\theta) \right) + c_i C M(\theta,\theta')}\right)
    \\&=
    \log\left(\frac{1 + \frac{C}{2 \lambda c_i + c_i C M(\theta,\theta')} \left(U_i(\theta) - U_i(\theta') \right)}{1 + \frac{C}{2 \lambda c_i + c_i C M(\theta,\theta')} \left(U_i(\theta') - U_i(\theta) \right)}\right)
    \\&=
    2 \operatorname{artanh}\left(
        \frac{C \left(U_i(\theta) - U_i(\theta') \right)}{c_i (2 \lambda + C M(\theta,\theta'))} 
    \right).
\end{align*}
So $r_{\text{\methodname}}$ can be written as
\[
r_{\text{\methodname}} =
        \exp\left(2 \sum_{i \in \mathcal{I}} \operatorname{artanh}\left(
        \frac{C \left(U_i(\theta) - U_i(\theta') \right)}{c_i (2 \lambda + C M(\theta,\theta'))} 
    \right) \right)
        \cdot \frac{q(\theta'|\theta)}{q(\theta|\theta')}.
\]
Finally setting $\lambda$ to be $\chi C^2M^2(\theta,\theta')$ produces the MH ratio in Algorithm \ref{alg:poisson-mh}.

By proving the equivalence of the minibatch and the MH ratio, we show that Algorithm~\ref{alg:poisson-mh} and \ref{alg:poisson-mh2} are statistically equivalent.



\section{Proof of Theorem \ref{thm:spectral-gap}} \label{app:proof:spectral-gap}

In this section, we prove Theorem \ref{thm:spectral-gap}, which asserts that \methodname{} is reversible and has stationary distribution $\pi$, and gives bounds on its spectral gap relative to the spectral gap of the original Metropolis-Hastings algorithm.

\begin{proof}  
For convenience, we prove Theorem \ref{thm:spectral-gap} using Algorithm \ref{alg:poisson-mh2} statement which is statistically equivalent to Algorithm \ref{alg:poisson-mh}.
The transition operator can be written as the following 
\begin{align*}
  &T( \theta,  \theta')\\
  &=
\mathbf{E}\left\{q( \theta'| \theta)\min\left(1, \frac{q( \theta| \theta')\exp\left(  \sum_i  \left[s_{i}\log\left( \frac{\lambda c_i}{C}+ \phi_i(\theta')\right) - \log s_{i}!\right]\right)}
{q( \theta'| \theta)\exp\left(  \sum_i  \left[s_{i}\log\left( \frac{\lambda c_i}{C}+ \phi_i(\theta)\right) - \log s_{i}!\right]\right)}\right)\right\}\\
&=
\mathbf{E}\left\{q( \theta'| \theta)\min\left(1, \frac{q( \theta| \theta')\exp\left(  \sum_i  \left[s_{i}\log\left( \frac{\lambda c_i}{C}+ \phi_i(\theta')\right)\right]\right)}
{q( \theta'| \theta)\exp\left(  \sum_i  \left[s_{i}\log\left( \frac{\lambda c_i}{C}+ \phi_i(\theta)\right) \right]\right)}\right)\right\}\\
&=
\sum_s\left\{q( \theta'| \theta)\min\left(1, \frac{q( \theta| \theta')\exp\left(  \sum_i  \left[s_{i}\log\left( \frac{\lambda c_i}{C}+ \phi_i(\theta')\right)\right]\right)}
{q( \theta'| \theta)\exp\left(  \sum_i  \left[s_{i}\log\left( \frac{\lambda c_i}{C}+ \phi_i(\theta)\right) \right]\right)}\right)\right\}\prod_{i} p(s_{i}| \theta,\theta')\\
&=
\sum_s\left\{q( \theta'| \theta)\min\left(\exp\left(\sum_i  \left[s_{i}\log\left( \frac{\lambda c_i}{C}+ \phi_i(\theta)\right) 
- \phi_i(\theta) -  \frac{\lambda c_i}{C}- \log s_{i}!\right] \right),\right.\right.\\
&\hspace{2em}\left.\left.\frac{q( \theta| \theta')\exp\left(  \sum_i  \left[s_{i}\log\left( \frac{\lambda c_i}{C}+ \phi_i(\theta')\right)\right]\right)}
{q( \theta'| \theta)\exp\left(  \sum_i \phi_i(\theta) +  \frac{\lambda c_i}{C}+ \log s_{i}! \right)}\right)\right\}\\
&=
\sum_s\left\{q( \theta'| \theta)\min\left(\exp\left(\sum_i  \left[s_{i}\log\left( \frac{\lambda c_i}{C}+ \phi_i(\theta)\right) 
- \phi_i(\theta) -  \frac{\lambda c_i}{C}- \log s_{i}!\right] \right),\right.\right.\\
&\hspace{2em}\left.\left.\frac{q( \theta| \theta') }
{q( \theta'| \theta) }\exp\left(\sum_i  \left[s_{i}\log\left( \frac{\lambda c_i}{C}+ \phi_i(\theta')\right) 
- \phi_i(\theta) -  \frac{\lambda c_i}{C}- \log s_{i}!\right] \right)\right)\right\}
\end{align*}

Multiplying $\pi(\theta)$ to both sides produces
\begin{align*}
&\hspace{-2em}\pi(\theta)T( \theta,  \theta')\\
  &=
\frac{1}{Z}\exp\left(-\sum_i U_i(\theta) \right)T( \theta,  \theta')\\
  &=
\frac{1}{Z}\sum_s
\min\Bigg(q( \theta'| \theta) \exp\bigg(\sum_i  \bigg[s_{i}\log\left( \frac{\lambda c_i}{C}+ \phi_i(\theta)\right) \\
&\hspace{3em}- \frac{U_i(\theta)+U_i(\theta')}{2} - \frac{c_i}{2} M(\theta,\theta') -  \frac{\lambda c_i}{C}- \log s_{i}!\bigg] \bigg),\\
&\hspace{2em}q( \theta| \theta') 
\exp\bigg(\sum_i  \bigg[s_{i}\log\left( \frac{\lambda c_i}{C}+ \phi_i(\theta')\right) 
\\&\hspace{3em}- \frac{U_i(\theta)+U_i(\theta')}{2} - \frac{c_i}{2} M(\theta,\theta') -  \frac{\lambda c_i}{C}- \log s_{i}!\bigg] \bigg)\bigg)\Bigg).  
\end{align*}
It is clear that the expression is symmetric in $\theta$ and $\theta'$. Therefore the chain is reversible and its stationary distribution is $\pi(\theta)$. This proves the first part of the theorem.


To prove the second part of the theorem, the bound on the spectral gap, we continue to reduce the transition probability in the previous proof to
\begin{align*}
  &\hspace{-2em}\pi( \theta) T( \theta,  \theta')\\
  &=
\frac{1}{Z}\sum_s
\min\Bigg(q( \theta'| \theta) \exp\bigg(\sum_i  \bigg[s_{i}\log\left( \frac{\lambda c_i}{C}+ \phi_i(\theta)\right) \\&\hspace{3em}- \frac{U_i(\theta)+U_i(\theta')}{2} - \frac{c_i}{2} M(\theta,\theta') - s_i\log\frac{\lambda c_i}{C}\bigg] \bigg),
\\&\hspace{2em}q( \theta| \theta') 
\exp\bigg(\sum_i  \bigg[s_{i}\log\left( \frac{\lambda c_i}{C}+ \phi_i(\theta')\right) 
\\&\hspace{3em}- \frac{U_i(\theta)+U_i(\theta')}{2} - \frac{c_i}{2} M(\theta,\theta') -  s_i\log\frac{\lambda c_i}{C}\bigg] \bigg)\Bigg)
\\&\hspace{2em}\cdot \prod_{i} \frac{1}{s_{i}!}\exp\left(- \frac{\lambda c_{i}}{C}\right)\left( \frac{\lambda c_{i}}{C}\right)^{s_{i}}\\
  &=
\frac{1}{Z}\sum_s
\min\Bigg(q( \theta'| \theta) \exp\bigg(\sum_i  \bigg[s_{i}\log\left(1 + \frac{C}{\lambda c_i} \phi_i(\theta)\right) 
\\&\hspace{3em}- \frac{U_i(\theta)+U_i(\theta')}{2} - \frac{c_i}{2} M(\theta,\theta')\bigg] \bigg),\\
&\hspace{2em}q( \theta| \theta') 
\exp\left(\sum_i  \left[s_{i}\log\left(1 + \frac{C}{\lambda c_i} \phi_i(\theta')\right) - \frac{U_i(\theta)+U_i(\theta')}{2} - \frac{c_i}{2} M(\theta,\theta') \right] \right)\Bigg)
\\&\hspace{2em}\cdot \prod_{i} \frac{1}{s_{i}!}\exp\left(- \frac{\lambda c_{i}}{C}\right)\left( \frac{\lambda c_{i}}{C}\right)^{s_{i}}.
\end{align*}

Note that $s_{i}$ here are non-negative integers that a Poisson variable can take, not variables. So if we let $r_{i} \sim \text{Poisson}\left( \frac{\lambda c_{i}}{C} \right)$ and $r_{i}$ to be all independent, we can write this as
\begin{align*}
  \pi( \theta) T( \theta,  \theta')
  &=
\frac{1}{Z}\mathbf{E}
\min\left(q( \theta'| \theta) \exp\left(\sum_{i} r_{i}\log\left( 1+ \frac{C}{\lambda c_{i}}\phi_i( \theta) \right)\right),\right.\\
&\left.q( \theta| \theta') 
\exp\left(\sum_{i} r_{i}\log\left( 1+ \frac{C}{\lambda c_{i}}\phi_i( \theta')\right)\right)\right) 
\\&\hspace{2em}\cdot \exp\bigg[ -\frac{1}{2}\bigg( \sum_i U_i(\theta) + \sum_i U_i(\theta') + C M(\theta,\theta')\bigg)\bigg].
\end{align*}

Assume $G( \theta,  \theta')$ is the transition operator of standard MH. Consider the ratio
\begin{align*}
&\frac{\pi( \theta) T( \theta,  \theta')}
  {\pi( \theta) G( \theta,  \theta')}\\
&=
\frac{1}{Z}\mathbf{E}
\min\left(q( \theta'| \theta) \exp\left(\sum_{i} r_{i}\log\left( 1+ \frac{C}{\lambda c_{i}}\phi_i( \theta) \right)\right),\right.\\
&\hspace{2em}\left.q( \theta| \theta') 
\exp\left(\sum_{i} r_{i}\log\left( 1+ \frac{C}{\lambda c_{i}}\phi_i( \theta')\right)\right)\right) 
\\&\hspace{2em}\cdot \exp\bigg[ -\frac{1}{2}\bigg( \sum_i U_i(\theta) + \sum_i U_i(\theta') + C M(\theta,\theta')\bigg)\bigg]
\\&\hspace{2em}\cdot\Bigg[1\bigg/\Bigg(\frac{1}{Z}
\min\left(q( \theta'| \theta)\exp\left(-\sum_i U_i( \theta)\right),
q( \theta| \theta')\exp\left(-\sum_i U_i( \theta')\right) \right)\Bigg)\Bigg].
\end{align*}

We know that $\frac{\min(A, B)}{\min(C,D)} = \min\left(\frac{A}{\min(C,D)}, \frac{B}{\min(C,D)}\right) \geq \min\left(\frac{A}{C}, \frac{B}{D}\right)$. The last inequality is due to the fact that $\frac{1}{\min(C,D)}\geq \frac{1}{C}$ and $\frac{1}{\min(C,D)}\geq \frac{1}{D}$.

With this inequality, we can continue simplifying the ratio,
\begin{align*}
  &\hspace{-1em}\frac{\pi( \theta) T( \theta,  \theta')}
  {\pi( \theta) G( \theta,  \theta')}\\
&\geq
\mathbf{E}\Bigg[\min \Bigg(\frac{
\exp\left(\sum_i r_i\log\left( 1+ \frac{C}{\lambda c_{i}}\phi_i( \theta) \right)  \right)}
{
\exp\left(-\sum_i U_i( \theta)\right)},
\frac{
\exp\left(\sum_i r_i\log\left( 1+ \frac{C}{\lambda c_{i}}\phi_i( \theta')\right)  \right)}
{
\exp\left(-\sum_i U_i( \theta')\right)}
\Bigg)\Bigg]
\\&\hspace{2em}\cdot \exp\bigg[ -\frac{1}{2}\bigg( \sum_i U_i(\theta) + \sum_i U_i(\theta') + C M(\theta,\theta')\bigg)\bigg]\\
&=
\mathbf{E}\Bigg[\min \Bigg(
\exp\left(\sum_i\Bigg( r_i\log\left( 1+ \frac{C}{\lambda c_{i}}\phi_i( \theta) \right) - \phi_i( \theta) \Bigg)\right),
\\&\hspace{2em}\exp\left(\sum_i\Bigg( r_i\log\left( 1+ \frac{C}{\lambda c_{i}}\phi_i( \theta')\right) - \phi_i( \theta') \Bigg)\right)
\Bigg)\Bigg]\\
&=
\mathbf{E}\Bigg[\max \Bigg(
\exp\left(\sum_i \Bigg(\phi_i( \theta) - r_i\log\left( 1+ \frac{C}{\lambda c_{i}}\phi_i( \theta) \right)\Bigg)\right),
\\&\hspace{2em}\exp\left(\sum_i \Bigg(\phi_i( \theta') - r_i\log\left( 1+ \frac{C}{\lambda c_{i}}\phi_i( \theta')\right)\Bigg) \right)
\Bigg)^{-1}\Bigg].
\end{align*}

Because $f(x) = \frac{1}{x}$ is a convex function, by Jensen's inequality it follows
\begin{align*}
  \frac{\pi( \theta) T( \theta,  \theta')}
  {\pi( \theta) G( \theta,  \theta')}
&\geq
\mathbf{E}\Bigg[\max \Bigg(
\exp\left(\sum_i \Bigg(\phi_i( \theta) - r_i\log\left( 1+ \frac{C}{\lambda c_{i}} \phi_i( \theta) \right)\Bigg) \right),
\\&\hspace{2em}\exp\left(\sum_i\Bigg( \phi_i( \theta') - r_i\log\left(1+ \frac{C}{\lambda c_{i}}\phi_i( \theta')\right) \Bigg)\right)
\Bigg)\Bigg]^{-1}.
\end{align*}

We use $\max(A,B)\leq (A^p + B^p)^{\frac{1}{p}}$ to remove the $\max$ function.
\begin{align*}
  \frac{\pi( \theta) T( \theta,  \theta')}
  {\pi( \theta) G( \theta,  \theta')}
&\geq
\mathbf{E}\Bigg[\bigg(
\exp \bigg(p\sum_i\Bigg(\phi_i( \theta) - r_i\log\left( 1+ \frac{C}{\lambda c_{i}} \phi_i( \theta) \right)\Bigg)\bigg) +
\\&\hspace{2em}\exp\bigg(p\sum_i\Bigg( \phi_i( \theta') - r_i\log\left(1+ \frac{C}{\lambda c_{i}}\phi_i( \theta')\right)  \Bigg)\bigg)\bigg)^{\frac{1}{p}}\Bigg]^{-1}.
\end{align*}
Since $x^{\frac{1}{p}}$ is concave, by Jensen's inequality
\begin{align*}
  \frac{\pi( \theta) T( \theta,  \theta')}
  {\pi( \theta) G( \theta,  \theta')}
&\geq
\mathbf{E}\Bigg[
\exp \bigg(p\sum_i\Bigg(\phi_i( \theta) - r_i\log\left( 1+ \frac{C}{\lambda c_{i}} \phi_i( \theta) \right)\Bigg)\bigg) +
\\&\hspace{2em}\exp\bigg(p\sum_i\Bigg( \phi_i( \theta') - r_i\log\left(1+ \frac{C}{\lambda c_{i}}\phi_i( \theta')\right)  \Bigg)\bigg)\Bigg]^{-\frac{1}{p}}\\
&=
\Bigg[\prod_i\mathbf{E}
\exp \Bigg(p\phi_i( \theta) - pr_i\log\left( 1+ \frac{C}{\lambda c_{i}} \phi_i( \theta) \right)\Bigg) +
\\&\hspace{2em}\prod_i\mathbf{E}\exp\Bigg( p\phi_i( \theta') - pr_i\log\left(1+ \frac{C}{\lambda c_{i}}\phi_i( \theta')\right)  \Bigg)\Bigg]^{-\frac{1}{p}}.
\end{align*}

$\mathbf{E}\Bigg[
\exp \Bigg( - pr_i\log\left( 1+ \frac{C}{\lambda c_{i}} \phi_i( \theta) \right)\Bigg)\Bigg]$ is the moment generating function of the Poisson random variable $r_i$ evaluated at
\[
t = -p\log\left( 1+ \frac{C}{\lambda c_{i}} \phi_i( \theta) \right).
\]
We know that
\begin{align*}
\mathbf{E}\exp(r_it) 
&= 
\exp\left( \frac{\lambda c_{i}}{C}\left(\exp(t) - 1\right)\right),
\end{align*}
therefore,
\begin{align*}
\mathbf{E}\Bigg[
\exp \Bigg( - pr_i\log\left( 1+ \frac{C}{\lambda c_{i}} \phi_i( \theta) \right)\Bigg)\Bigg]
&= 
\exp\left( \frac{\lambda c_{i}}{C}\left(1+ \frac{C}{\lambda c_{i}} \phi_i( \theta)\right)^{-p} - \frac{\lambda c_{i}}{C}\right).
\end{align*}
Substituting this into the original expression produces
\begin{align*}
  \frac{\pi( \theta) T(\theta, \theta')}
  {\pi( \theta) G(\theta, \theta')}
  &\geq
  \Bigg[\prod_i\exp\left( \frac{\lambda c_{i}}{C}\left(1+ \frac{C}{\lambda c_{i}} \phi_i( \theta)\right)^{-p} - \frac{\lambda c_{i}}{C} + p\phi_i(\theta)\right) \\&\hspace{4em}+ \prod_i\exp\left( \frac{\lambda c_{i}}{C}\left(1+ \frac{C}{\lambda c_{i}} \phi_i( \theta')\right)^{-p} - \frac{\lambda c_{i}}{C} + p\phi_i(\theta')\right)\Bigg]^{-\frac{1}{p}}.
\end{align*}

Considering the term inside $\exp$. Define a function $f(y) = \frac{\lambda c_{i}}{C}\left(1+ \frac{C}{\lambda c_{i}}y\right)^{-p} - \frac{\lambda c_{i}}{C} + py$ for $y\ge 0$. It is clear that $f(0) = 0$. The first derivative is 
\[
f'(y) = p + (-p)\left(1 + \frac{C}{\lambda c_i}y\right)^{-p-1}
\]
which is also 0 at $y=0$. The second and third derivatives are
\begin{align}\label{eq:second-derivative}
f''(y) &= (-p)(-p-1)\frac{C}{\lambda c_i}\left(1 + \frac{C}{\lambda c_i}y\right)^{-p-2}, \\
f'''(y) &= (-p)(-p-1)(-p-2)\left(\frac{C}{\lambda c_i}\right)^2\left(1 + \frac{C}{\lambda c_i}y\right)^{-p-3}.    \label{eq:third-derivative}
\end{align}

By Taylor series, we have
\begin{align*}
    f(y) = f(0) + f'(0)y + \frac{f''(0)}{2!}y^2 + \frac{f'''(v)}{3!}y^3
\end{align*}
where $v$ is between 0 and $y$. By (\ref{eq:third-derivative}), we know that $f'''(v)\le 0$, therefore since $y \ge 0$, we have
\begin{align*}
    f(y) &\le f(0) + f'(0)y + \frac{f''(0)}{2!}y^2\\
    &=\frac{f''(0)}{2!}y^2.
\end{align*}
Substituting $y=\phi_i(\theta)$ produces
\begin{align*}
    f(\phi_i(\theta)) &\le (-p)(-p-1)\frac{C}{\lambda c_i}\phi_i^2(\theta)\\
    &\le (-p)(-p-1)\frac{C}{\lambda c_i}c_i^2M^2(\theta,\theta').
\end{align*}
Similarly, we can get 
\begin{align*}
    f(\phi_i(\theta'))
    &\le p(p+1)\frac{C}{\lambda c_i}c_i^2M^2(\theta,\theta').
\end{align*}
Substituting these to the spectral ratio, we get
\begin{align*}
  \frac{\pi( \theta) T(\theta, \theta')}
  {\pi( \theta) G(\theta, \theta')}
&\geq
\left[2\prod_i\exp\left( p(p+1)\frac{C}{\lambda c_i}c_i^2M^2(\theta,\theta')\right)\right]^{-\frac{1}{p}}\\
&=
\left[2\exp\left(\sum_i p(p+1)\frac{C}{\lambda }c_i M^2(\theta,\theta')\right)\right]^{-\frac{1}{p}}\\
&=
\left[2\exp\left( p(p+1)\frac{C^2}{\lambda } M^2(\theta,\theta')\right)\right]^{-\frac{1}{p}}\\
&=
2^{-\frac{1}{p}}\exp\left(-(p+1)\frac{C^2}{\lambda } M^2(\theta,\theta')\right).
\end{align*}
Now, we maximize the R.H.S. with respect to $p$. Let $E=\frac{C^2}{\lambda } M^2(\theta,\theta')$, then it becomes
\begin{align*}
  2^{-\frac{1}{p}}\exp\left(-(p+1)E\right) &= \exp\left(-E-pE - \frac{1}{p}\log 2\right).  
\end{align*}
The maximum is attained at $p = \sqrt{\frac{\log 2}{E}}$ and the value is 
\begin{align*}
 \exp\left(-E-2\sqrt{E\log 2}\right).
\end{align*}
It follows that
\begin{align*}
  \frac{\pi( \theta) T(\theta, \theta')}
  {\pi( \theta) G(\theta, \theta')}
&\geq
\exp\left(-\frac{C^2}{\lambda } M^2(\theta,\theta')-2\sqrt{\frac{C^2}{\lambda } M^2(\theta,\theta')\log 2}\right). 
\end{align*}
We set $\lambda = \chi C^2M^2(\theta,\theta')$, it becomes 
  \[
      \frac{\pi( \theta) T( \theta,  \theta')}
  {\pi( \theta) G( \theta,  \theta')}
    \ge
    \exp \Bigg(-\frac{1}{\chi} - 2\sqrt{\frac{\log 2}{\chi}} \Bigg).
  \]

We complete the theorem by a Dirichlet form argument. We can write the Dirichlet form $\mathcal{E}(f)$ of a Markov chain with transition operator $G$ as \citep{fukushima2010dirichlet}:
\begin{align*}
\mathcal{E}(f) 
	= 
	\frac{1}{2}\int\int\left[\left(f(\theta)-f(\theta')\right)^2\right]G(\theta,\theta')\pi(\theta)d\theta d\theta'.
\end{align*}
If we let $L^2_0(\pi)$ to be the Hilbert space of functions $f$ such that $f$ has mean zero and is square integrable with respect to probability measure $\pi$. It follows that the spectral gap $\gamma$ of a Markov chain is \citep{aida1998uniform}
\[
\gamma = \inf_{f\in L^2_0(\pi): Var_{\pi}[f] = 1} \mathcal{E}(f).
\]
From this, it is easy to get that
\begin{align*}
\bar\gamma &= \inf_{f\in L^2_0(\pi): Var_{\pi}[f] = 1} \left[\frac{1}{2}\int\int\left[\left(f(\theta)-f(\theta')\right)^2\right]T(\theta,\theta')\pi(\theta)d\theta d\theta'\right]\\
&\geq 
\exp \Bigg(-\frac{1}{\chi} - 2\sqrt{\frac{\log 2}{\chi}} \Bigg)
\cdot \inf_{f\in L^2_0(\pi): Var_{\pi}[f] = 1} \left[\frac{1}{2}\int\int\left[\left(f(\theta)-f(\theta')\right)^2\right] G(\theta,\theta')\pi(\theta)d\theta d\theta'\right]\\
&= \exp \Bigg(-\frac{1}{\chi} - 2\sqrt{\frac{\log 2}{\chi}} \Bigg)\cdot \gamma.
\end{align*}
\end{proof}

\section{Derivation of Equation~(\ref{eq:TunaMHEB})}\label{app:chi-value}

Based on the bound in Theorem~\ref{thm:spectral-gap}, to make sure that the spectral ratio $\bar\gamma/\gamma \ge \kappa$, we can set $\chi$ such that
\[
\exp \Bigg(-\frac{1}{\chi} - 2\sqrt{\frac{\log 2}{\chi}} \Bigg) = \kappa.
\]
Solving the above equation gives us
\[
\chi = \frac{(2\log2-\log \kappa+2\sqrt{\log2(\log2-\log\kappa)})}{\log^2\kappa}\le \frac{4}{(1-\kappa)\log(1/\kappa)}.
\]
Since the spectral gap ratio is monotonically increasing w.r.t. $\chi$, we can instead set $\chi$ to the upper bound
\[
\chi = \frac{4}{(1-\kappa)\log(1/\kappa)}
\]
which guarantees that $\bar\gamma/\gamma \ge \kappa$.

\section{Theoretically Optimal Value of $\chi$}\label{app:optimal-value}

The overall wall-clock time $L$ for a chain to converge can be represented as the number of steps times the wall-clock time $l$ of each step. We then minimize an upper bound of this overall wall-clock time to get the optimal value of $\chi$. 

Consider a lazy Markov chain on a finite state $\Theta$. The \emph{relaxation time} $t_{\text{rel}}$ of a Markov chain is defined to be the inverse of the spectral gap $\gamma$: $t_{\text{rel}} = 1/\gamma$. 
The \emph{mixing time} $t_{\text{mix}}$, i.e. the number of steps required for a chain to converge to within TV distance $\delta$ to the target distribution $\pi$, is bounded by~\citet{levin2017markov}
\[
t_{\text{mix}}\le t_{\text{rel}} \log\left(\frac{1}{\delta\cdot\min_{\theta\in\Theta}\pi(\theta)}\right).
\]

It follows that the overall wall-clock time $L$ is upper bouned by
\[
L = l \cdot t_{\text{mix}}\le l\cdot t_{\text{rel}}\log\left(\frac{1}{\delta\cdot\min_{\theta\in\Theta}\pi(\theta)}\right).
\]

We assume that the expected wall clock time to run a step is proportional to the batch size plus some constant, which measures the cost of computing the proposal. Specifically, We use $\eta$ and $\xi$ to denote the time to get a proposal $\theta'$ and compute a $U_i$ in a step. Then we can write the time of a step $l$ as 
\[
l = B\xi + \eta.
\]

In order to minimize $L$, we can instead minimize its upper bound, which is equivalent to minimize
\begin{align}\label{eq:wall-clock-time}
    l\cdot t_{\text{rel}} = (B\xi + \eta)\cdot \frac{1}{\gamma}.
\end{align}

Recall that for \methodname{}, the average batch size over all steps is 
\[
\mathbf{E}_{(\theta,\theta')\sim \pi(\theta)q(\theta'|\theta)}[\chi C^2M^2(\theta,\theta') + CM(\theta,\theta')],
\]

and the spectral gap $\bar \gamma$ is lower bounded by the spectral gap of standar MH $\gamma$ such that
\[
    \bar{\gamma}
    \ge
    \exp \Bigg(-\frac{1}{\chi} - 2\sqrt{\frac{\log 2}{\chi}} \Bigg)\cdot\gamma.
\]

Substituting the expression of batch size and spectral gap to (\ref{eq:wall-clock-time}) gives
\[
l\cdot t_{\text{rel}} \le \left(\mathbf{E}_{(\theta,\theta')\sim \pi(\theta)q(\theta'|\theta)}[\chi C^2M^2(\theta,\theta') + CM(\theta,\theta')]\xi + \eta\right)\cdot \exp \Bigg(\frac{1}{\chi} + 2\sqrt{\frac{\log 2}{\chi}} \Bigg)\cdot\frac{1}{\gamma}.
\]

To minimize the RHS of the above equation over $\chi$, we let the derivative w.r.t. $\chi$ to be zero and get,
\begin{align*}
    &\hspace{-2em}\xi C^2\mathbf{E}_{(\theta,\theta')\sim \pi(\theta)q(\theta'|\theta)}[M^2(\theta,\theta')] \chi^{-1} + (\xi C\mathbf{E}_{(\theta,\theta')\sim \pi(\theta)q(\theta'|\theta)}[M(\theta,\theta')] + \eta)\chi^{-2} \\&+ \sqrt{\log 2}\xi C^2\mathbf{E}_{(\theta,\theta')\sim \pi(\theta)q(\theta'|\theta)}[M^2(\theta,\theta')]\chi^{-\frac{1}{2}} \\&+ \sqrt{\log 2}(\xi C\mathbf{E}_{(\theta,\theta')\sim \pi(\theta)q(\theta'|\theta)}[M(\theta,\theta')] + \eta)\chi^{-\frac{3}{2}} 
    \\&= \xi C^2\mathbf{E}_{(\theta,\theta')\sim \pi(\theta)q(\theta'|\theta)}[M^2(\theta,\theta')].
\end{align*}

When $\chi$ is small, the LHS is approximately $(\xi C\mathbf{E}_{(\theta,\theta')\sim \pi(\theta)q(\theta'|\theta)}[M(\theta,\theta')] + \eta)\chi^{-2}$ which gives us  
\[
\chi = \sqrt{\frac{\xi C\mathbf{E}_{(\theta,\theta')\sim \pi(\theta)q(\theta'|\theta)}[M(\theta,\theta')] + \eta}{\xi C^2\mathbf{E}_{(\theta,\theta')\sim \pi(\theta)q(\theta'|\theta)}[M^2(\theta,\theta')]}}.
\]

When it is quick to get a proposal ($\eta\approx 0$) and the variance of $M$ is small, we can further simplify it to
\[
\chi = \frac{1}{ \sqrt{C\mathbf{E}_{(\theta,\theta')\sim \pi(\theta)q(\theta'|\theta)}[M(\theta,\theta')]}}.
\]

In practice, we can get the above theoretically optimal value of $\chi$ by empirically estimating the mean and variance of $M(\theta,\theta')$. Note that even if these empirical estimates are accurate, there may exist better $\chi$, since the upper bounds (the mixing time bound and the spectral gap bound) we use to get the optimal value may be loose. We give a simpler heuristic to tune $\chi$ in practice in Section \ref{sec:exp}.

\section{Proof of Theorem \ref{thm:optimality}}\label{app:proof:optimality}

First, we will show the following lemma, which gives half of what we want to have in the theorem.

\begin{lemma}\label{lemma:optimality1}
Considering the same setting as the theorem, the average batch size $B$ of any exact, stateless minibatch MH algorithm at any iteration follows
\[
    \mathbf{E}[B] \ge 2^{-18} \cdot \kappa C^2 M^2(\theta,\theta') - 2^{-4} \cdot \kappa.
\]
\end{lemma}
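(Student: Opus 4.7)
My plan is to reduce the lemma to a hypothesis-testing question by constructing a parametric family of two-state test distributions. Take $\Theta = \{\theta_0, \theta_1\}$, let the proposal $q$ always swap the two states, and define energies $U_i^{(\epsilon)}$ with $U_i^{(\epsilon)}(\theta_0) = 0$ and $U_i^{(\epsilon)}(\theta_1) = \epsilon \cdot c_i/C$. This satisfies Assumption~\ref{assump} with the prescribed $c_i, C, M$ whenever $|\epsilon| \le CM(\theta_0,\theta_1)$, and makes $\sum_i(U_i^{(\epsilon)}(\theta_1) - U_i^{(\epsilon)}(\theta_0)) = \epsilon$, giving a one-dimensional family smoothly indexed by the log ratio of the stationary distribution.

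Next I would convert the spectral gap and exactness hypotheses into pointwise constraints on the algorithm's output. On a two-state chain with symmetric proposal, exactness of any stateless subsampling algorithm forces reversibility (by the same argument used at the start of the proof of Theorem~\ref{statement:counterexample}), which yields $a_\epsilon(\theta_0,\theta_1)/a_\epsilon(\theta_1,\theta_0) = e^{-\epsilon}$; a direct two-by-two eigenvalue computation then identifies $\hat\gamma_\epsilon / \gamma_\epsilon$ with the larger of the two acceptance probabilities. Hence the hypothesis $\hat\gamma \ge \kappa\gamma$ forces $a_\epsilon(\theta_0,\theta_1) \ge \kappa e^{-|\epsilon|}$ uniformly over $\epsilon$, and in particular pins down the slope $\partial_\epsilon a_\epsilon(\theta_0,\theta_1)$ so that its integral over a window of $\epsilon$ values has magnitude of order $\kappa$ times the window width, divided by $CM(\theta_0,\theta_1)$.

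The core step is a data-processing argument. Since the algorithm is stateless and accesses $\Delta U$ only through a random number $B$ of (possibly adaptively chosen) queries $\Delta U(i_1),\ldots,\Delta U(i_B)$, the total variation distance between its accept-decision distributions at $\epsilon$ and $\epsilon'$ is bounded by the total variation distance between the joint observation sequences under the two problems. For our family the per-index value $\Delta U(i)$ differs by a signed shift of magnitude $(\epsilon' - \epsilon)\, c_i / C$, so a chain-rule KL decomposition combined with Pinsker yields a bound of the form $\mathrm{TV}^2 \le \tfrac12 \mathbf{E}[B] \cdot \Delta_{\mathrm{per}}$, where $\Delta_{\mathrm{per}}$ is a worst-case per-query KL of order $(\epsilon' - \epsilon)^2/\bigl(C^2 M^2(\theta_0,\theta_1)\bigr)$. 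Matching this against the lower bound from step two on how far the acceptance probability must move between $\epsilon$ and $\epsilon'$, and optimizing the window width, forces $\mathbf{E}[B] \gtrsim \kappa \cdot C^2 M^2(\theta_0,\theta_1)$; the additive $-2^{-4}\kappa$ slack absorbs the low-order Taylor corrections that arise when the local linearization of $a_\epsilon$ is pushed to the boundary of the admissible window.

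The main obstacle will be handling adaptive queries rigorously: the subroutine can pick $i_{b+1}$ based on prior observations, which breaks a naive factorization of the joint KL. I would address this with the standard conditional chain-rule identity $\mathrm{KL}(P_1 \| P_2) = \sum_b \mathbf{E}_{P_1}\bigl[\mathrm{KL}\bigl(P_1(\cdot|\mathrm{past}_b) \,\|\, P_2(\cdot|\mathrm{past}_b)\bigr)\bigr]$, observing that conditional on the past each term is bounded by the worst-case per-index KL, and then invoking a stopping-time version of Wald's identity to replace the random $B$ with $\mathbf{E}[B]$. Pinning down the precise constants $2^{-18}$ and $2^{-4}$ will amount to carefully tracking the boundary case where $\epsilon$ is near the edge of the admissible window, and verifying that the algorithm gains no advantage from non-uniform choices of the sampling weights on $\{1,\ldots,N\}$ on account of the symmetry built into the construction.
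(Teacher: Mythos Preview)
There is a genuine gap in your core information-theoretic step. In your construction every energy difference is \emph{deterministic}: querying $\Delta U(i)$ returns exactly $-\epsilon\, c_i/C$, and since the subroutine is handed $c_1,\ldots,c_N,C$ explicitly (Algorithm~\ref{alg:subsampledMH}), a single query at any index with $c_i>0$ recovers $\epsilon$ exactly. Hence an exact algorithm on your family can take $B=1$ always and still output the correct MH decision, so no nontrivial lower bound can come from this family. Correspondingly, the ``per-query KL of order $(\epsilon'-\epsilon)^2/(C^2M^2)$'' you invoke is in fact $+\infty$ (KL between two distinct point masses), and the Pinsker/chain-rule bound is vacuous.

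The paper's proof avoids this by making each individual $\Delta U(i)$ a \emph{noisy}, nearly uninformative observation of the total sum. It sets $U_i(\theta)=\tfrac{C}{N}\theta x_i$ with $x_i\in\{-1,1\}$ shuffled uniformly at random, so a query returns a $\{\pm CM/N\}$-valued random variable whose bias encodes $q=\tfrac{1}{N}\sum_i x_i$; in the limit $N\to\infty$ these become i.i.d.\ Bernoulli samples, and distinguishing $q$ from $-q$ provably requires order $q^{-2}$ queries (via a binomial TV bound). The spectral-gap hypothesis is then cashed out not as a pointwise acceptance-probability constraint but by running the two chains for $t\approx 2\kappa^{-1}\log 2$ steps and lower-bounding the TV distance between their laws, which is matched against a coupling of the two shuffled datasets over the first $K$ observations. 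Your high-level skeleton (two-state reduction, exactness $\Rightarrow$ reversibility, data-processing inequality) is the right shape, but to make it go through you must build a family in which a single $\Delta U(i)$ does not determine $\sum_i \Delta U(i)$---randomizing the dataset, not the parameter, is the missing idea.
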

\begin{proof}
We prove the lemma by construction. First, observe that since the state space $\Theta$ has at least two states, we can restrict our attention to just two of those states, by choosing a $\pi$ that has zero mass on any other state in the space and a $q$ that never proposes transitioning out to any of those other states (at which $\pi$ has zero mass). Such a proposal will still be ergodic, so it still satisfies our general assumption that we consider only ergodic chains in this paper. Without loss of generality, suppose that those two states are $\{-\frac{M}{2}, \frac{M}{2}\}$ (this is without loss of generality because we can always just rename the states), and let $C$ denote the constant in the theorem statement and define (with a bit of abuse of notation) the constant $M := M(-\frac{M}{2}, \frac{M}{2})$. By doing this, we can (again without loss of generality) restrict our attention to the case where $\Theta = \{-\frac{M}{2}, \frac{M}{2}\}$.

Next, we construct our counterexample. Let the dataset be $\{x_i\}_{i=1}^N$ where $x_i \in \{-1, 1\}$.
We let the domain for parameter $\theta$ to be $\{-\frac{M}{2}, \frac{M}{2}\}$, and the target distribution to be   
\[
\pi(\theta) = \frac{1}{Z}\exp\left(-\sum_{i=1}^N U_i(\theta)\right) = \frac{1}{Z}\exp\left(-\frac{C\theta}{N}\sum_{i=1}^N x_i\right)
\]
where $U_i(\theta) = \frac{C}{N}\cdot\theta x_i$.
Note that by letting $N$ become large, any minibatch MH algorithm that queries the energy difference oracle some number of times will observe a distribution of energy differences that is arbitrarily close to a sequence of independent identically distributed random variables supported on $\{\pm \frac{CM}{N}\}$.

We define $c_i = \frac{C}{N}$, and the proposal distribution to be
\[
p(\theta,\theta) = \frac{1}{2},\hspace{2em} p(\theta,-\theta) = \frac{1}{2} \hspace{2em}\text{for }\theta \in \bigg\{-\frac{M}{2}, \frac{M}{2}\bigg\}.
\]
Now, let $0 < q < 1$ be some constant, and consider two cases: (1) $\frac{1}{N}\sum_i x_i = q$ and (2) $\frac{1}{N}\sum_i x_i = -q<0$. Suppose that in both cases the $x_i$ are shuffled at random. These two cases will have different stationary distributions,
\[
    \pi_1(\theta) = \frac{1}{Z} \exp\left(-C q \theta\right)
    \hspace{2em}\text{and}\hspace{2em}
    \pi_2(\theta) = \frac{1}{Z} \exp\left(C q \theta\right),
\]
and an exact algorithm must be able to distinguish between them. Therefore by using these cases, we can get a bound on the required batch size needed for the exact MH algorithm to distinguish between them.
First, we observe that the two cases are symmetric, such that if $T_1$ is the transition matrix of the chain in case (1) and $T_2$ is the transition matrix of the chain in case (2), then $T_1(\theta, \theta') = T_2(\theta', \theta)$.
Let $0 < \psi < \frac{1}{2}$ denote the probability that $T_1$ transitions from $\frac{M}{2}$ to $-\frac{M}{2}$.
Then because the MH method is exact and the chain is reversible, the probability of the reverse transition is $\psi \exp(-CMq)$.
So, explicitly, the transition operators will look like
\[
    T_1 = \begin{bmatrix} 1 - \psi & \psi e^{-CMq} \\ \psi & 1 - \psi e^{-CMq} \end{bmatrix}
    \hspace{2em}\text{and}\hspace{2em}
    T_2 = \begin{bmatrix} 1 - \psi e^{-CMq} & \psi \\ \psi e^{-CMq} & 1 - \psi \end{bmatrix}.
\]
The eigenvectors and eigenvalues of this are
\[
    T_1 \pi_1
    =
    \pi_1
    \hspace{2em}\text{and}\hspace{2em}
    T_1 \begin{bmatrix} -1 \\ 1 \end{bmatrix}
    =
    \left(1 - \psi - \psi \exp(-CMq) \right)
    \begin{bmatrix} -1 \\ 1 \end{bmatrix}.
\]
Suppose that we initialize both chains uniformly on $\{-\frac{M}{2}, \frac{M}{2}\}$.
Observe that
\[
    \begin{bmatrix} 1/2 \\ 1/2 \end{bmatrix}
    =
    \begin{bmatrix} \frac{\exp(-CMq)}{1 + \exp(-CMq)} \\ \frac{1}{1 + \exp(-CMq)} \end{bmatrix}
    +
    \frac{1 - \exp(-CMq)}{2 (1 + \exp(-CMq))}
    \cdot
    \begin{bmatrix} 1 \\ -1 \end{bmatrix},
\]
the first vector being $\pi_1$ and the second being a multiple of the other eigenvector.
Equivalently,
\[
    \begin{bmatrix} 1/2 \\ 1/2 \end{bmatrix}
    =
    \pi_1
    +
    \frac{1}{2}
    \tanh\left( \frac{CMq}{2} \right)
    \cdot
    \begin{bmatrix} 1 \\ -1 \end{bmatrix},
\]
and so for any $t$, after $t$ steps of the Markov chain, the distribution will be
\[
    T_1^t
    \begin{bmatrix} 1/2 \\ 1/2 \end{bmatrix}
    =
    \pi_1
    +
    \frac{1}{2}
    \tanh\left( \frac{CMq}{2} \right)
    \cdot
    \left(1 - \psi - \psi \exp(-CMq) \right)^t
    \cdot
    \begin{bmatrix} 1 \\ -1 \end{bmatrix}.
\]
Similarly,
\[
    T_2^t
    \begin{bmatrix} 1/2 \\ 1/2 \end{bmatrix}
    =
    \pi_2
    +
    \frac{1}{2}
    \tanh\left( \frac{CMq}{2} \right)
    \cdot
    \left(1 - \psi - \psi \exp(-CMq) \right)^t
    \cdot
    \begin{bmatrix} -1 \\ 1 \end{bmatrix}.
\]
So, the total variation distance between the state of the chains at time $t$ will be bounded by
\[
    \text{TV}\left( 
    T_1^t
    \begin{bmatrix} 1/2 \\ 1/2 \end{bmatrix},
    T_2^t
    \begin{bmatrix} 1/2 \\ 1/2 \end{bmatrix}
    \right)
    \ge
    \text{TV}\left( \pi_1, \pi_2 \right)
    -
    \tanh\left( \frac{CMq}{2} \right)
    \cdot
    \left(1 - \psi - \psi \exp(-CMq) \right)^t.
\]
Also observe that
\[
    \text{TV}\left( \pi_1, \pi_2 \right)
    =
    \frac{1}{2}
    \left\|
    \begin{bmatrix} \frac{\exp(-CMq)}{1 + \exp(-CMq)} \\ \frac{1}{1 + \exp(-CMq)} \end{bmatrix}
    -
    \begin{bmatrix} \frac{1}{1 + \exp(-CMq)} \\ \frac{\exp(-CMq)}{1 + \exp(-CMq)} \end{bmatrix} \right\|_1
    =
    \frac{1 - \exp(-CMq)}{1 + \exp(-CMq)}
    =
    \tanh\left( \frac{CMq}{2} \right),
\]
so
\[
    \text{TV}\left( 
    T_1^t
    \begin{bmatrix} 1/2 \\ 1/2 \end{bmatrix},
    T_2^t
    \begin{bmatrix} 1/2 \\ 1/2 \end{bmatrix}
    \right)
    \ge
    \tanh\left( \frac{CMq}{2} \right)
    \cdot
    \left(
    1
    -
    \left(1 - \psi - \psi \exp(-CMq) \right)^t
    \right).
\]
Also, since we know that our algorithm is guaranteed to have spectral gap ratio at least $\kappa$ with the original chain, it follows that $\psi \ge \kappa / 2$, and so
\[
    \text{TV}\left( 
    T_1^t
    \begin{bmatrix} 1/2 \\ 1/2 \end{bmatrix},
    T_2^t
    \begin{bmatrix} 1/2 \\ 1/2 \end{bmatrix}
    \right)
    \ge
    \tanh\left( \frac{CMq}{2} \right)
    \cdot
    \left(
    1
    -
    \left(1 - \frac{\kappa}{2} - \frac{\kappa}{2} \exp(-CMq) \right)^t
    \right).
\]

Now, denote the exact minibatch algorithm to be $\mathcal{A}$. As it runs, the algorithm $\mathcal{A}$ will request data examples by querying the energy difference oracle.
Under case (1), we let $y_i$ denote the $i$th sample that $\mathcal{A}$ \emph{would have observed} if it requested $i$ or more samples, and similarly we let $z_i$ denote the analogous sample in case (2).
Fix some constant $t \in \mathbf{N}$ (which we will set later).
We let $K_1$ denote the total number of samples observed by $\mathcal{A}$ across the first $t$ iterations in case (1), and set
\[
\mu = \{y_1, y_2,\dots, y_{K_1}\}.
\]
Similarly, we let $K_2$ denote the number of samples observed by $\mathcal{A}$ across the first $t$ iterations in case~(2), and set
\[
\nu = \{z_1, z_2, \dots, z_{K_2}\}.
\]
Now, we fix some constant $K$ (to be set later), and consider the following coupling between the behavior of $\mathcal{A}$ across its first $t$ iterations in case (1) and in case~(2). First, let all internal randomness of $\mathcal{A}$ and the proposal process under case (1) and (2) be the same, which means that for a given observation of data examples, the algorithm $\mathcal{A}$ will make the same decision, such as whether to require more data examples or not and whether to accept or not.
Second, choose a coupling that minimizes the probability that
\[
    (y_1, y_2, \ldots, y_{K1}) \ne (z_1, z_2, \ldots, z_{K2}).
\]
Such a coupling is guaranteed to exist by the Coupling Lemma, and the probability that these two are not equal will be equal to the total variation distance between their distributions.
Third, assign all the other $y_i$ and $z_i$, for $i > K$, independently according to their distribution.

We are interested in the quantity $p(\mu \ne \nu)$, which bounds the probability that the algorithm may make a different decision in cases (1) and (2).
We can decompose this probability into two terms,
\[
p(\mu \neq \nu) = p(\mu \neq \nu \text{ and } y_j = z_j \text{ for all } j\le K) + p(\mu \neq \nu \text{ and } y_j \neq z_j \text{ for some } j \le K ).
\]
If $\mu \ne \nu$ but $y_j = z_j$ for all $j \le K$, the only way that this is possible is for $K_1 > K$ (and, symmetrically, also $K_2 > K$), since otherwise the algorithms would behave identically.
So,
\begin{equation}\label{eq:prob_neq}
p(\mu \neq \nu) \le p(K_1 > K) + p(y_j \neq z_j \text{ for some } j \le K ).
\end{equation}
By Markov's inequality,
\[
p(\mu \neq \nu) \le \frac{\mathbf{E}[K_1]}{K} + p(y_j \neq z_j \text{ for some } j \le K ).
\]
For the second term of (\ref{eq:prob_neq}), we can reduce the case to only considering $K$ samples.
Let $S_y$ be the total number of samples $y_i$ that are $-1$ and let $S_z$ be the total number of samples $z_i$ that are $-1$.
Since $\mathcal{A}$ is effectively sampling a shuffled dataset at some arbitrary indices without replacement, both of these random variables $S_y$ and $S_z$ are---properly speaking---hypergeometric random variables. However, since our dataset size $N$ is arbitrary here, we can by setting $N$ very large work in the limit (as $N \rightarrow \infty$) in which these variables become binomial (since sampling with replacement and without replacement can be made to have arbitrarily close to the same distribution by making the dataset large).
Observe that (in this limit) $S_y$ follows a binomial distribution $B(K, \frac{1 - q}{2})$ and $S_z$ follows a binomial distribution $B(K, \frac{1 + q}{2})$.
Clearly, if $S_y = S_z$, then we can arrange the coupling so that $(y_1, \ldots, y_K) = (z_1, \ldots, z_K)$.
So, by the Coupling Lemma,
\begin{align*}
    p( y_j\neq z_j \text{ for some } j \le K) = p(S_y \neq S_z) = \text{TV}(S_y, S_z).
\end{align*}

From the analysis in \citet{adell2006exact}, we can bound the total variance distance between these two binomial variables with
\begin{align*}
    \text{TV}(S_y, S_z) \le \sqrt{e} \cdot\frac{\tau}{(1-\tau)^2}
\end{align*}
where $\tau = \sqrt{\frac{K+2}{2}}\cdot q<1$.
Substituting these bounds, we get
\begin{align*}
    p(\mu \neq \nu) &\le \frac{\mathbf{E}[K_1]}{K}
    + \sqrt{e} \cdot\frac{\tau}{(1-\tau)^2}.
\end{align*}
But the probability that $\mu \neq \nu$ must be an upper bound on the probability that the distributions of the chains in case (1) and (2) after $t$ steps are not equal, since if $\mu = \nu$ in the coupling then the two chains are in the same state.
So, using our bound from earlier, we get
\[
    \tanh\left( \frac{CMq}{2} \right)
    \cdot
    \left(
    1
    -
    \left(1 - \frac{1}{2} \kappa - \frac{1}{2} \kappa \exp(-CMq) \right)^t
    \right)
    \le
    \frac{\mathbf{E}[K_1]}{K}
    + \sqrt{e} \cdot\frac{\tau}{(1-\tau)^2}.
\]
Now isolating $\mathbf{E}[K_1]$ gives
\[
    K
    \cdot
    \tanh\left( \frac{CMq}{2} \right)
    \cdot
    \left(
    1
    -
    \left(1 - \frac{1}{2} \kappa - \frac{1}{2} \kappa \exp(-CMq) \right)^t
    \right)
    -
    K \cdot \sqrt{e} \cdot\frac{\tau}{(1-\tau)^2}
    \le
    \mathbf{E}[K_1].
\]
Also, observe that
\begin{align*}
    \left(1 - \frac{1}{2} \kappa - \frac{1}{2} \kappa \exp(-CMq) \right)^t
    \le
    \left(1 - \frac{1}{2} \kappa \right)^t
    \le
    \exp\left(-\frac{\kappa t}{2} \right),
\end{align*}
so
\[
    K
    \cdot
    \tanh\left( \frac{CMq}{2} \right)
    \cdot
    \left(
    1
    -
    \exp\left(-\frac{\kappa t}{2} \right)
    \right)
    -
    K \cdot \sqrt{e} \cdot\frac{\tau}{(1-\tau)^2}
    \le
    \mathbf{E}[K_1].
\]
This gives us the lower bound on $\mathbf{E}[K_1]$ that we are interested in.
Now, it remains to assign $q$, $K$, and $t$.
We start by assigning $t$ such that
\[
    t = \left\lceil 2 \kappa^{-1} \log(2) \right\rceil,
\]
in which case
\[
    \exp\left(-\frac{\kappa t}{2} \right) \le \frac{1}{2}
\]
and so
\[
    K
    \cdot
    \frac{1}{2}
    \cdot
    \tanh\left( \frac{CMq}{2} \right)
    -
    K \cdot \sqrt{e} \cdot\frac{\tau}{(1-\tau)^2}
    \le
    \mathbf{E}[K_1].
\]
Now, we add some simplifying assumptions, which we will validate are true later.
We assume that 
\[
    \tau = \sqrt{\frac{K+2}{2}} \cdot q \le \frac{1}{2};
\]
in this case
\[
    \sqrt{e} \cdot \frac{\tau}{(1-\tau)^2} \cdot K
    \le
    4 \sqrt{e} \cdot \tau
    \le
    5 \sqrt{K+2} \cdot q.
\]
We set $q$ such that
\[
    C M q = 1,
\]
and we assume that $CM$ is large enough that this assignment of $q$ is within range (i.e. $0 < q < 1$).
This gives us
\[
    K
    \cdot
    \frac{1}{2}
    \cdot
    \tanh\left( \frac{1}{2} \right)
    -
    5 K \sqrt{K+2} \cdot \frac{1}{CM}
    \le
    \mathbf{E}[K_1].
\]
Since $\tanh(1/2) > 5/16$, we can simplify this to
\[
    K
    \cdot
    \frac{5}{32}
    -
    5 K \sqrt{K+2} \cdot \frac{1}{CM}
    \le
    \mathbf{E}[K_1].
\]
All that remains is to assign $K$.
We assign $K$ such that
\[
    \sqrt{K+2} \cdot \frac{1}{CM}
    =
    \frac{1}{64}.
\]
In this case, we get
\[
    K = \frac{C^2 M^2}{4096} - 2,
\]
and our bound reduces to
\[
    \left( \frac{C^2 M^2}{4096} - 2 \right)
    \cdot
    \frac{5}{64}
    \le
    \mathbf{E}[K_1].
\]
We can simplify this further to
\[
    2^{-16} \cdot C^2 M^2 - \frac{5}{32}
    \le
    \mathbf{E}[K_1].
\]
Now, this is a bound on the expected number of samples taken across $t$ iterations.
This means that the number of samples taken in any given iteration will be bounded by
\[
    \frac{\mathbf{E}[K_1]}{t}
    \ge
    \frac{
        2^{-16} \cdot C^2 M^2 - \frac{5}{32}
    }{
        2 \kappa^{-1} \log(2) + 1
    }
    =
    \frac{
        2^{-16} \cdot \kappa C^2 M^2 - \frac{5 \kappa}{32}
    }{
        2 \log(2) + \kappa
    }.
\]
A few more loose bounds, leveraging $\kappa < 1$, gives us
\[
    \frac{\mathbf{E}[K_1]}{t}
    \ge
    2^{-18} \cdot \kappa C^2 M^2 - \frac{\kappa}{16}.
\]
This proves the lemma.
\end{proof}

Next, we will show the following lemma, which characterizes what happens when $C M$ is small.

\begin{lemma}
Considering minibatch MH algorithms in the same setting as the theorem, the expected batch size at any iteration must be lower bounded by
\[
    \mathbf{E}[B] \ge \frac{\kappa}{2} \min\left( C M(\theta, \theta'), 1 \right).
\]
\end{lemma}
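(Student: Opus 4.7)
The plan is to construct a two-state example in which the tension between exactness (detailed balance) and the spectral-gap lower bound forces the algorithm to use batch size $\geq 1$ often. As in Lemma~\ref{lemma:optimality1}, restrict attention to $\Theta = \{-M/2, +M/2\}$ with the symmetric proposal $q(\theta, -\theta) = 1/2$, data $x_i \in \{-1, +1\}$, $c_i = C/N$, and $U_i(\theta) = (C/N)\,\theta\, x_i$, so Assumption~\ref{assump} is satisfied tightly. Consider two problem instances: Problem~1 with every $x_i = +1$, whose stationary distribution satisfies $\pi_1(M/2)/\pi_1(-M/2) = e^{-CM}$, and Problem~2 with every $x_i = -1$, whose ratio is $e^{+CM}$.

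The crucial observation, which uses that $\texttt{SubsMH}$ is stateless and energy-difference based, is that when batch size $B = 0$ the algorithm consults no data, so its acceptance decision depends only on $N$, $q(\theta\mid\theta')/q(\theta'\mid\theta) = 1$, and $c_i, C, M$; these are identical across the two problems and both directions of the proposal. Let $p_0 = \Pr(B = 0)$ and let $\beta$ be this common zero-batch acceptance probability. For each problem $j$ and direction $d \in \{\text{up}, \text{down}\}$ write the aggregate acceptance probability as $\alpha_j^{d,\text{agg}} = p_0 \beta + (1-p_0)\alpha_j^{d,\geq 1}$.

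Next derive two inequalities. Detailed balance on Problem~1 gives $\alpha_1^{\text{up,agg}} = e^{-CM}\alpha_1^{\text{down,agg}}$; solving for $\alpha_1^{\text{up},\geq 1}$ and imposing nonnegativity collapses (via $\alpha_1^{\text{down},\geq 1} \leq 1$) to
\[
1 - p_0 \;\geq\; p_0 \beta\,(e^{CM} - 1).
\]
The spectral-gap hypothesis $\hat\gamma \geq \kappa\gamma$, computed on this two-state chain where $\gamma_{\mathrm{MH}} = (1 + e^{-CM})/2$, reduces to $\alpha_1^{\text{down,agg}} \geq \kappa$, i.e.\ $p_0\beta + (1-p_0)\alpha_1^{\text{down},\geq 1} \geq \kappa$.

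Finally split into two cases. If $p_0 \leq 1 - \kappa/2$, then $\mathbf{E}[B] \geq \Pr(B \geq 1) = 1 - p_0 \geq \kappa/2$, which is at least $(\kappa/2)\min(CM, 1)$. Otherwise $p_0 > 1 - \kappa/2$, so $1 - p_0 < \kappa/2$; using $\alpha_1^{\text{down},\geq 1} \leq 1$ in the spectral-gap inequality yields $p_0\beta \geq \kappa - (1 - p_0) > \kappa/2$, and plugging into the detailed-balance inequality gives $1 - p_0 \geq (\kappa/2)(e^{CM} - 1) \geq (\kappa/2)\,CM$. For $CM > 1$ this contradicts $1 - p_0 < \kappa/2$, so this case is infeasible and the first case applies; for $CM \leq 1$ both cases give $\mathbf{E}[B] \geq (\kappa/2)\,CM$. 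Combining yields the claim. The main delicacy is justifying that the $B = 0$ decision really is problem-independent and direction-independent, which is where the stateless, energy-difference-based form of Algorithm~\ref{alg:subsampledMH} and the symmetric proposal are used in an essential way.
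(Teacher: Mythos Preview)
Your proposal is correct and follows essentially the same approach as the paper's proof. Both arguments reduce to a two-state chain with a symmetric proposal, observe that the acceptance decision conditioned on $B=0$ is direction-independent (since \texttt{SubsMH} then sees only $N$, the unit proposal ratio, the $c_i$, $C$, and $M$), and combine exactness (detailed balance) with the spectral-gap hypothesis $\alpha^{\text{down,agg}}\ge\kappa$ to lower-bound $\Pr(B\ge 1)$. The paper packages the algebra slightly more directly---writing $a - a e^{-CM}\le p$ where $p=\Pr(B\ge 1)$ and $a=\alpha^{\text{down,agg}}$, then using $a\ge\kappa$ and $1-e^{-x}\ge\tfrac12\min(x,1)$---whereas you arrive at the same conclusion via a case split on $p_0$; but the substance is identical. (Your Problem~2 is not actually used in the derivation and can be dropped; only the two directions of Problem~1 are needed, matching the paper's ``two cases.'')
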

\begin{proof}
Here, we will prove a lower bound that characterizes the limits of exact stateless minibatch MH algorithms when they use very few examples.
Again, without loss of generality we consider a reduction to the two-state case as we did in the proof of the previous lemma.
Suppose that a exact stateless minibatch MH algorithm with the same forward and backward proposal probabilities (given some $c_1,\ldots,c_N$, $C$, and $M$) requests any energy function examples at all only with probability $p$.
Consider two cases, which have the same $c_1,\ldots,c_N$, $C$ and $M$.
In the first case,
\[
    \sum_{i=1}^n (U_i(\theta) - U_i(\theta')) = C M(\theta, \theta'),
\]
while in the second case,
\[
    \sum_{i=1}^n (U_i(\theta) - U_i(\theta')) = -C M(\theta, \theta').
\]
These are clearly possible by setting $U_i$ to the limits of what is covered by the bounds.
In the first case, the baseline MH method would accept with probability $1$.
In the second case, it will accept with probability $\exp(-C M(\theta,\theta'))$.
Since the stateless MH algorithm is reversible, it must accept in the first case with some probability $a$ and in the second case with probability $a \cdot \exp(-C M(\theta,\theta'))$.
But, the algorithm can only distinguish the two cases if it requests samples, which only happens with probability at most $p$. So,
\[
    a - a \cdot \exp(-C M(\theta,\theta') \le p.
\]
Since we know that it must be the case that $a \ge \kappa$ (from a straightforward analysis of a two-state case), it follows that
\[
    \frac{p}{\kappa} \ge \frac{p}{a} \ge 1 - \exp(-C M(\theta, \theta')) \ge \frac{1}{2} \min\left( C M(\theta, \theta'), 1 \right).
\]
Since $p$ is an obvious lower bound on the expected value of the batch size, it follows that
\[
    \mathbf{E}[B] \ge \frac{\kappa}{2} \min\left( C M(\theta, \theta'), 1 \right).
\]
\end{proof}

To prove Theorem~\ref{thm:optimality} we now combine the results of these two lemmas.
We have
\[
    \mathbf{E}[B] \ge 2^{-18} \cdot \kappa C^2 M^2(\theta,\theta') - 2^{-4} \cdot \kappa.
\]
and
\[
    \mathbf{E}[B] \ge \frac{\kappa}{2} \min\left( C M(\theta, \theta'), 1 \right).
\]
Since these are both lower bounds, we can combine them to get
\begin{align*}
    \mathbf{E}[B] 
    &\ge 
    \max\left(2^{-18} \cdot \kappa C^2 M^2(\theta,\theta') - 2^{-4} \cdot \kappa, \frac{\kappa}{2} \min\left( C M(\theta, \theta'), 1 \right) \right) \\
    &=
    \kappa \cdot \max\left(2^{-18} \cdot C^2 M^2(\theta,\theta') - 2^{-4}, \frac{1}{2} \min\left( C M(\theta, \theta'), 1 \right) \right).
\end{align*}
It is obvious from a simple big-$\mathcal{O}$ analysis here that there exists a global constant $\zeta > 0$ such that
\[
    \mathbf{E}[B] \ge \zeta \cdot \kappa \left( C^2 M^2(\theta, \theta') + C M(\theta, \theta') \right).
\]
This proves the theorem.

\section{Proof of Corollary \ref{col:bound}} \label{app:proof:cor1}
\begin{proof}
Recall that the lower bound on the batch size in each iteration is
\[
\mathbf{E}[B] \ge \zeta \cdot \kappa \left( C^2 M^2(\theta, \theta') + C M(\theta, \theta') \right).
\]
Since $C = \bigTheta(N)$ and $M(\theta,\theta') = \bigTheta(N^{-(h+1)/2})$, the expectation of the batch size follows
\[
\mathbf{E}[B] = \bigTheta(C^2 M^2(\theta, \theta') + C M(\theta, \theta')) = \bigTheta(C M(\theta, \theta')) = \bigTheta(N^{1-h}/2).
\]

When $h = 1$, $\mathbf{E}[B] = \bigTheta(1)$ and when $h=2$, $\mathbf{E}[B] = \bigTheta(1/\sqrt{N})$.
\end{proof}

\section{Experimental Details and Additional Results}\label{app:experiments}
\subsection{Experiment in Section \ref{sec:inexactproblems}}\label{app:experiments:counterexample}
To verify Theorem \ref{statement:counterexample}, we empirically construct a distribution in the form of Section \ref{app:proof:counterexample} such that AustereMH and MHminibatch are biased on. Note that the proof in Section \ref{app:proof:counterexample} shows there must exist such a distribution for any inexact minibatch method but does not tell us how to find one for a specific method. Therefore, in order to find such a distribution, we construct an example and empirically test whether AustereMH and MHminibatch are biased on it. 

We let data $x_i$ take one of two values $\{-1, 5\}$. Consider a dataset of size 6000. We let 5000 data take value $-1$ and the remaining 1000 data take value $5$. Define the target distribution $\pi(\theta)$ to be
\[
\pi(\theta) \propto \exp\left(-\frac{1}{N}\sum_{i=1}^N \theta\cdot x_i\right)
\]
where the domain of $\theta$ is $\{ 0, 1, \dots, K-1\}$. Therefore the number of state is $K$. Since $\sum_i x_i = 0$, it is clear to see that the stationary distribution of $\theta$ is a uniform distribution. We define the proposal distribution to be the following
\[
p(\theta,\theta) = \frac{1}{2},\hspace{1em}\text{for all }\theta;\hspace{1em} p(\theta, \theta-1) = \frac{1}{4},\hspace{1em} p(\theta, \theta+1) = \frac{1}{4} \hspace{1em}\text{for }\theta\in\{1,\dots,K-2\}; 
\] 
and $p(0,1)=p(K-1,K-2)=\frac{1}{2}$.

We set the hyperparameter error $\epsilon$ in AustereMH to be 0.01 and $\delta$ in MHminibatch to be 5, following the setting in their original papers \cite{korattikara2014austerity, seita2016efficient}. We set batch size $m$ in both methods to be 30. We find that AustereMH and MHminibatch are both inexact on this example and the error increases as we increase $K$. Thus we empirically verify the statement in Theorem \ref{statement:counterexample}. 

Besides the density estimate comparison on $K=200$ shown in Figure \ref{fig:counter-example}b, we additionally report the estimate results on other values of $K$ in Figure \ref{app:fig:density}. We see that the results are similar, all showing that \methodname{} and standard MH can give accurate estimate whereas inexact methods are seriously wrong. 

\paragraph{On Robust Linear Regression} We further tested AustereMH on robust linear regression in Section~\ref{sec:rlr} with $N=5000$. We computed the MSE between estimated and true parameters. MH, TunaMH and AustereMH obtained MSE 0.149, 0.15 and 1.19 respectively, indicating inexact method error can be large on typical problems.

\begin{figure*}[t!]
    \centering
    \begin{tabular}{cccc}		
    	\includegraphics[width=4.5cm]{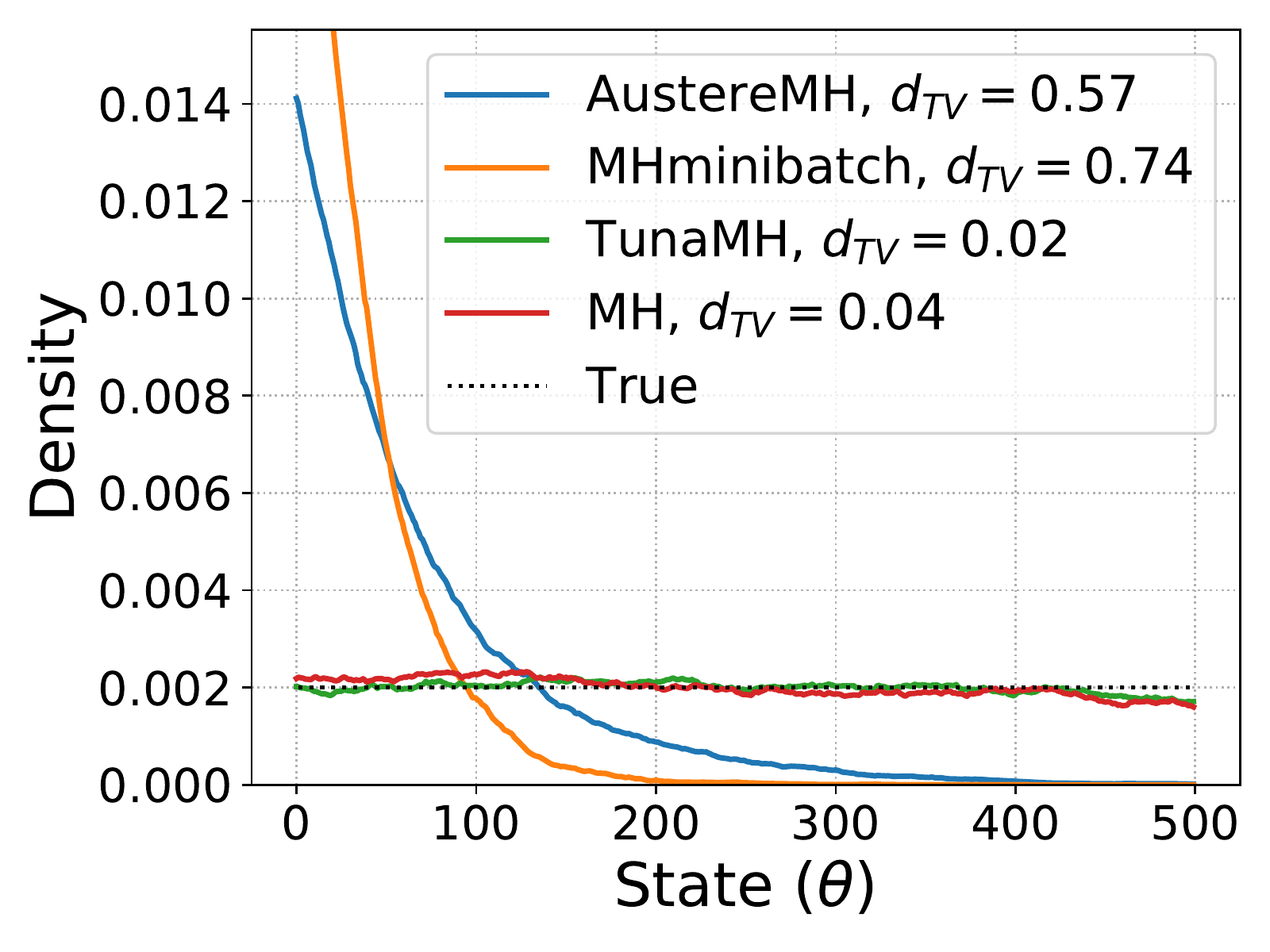}  &
    	\includegraphics[width=4.5cm]{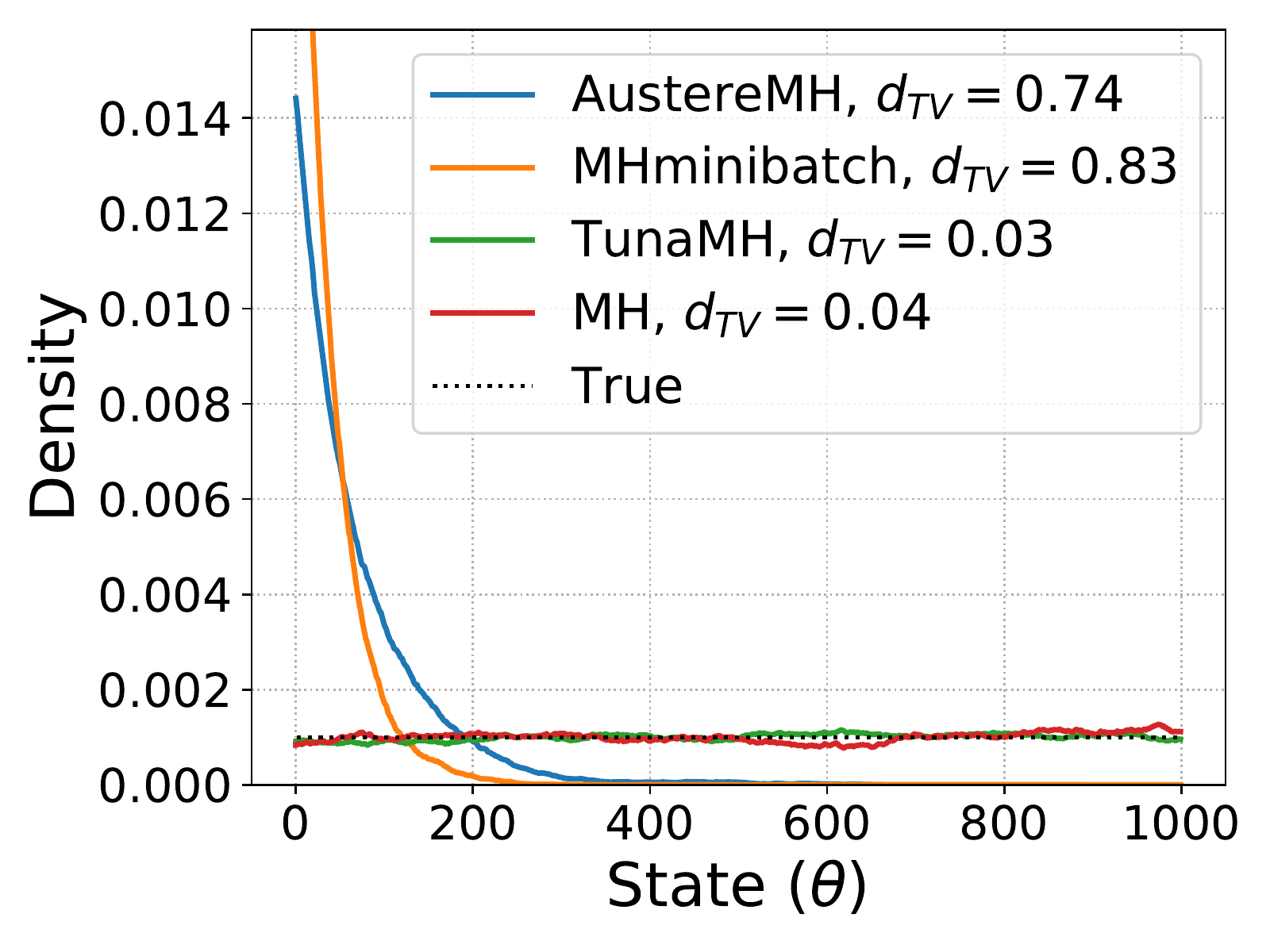}
    	\\		
    	(a) $K=500$&
    	(b) $K=1000$
    	\\
    	\includegraphics[width=4.5cm]{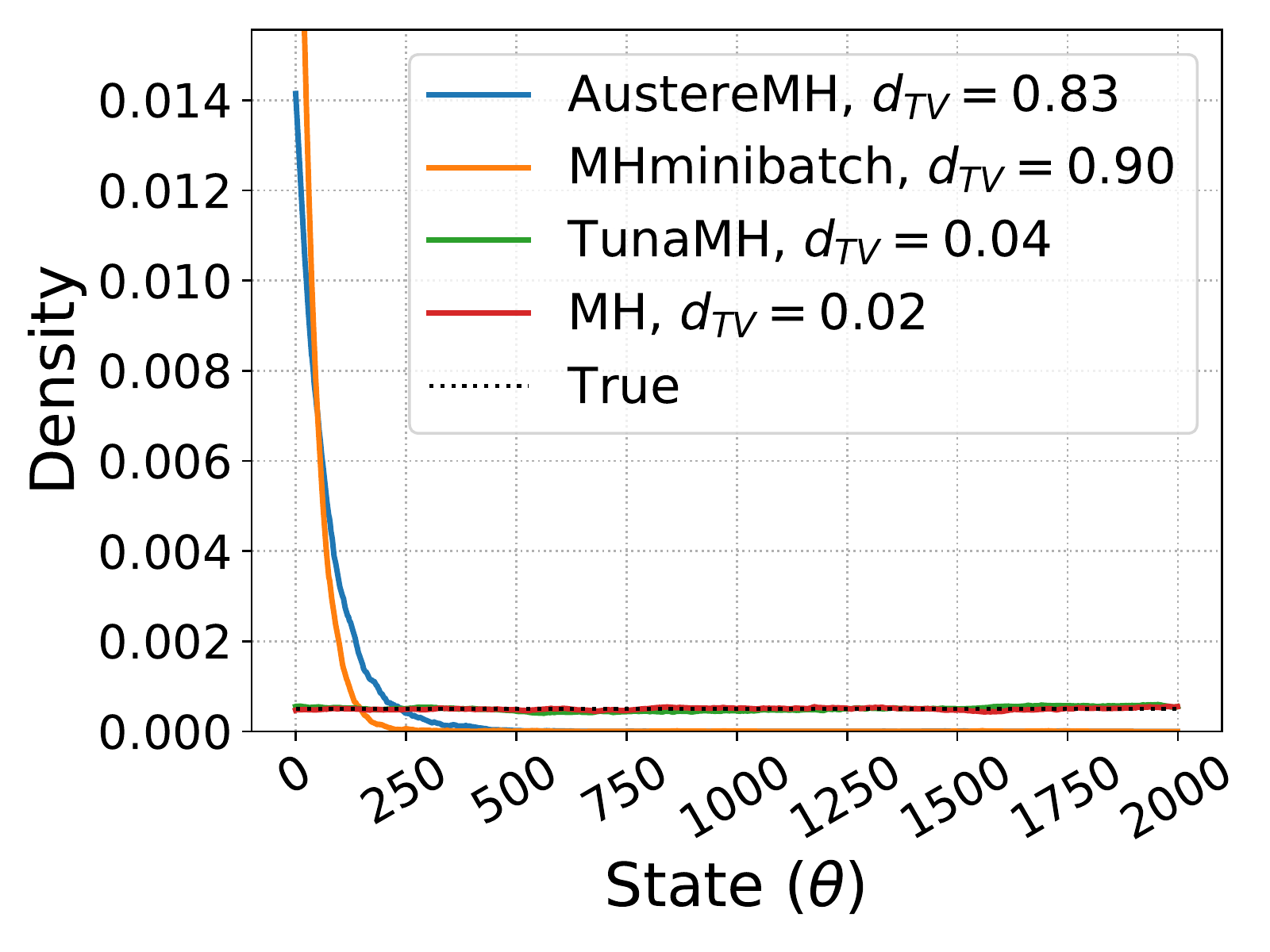}
    	&
    	\includegraphics[width=4.5cm]{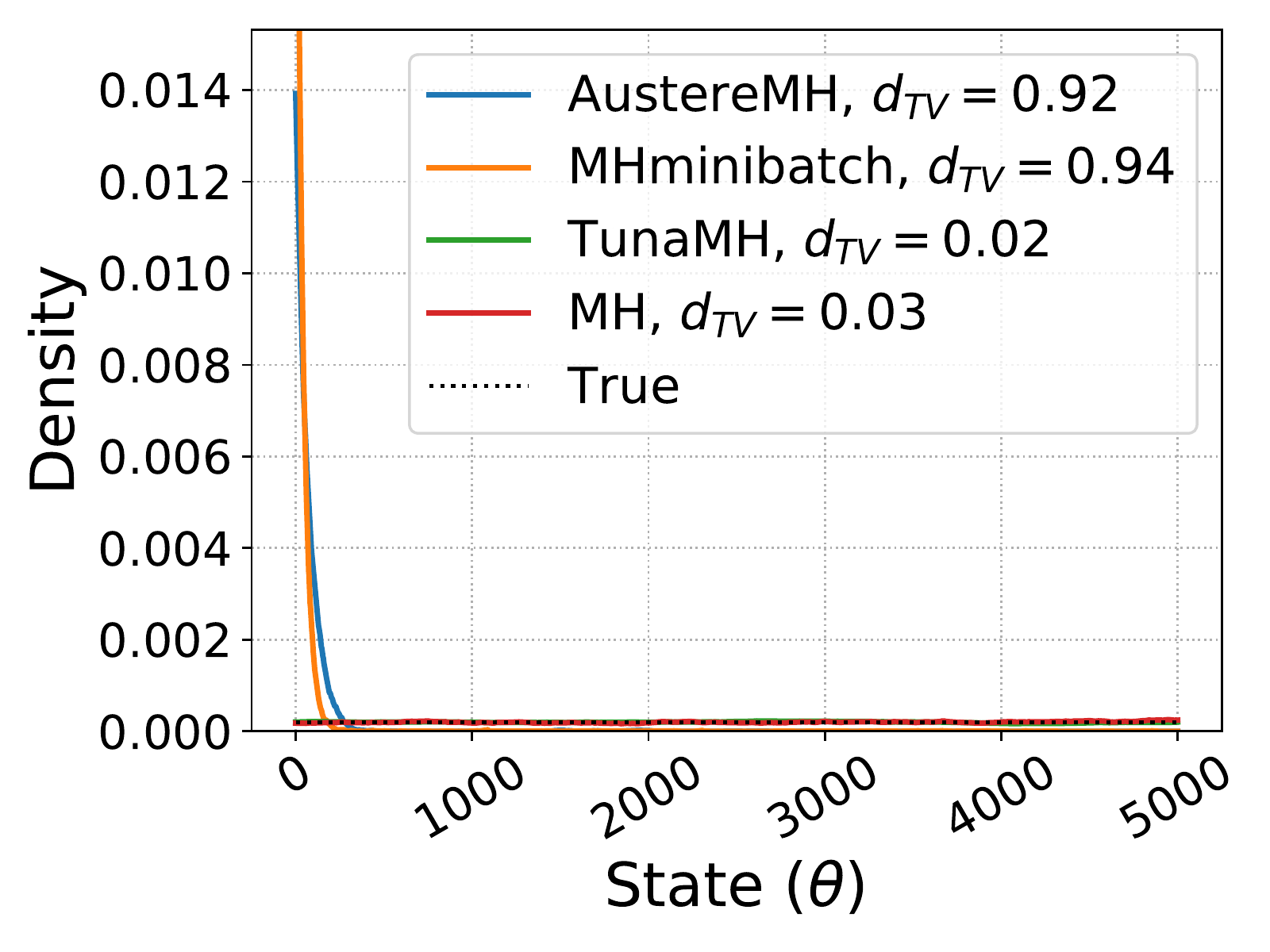}
    	
    	\\	
    	(e) $K=2000$&
    	(e) $K=5000$&
    	\hspace{-0mm}\\		
    \end{tabular}
    \caption{Density estimate comparison on $K = 500, 1000, 2000, 5000$.}
    \label{app:fig:density}
\end{figure*}

\subsection{Robust Linear Regression}\label{app:experiments:rlr}
We follow the experimental setup of robust linear regression (RLR) in \citet{cornish2019scalable}. Specifically, we have data $x_i\in \mathbb{R}^d$ and $y_i\in \mathbb{R}$. The likelihood is modeled by a student's t-distribution with degrees of freedom $v$:
\[
p(y_i|\theta,x_i) = \text{Student}(y_i - \theta^\intercal x_i|v).
\]
It follows that 
\[
U_i(\theta) = \frac{v+1}{2}\log\left(1 + \frac{(y_i-\theta^\intercal x_i)^2}{v}\right),
\]
and the first derivative
\[
\partial_j U_i(\theta) = -(v+1)\frac{x_{ij}(y_i-\theta^\intercal x_i)}{v + (y_i-\theta^\intercal x_i)^2}.
\]
Since the function $U_i$ is Lipschitz continuous, we can easily get the bound used in \methodname{}, TFMH and SMH. We set $M(\theta,\theta') = \norm{\theta - \theta'}_2$ and then it follows
\[
c_i = \sup_{\theta\in\mathbb{R}} \norm{\nabla U_i(\theta)}_2 = \frac{v+1}{2\sqrt{v}}\norm{x_i}_2.
\]
The data $x_i$ and $y_i$ is generated as follows
\[
y_i = \sum_j x_{ij} + \epsilon_i
\]
where $\epsilon_i\sim\mathcal{N}(0,1)$.

In Section \ref{sec:rlr}, we set $v=4$, $d=100$ and use a flat prior $p(\theta) = 1$. Note that our problem dimension $d$ is much larger than that in the SMH paper \cite{cornish2019scalable} ($d = 10$). This makes the control variates in SMH problematic since the bounds they require appear to scale badly in high dimensions. 

To reach the target acceptance rate, we set the stepsize in each method as in Table \ref{tab:stepsize} and \ref{tab:stepsize2}. For \methodname{} and \methodname{}-MAP, we set $\chi = 1e-5$ for $N=5000, 20000$ and $\chi=1e-4$ for $N=50000, 100000$. For FlyMC and FlyMC-MAP, we set the probability for a data going from dark to bright $q_{d\rightarrow b}$ to be 0.01. Without the MAP, we collect 80000 samples after 200000 step burnin. With the MAP, we collect 80000 samples without burnin.

\begin{table}[H]
  \caption{Stepsize of methods without the MAP.}
  \label{tab:stepsize}
  \centering
  \begin{tabular}{lcccccc}
    \toprule                 \\
       & MH & TFMH & FlyMC & \methodname{}\\
    \midrule
    \vspace{.1cm}
    RLR $N=5000$ & 4e-3 &1e-4 &2.7e-3 &8e-4, $\chi = 1e-5$ \\
    \vspace{.1cm}
    RLR $N=20000$ & 2e-3 &3e-5 &1.5e-3 &3e-4, $\chi = 1e-5$  \\
    \vspace{.1cm}
    RLR $N=50000$ & 1.3e-3 &1.2e-5 &9e-4 &2e-4, $\chi = 1e-4$  \\
    \vspace{.1cm}
    RLR $N=100000$ & 9e-4 &6e-6 &7e-4 &1.7e-4, $\chi = 1e-4$ \\
    TGM & 3e-1  &2.2e-2  &1e-2   &1e-1\\
    LR &5e-3 &1e-4  &2e-3   &1e-3 \\
    \bottomrule
  \end{tabular}
\end{table}
\begin{table}[H]
  \caption{Stepsize of methods with the MAP.}
  \label{tab:stepsize2}
  \centering
  \begin{tabular}{lccccccc}
    \toprule                 \\
       & MH-MAP & SMH-1 &SMH-2 & FlyMC-MAP & \methodname{}-MAP \\
    \midrule
    \vspace{.1cm}
    RLR $N=5000$ & 4e-3 & 4e-3  &4e-3   &  6e-3 &8e-4, $\chi = 1e-5$ \\
    \vspace{.1cm}
    RLR $N=20000$ & 2e-3 & 2e-3  & 2e-3  & 3.5e-3  &3e-4, $\chi = 1e-5$ \\
    \vspace{.1cm}
    RLR $N=50000$ & 1.2e-3 & 1.2e-3  & 1.2e-3  & 2.5e-3  &1.2e-4, $\chi = 1e-4$ \\
    \vspace{.1cm}
    RLR $N=100000$ &  9e-4 & 5.9e-4  & 8e-4  & 1.7e-3  &7e-5. $\chi = 1e-4$ \\
    TGM & -  &1e-1   &  -    &1e-2  & -\\
    \bottomrule
  \end{tabular}
\end{table}

\subsubsection{Additional Experimental Results with $d=10$}

We ran RLR experiment with $d=10$ and $N=10^5$ to compare the performance in low dimensions. The ESS/S for TFMH, FlyMC, TunaMH are 0.02, 0.75, \& 1.7, respectively; SMH-1, SMH-2, FlyMC-MAP and TunaMH-MAP are 174.7, 5969.5, 730.8, \& 730.1 respectively. This suggests TunaMH is significantly better without MAP/control variates. With MAP/control variates, TunaMH is better than SMH-1, similar to FlyMC and worse than SMH-2.

\subsection{Truncated Gaussian Mixture} \label{app:experiments:mog}
The data in this truncated Gaussian mixture (TGM) task is generated as follows 
\[
x_i \sim \frac{1}{2}\mathcal{N}(\theta_1, \sigma_x^2) + \frac{1}{2}\mathcal{N}(\theta_1+\theta_2, \sigma_x^2)
\]
where $\theta_1 = 0, \theta_2 = 1$ and $\sigma^2 = 2$. The posterior $\theta$ has two modes at $(\theta_1, \theta_2)=(0, 1)$ and $(\theta_1, \theta_2)=(1, -1)$. In order to get the bounds required by all methods, we truncate the Gaussian by setting $\theta_1, \theta_2 \in [-3,3]$. 

For simplicity we assume a flat prior $p(\theta)=1$. Then the energy is given by
\[
U_i(\theta) = -\log p(x_i|\theta) 
= \log(2\sqrt{2\pi}\sigma_x) - \log\bigg[\exp\bigg(-\frac{(x_i-\theta_1)^2}{2\sigma_x^2}\bigg) + \exp\bigg(-\frac{(x_i-\theta_1-\theta_2)^2}{2\sigma_x^2}\bigg)\bigg]. 
\]

Denote $E_1 = \exp\bigg(-\frac{(x_i-\theta_1)^2}{2\sigma_x^2}\bigg)$ and $E_2 = \exp\bigg(-\frac{(x_i-\theta_1-\theta_2)^2}{2\sigma_x^2}\bigg)$. 
To get the upper bound in \methodname{}, TFMH and SMH, we compute the gradient
\begin{align*}
 \frac{\partial U_i(\theta)}{\partial \theta_1} &= -\frac{1}{E_1 + E_2} \bigg(E_1\cdot \frac{x_i - \theta_1}{\sigma_x^2} + E_2 \cdot \frac{x_i-\theta_1-\theta_2}{\sigma_x^2}\bigg),\\ 
 \frac{\partial U_i(\theta)}{\partial \theta_2} &= -\frac{1}{E_1 + E_2} \bigg(E_2 \cdot \frac{x_i-\theta_1-\theta_2}{\sigma_x^2}\bigg).
\end{align*}
Since $\theta_i \in [-3, 3]$, it follows that
\begin{align*}
    \Abs{\frac{\partial U_i(\theta)}{\partial \theta_1}} &\le \frac{\Abs{x_i}+3}{\sigma_x^2} + \frac{\Abs{x_i}+3+3}{\sigma_x^2} \le \frac{2\Abs{x_i}+9}{\sigma_x^2},\\
    \Abs{\frac{\partial U_i(\theta)}{\partial \theta_2}} &\le \frac{\Abs{x_i}+3+3}{\sigma_x^2} \le \frac{\Abs{x_i}+6}{\sigma_x^2}.
\end{align*}
Therefore we can set $M(\theta,\theta') = \norm{\theta-\theta'}_2$ and 
\[
c_i = \sqrt{\bigg(\frac{2\Abs{x_i}+9}{\sigma_x^2}\bigg)^2 + \bigg(\frac{\Abs{x_i}+6}{\sigma_x^2}\bigg)^2}.
\]

To use the control variate in SMH, we need to compute the second derivatives
\begin{align*}
    \frac{\partial^2 U_i(\theta)}{\partial^2 \theta_1} &= \frac{1}{(E_1 + E_2)^2}\cdot \bigg(E_1\cdot \frac{x_i - \theta_1}{\sigma_x^2} + E_2 \cdot \frac{x_i-\theta_1-\theta_2}{\sigma_x^2}\bigg)^2 
    \\\hspace{2em}& - \bigg[E_1\cdot \bigg(\bigg(\frac{x_i - \theta_1}{\sigma_x^2}\bigg)^2 - \frac{1}{\sigma_x^2}\bigg) + E_2\cdot \bigg(\bigg(\frac{x_i - \theta_1 - \theta_2}{\sigma_x^2}\bigg)^2 - \frac{1}{\sigma_x^2}\bigg)\bigg] \cdot \frac{1}{E_1 + E_2}\\
    \frac{\partial^2 U_i(\theta)}{\partial \theta_1\partial \theta_2} &= \frac{1}{(E_1 + E_2)^2} \cdot \bigg(E_2\cdot \bigg(\frac{x_i - \theta_1 - \theta_2}{\sigma_x^2}\bigg)\bigg) \cdot \bigg(E_1\cdot \frac{x_i - \theta_1}{\sigma_x^2} + E_2 \cdot \frac{x_i-\theta_1-\theta_2}{\sigma_x^2}\bigg)
    \\\hspace{2em}& - \bigg[E_2 \bigg(\bigg(\frac{x_i - \theta_1 - \theta_2}{\sigma_x^2}\bigg)^2 - \frac{1}{\sigma_x^2}\bigg)\bigg]\cdot \frac{1}{E_1 + E_2}\\
    \frac{\partial^2 U_i(\theta)}{\partial^2 \theta_2} &= \frac{1}{(E_1 + E_2)^2}\cdot \bigg(E_1\cdot \frac{x_i - \theta_1}{\sigma_x^2} + E_2 \cdot \frac{x_i-\theta_1-\theta_2}{\sigma_x^2}\bigg)^2 
    \\\hspace{2em}& - \bigg[E_2\cdot \bigg(\bigg(\frac{x_i - \theta_1 - \theta_2}{\sigma_x^2}\bigg)^2 - \frac{1}{\sigma_x^2}\bigg)\bigg] \cdot \frac{1}{E_1 + E_2}.
\end{align*}
Given the parameter space, we have the upper bounds
\begin{align*}
    \Abs{\frac{\partial^2 U_i(\theta)}{\partial^2 \theta_1}} &\le  \bigg(\frac{2\Abs{x_i}+9}{\sigma_x^2}\bigg)^2 + \bigg(\frac{\Abs{x_i}+3}{\sigma_x^2}\bigg)^2 + \bigg(\frac{\Abs{x_i}+6}{\sigma_x^2}\bigg)^2 + \frac{2}{\sigma_x^2} \\
    \Abs{\frac{\partial^2 U_i(\theta)}{\partial \theta_1 \partial \theta_2}} 
    &\le \frac{2\Abs{x_i}+9}{\sigma_x^2} \cdot \frac{\Abs{x_i}+6}{\sigma_x^2} + \bigg(\frac{\Abs{x_i}+6}{\sigma_x^2}\bigg)^2 + \frac{1}{\sigma_x^2}\\
    \Abs{\frac{\partial^2 U_i(\theta)}{\partial^2 \theta_2}} &\le \bigg(\frac{2\Abs{x_i}+9}{\sigma_x^2}\bigg)^2 +  \bigg(\frac{\Abs{x_i}+6}{\sigma_x^2}\bigg)^2 + \frac{1}{\sigma_x^2}.
\end{align*}
It follows
\[
\Bar{U}_{2,i} = \bigg(\frac{2\Abs{x_i}+9}{\sigma_x^2}\bigg)^2 + \bigg(\frac{\Abs{x_i}+3}{\sigma_x^2}\bigg)^2 + \bigg(\frac{\Abs{x_i}+6}{\sigma_x^2}\bigg)^2 + \frac{2}{\sigma_x^2}. 
\]
which is required in SMH-1. 

To get the lower bounds in FlyMC, we use the first-order Taylor expansion for $U_i(\theta)$. Higher order approximation is possible but would require heavier computation. By Taylor expansion,
\begin{align*}
    U_i(\theta) = U_i(\theta^0) + \nabla U_i(\theta^0)^\intercal (\theta - \theta^0) + \frac{1}{2}(\theta - \theta^0)^\intercal \nabla^2 U_i(c) (\theta - \theta^0)
\end{align*}
where $c$ is between $\theta$ and $\theta^0$.

Then we can define $\log B_i(\theta)$ in FlyMC as the follows
\begin{align*}
    \log B_i(\theta) &=  -U_i(\theta^0) - \nabla U_i(\theta^0)^\intercal (\theta - \theta^0) - \frac{1}{2}\cdot\max_c\norm{\nabla^2 U_i(c)}_1 \cdot\norm{\theta - \theta^0}^2_1\\
    & = -U_i(\theta^0) - \nabla U_i(\theta^0)^\intercal (\theta - \theta^0) - \frac{1}{2}\cdot\Bar{U}_{2,i} \cdot\norm{\theta - \theta^0}^2_1.
\end{align*}

The sum of $\log B_i$ is
\begin{align*}
    \sum_{i=1}^N \log B_i(\theta) =  -N\cdot U_i(\theta^0) - \bigg(\sum_{i=1}^N\nabla U_i(\theta^0)\bigg)^\intercal (\theta - \theta^0) - \frac{1}{2}\cdot\sum_{i=1}^N \Bar{U}_{2,i} \cdot\norm{\theta - \theta^0}_1^2.
\end{align*}

We set $\theta^0$ to be 0 and the MAP solution in standard and MAP-tuned FlyMC respectively. 

We tune the stepsize of each method to reach the acceptance rate $60\%$ and the value of stepsize is summarized in Table \ref{tab:stepsize} and \ref{tab:stepsize2}. We set $\chi = 10^{-4}$ in \methodname{} and $q_{d\rightarrow b} = 0.01$ in FlyMC and FlyMC-MAP. We compute the symmetric KL between the run-average density estimate and the true distribution. Since this is a two-dimensional problem, we are able to visualize the density estimate. As shown in Figure \ref{app:fig:mog}, we plot the density estimate after running the method for 1 second. It is clear to see that the density estimate of \methodname{} is close to the truth whereas all other methods are unable to provide accurate density estimate given the time budget.

\begin{figure*}[t!]
    \centering
    \begin{tabular}{cccc}		
    	\includegraphics[width=3.5cm]{figs/mog_true.pdf}  &
    		\hspace{-6mm}
    	\includegraphics[width=3.5cm]{figs/mog_pmh.pdf} &
    	\hspace{-6mm}
    	\includegraphics[width=3.5cm]{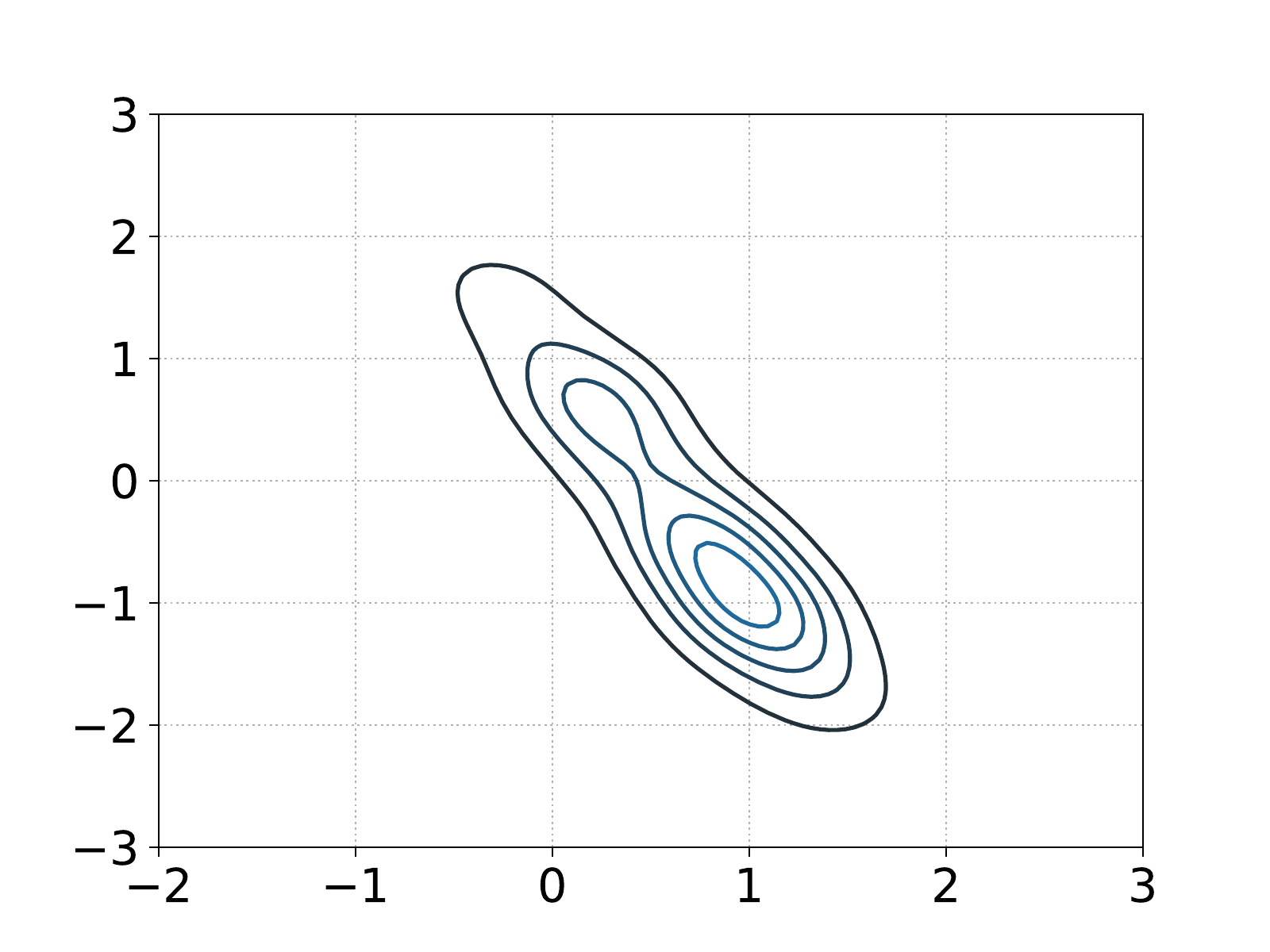} &
    	\hspace{-6mm}
    	\includegraphics[width=3.5cm]{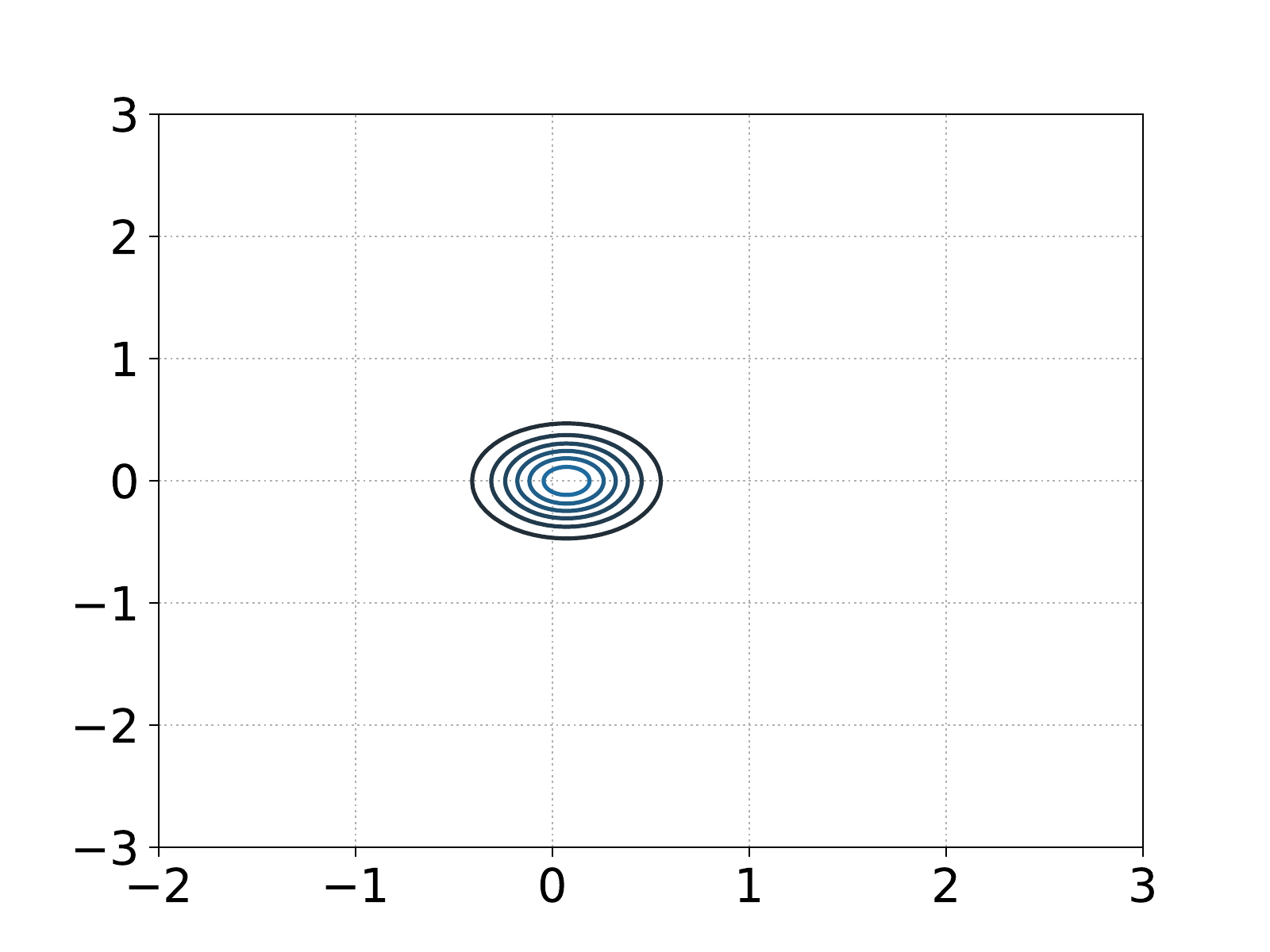}
    	\\		
    	(a) True&
    	(b) \methodname{}&
    	(c) TFMH&
    	(d) FlyMC
    	\\
    	\includegraphics[width=3.5cm]{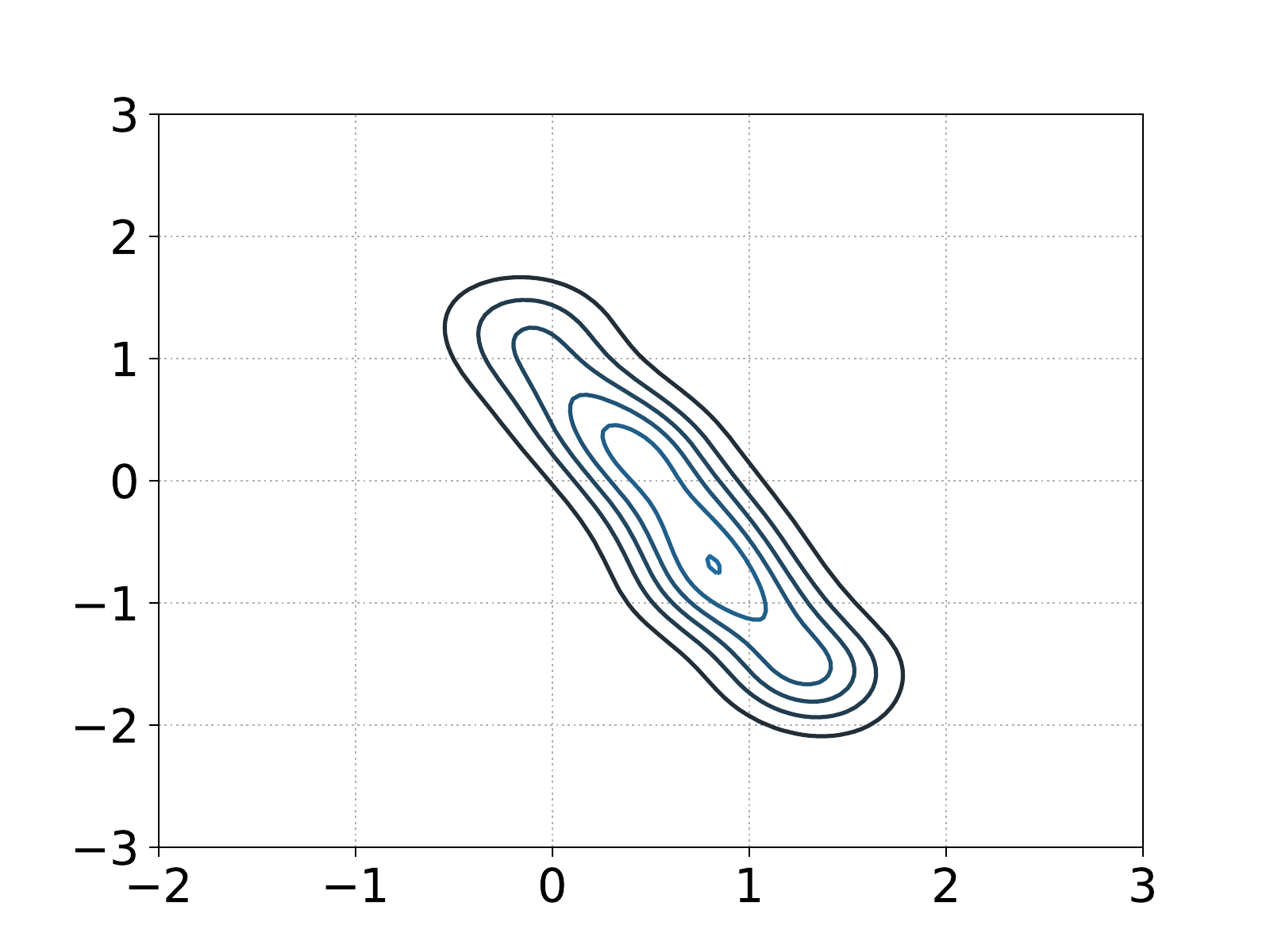}
    	&
    	\hspace{-6mm}
    	\includegraphics[width=3.5cm]{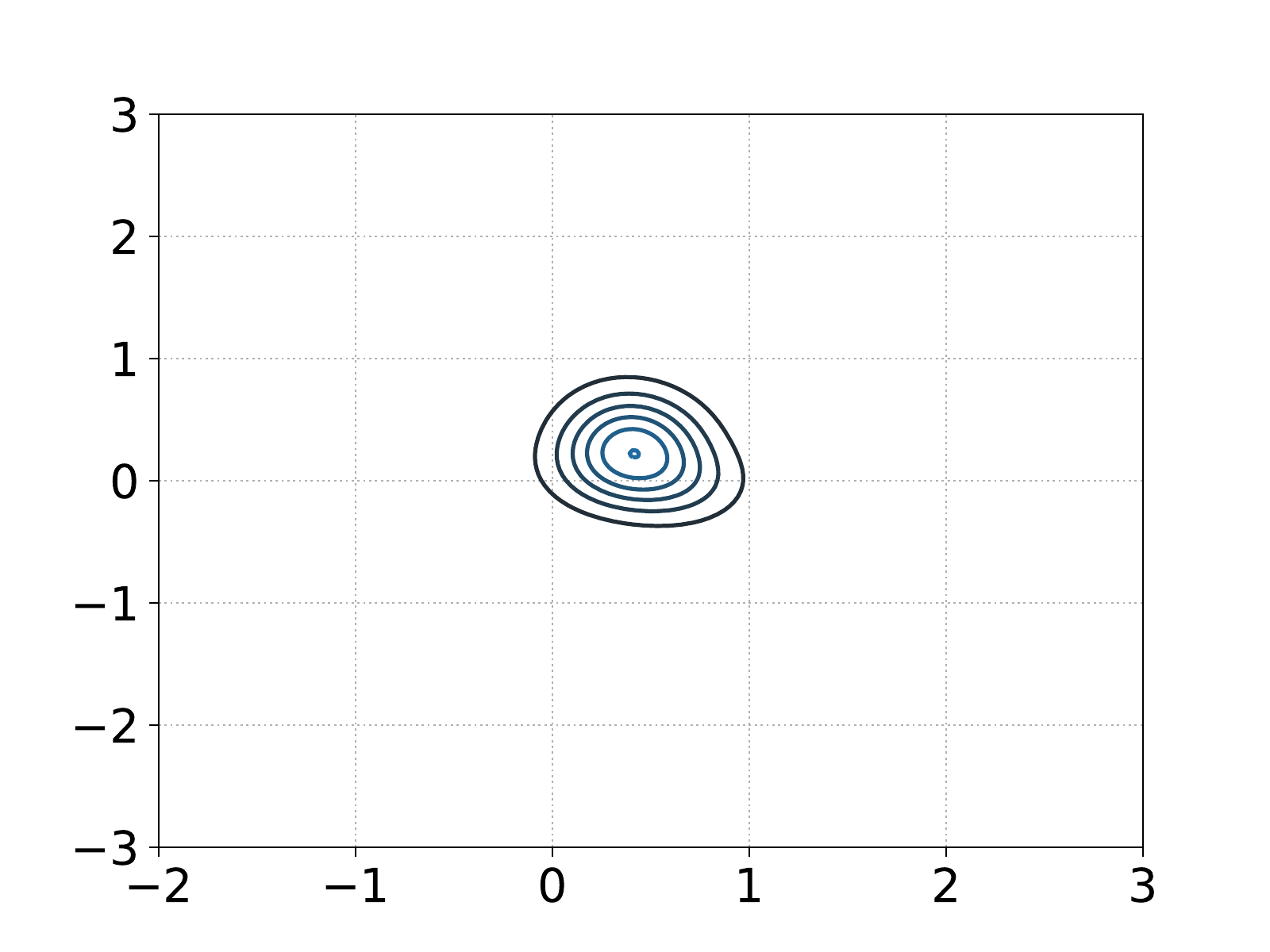}  &
    	\hspace{-6mm}
    	\includegraphics[width=3.5cm]{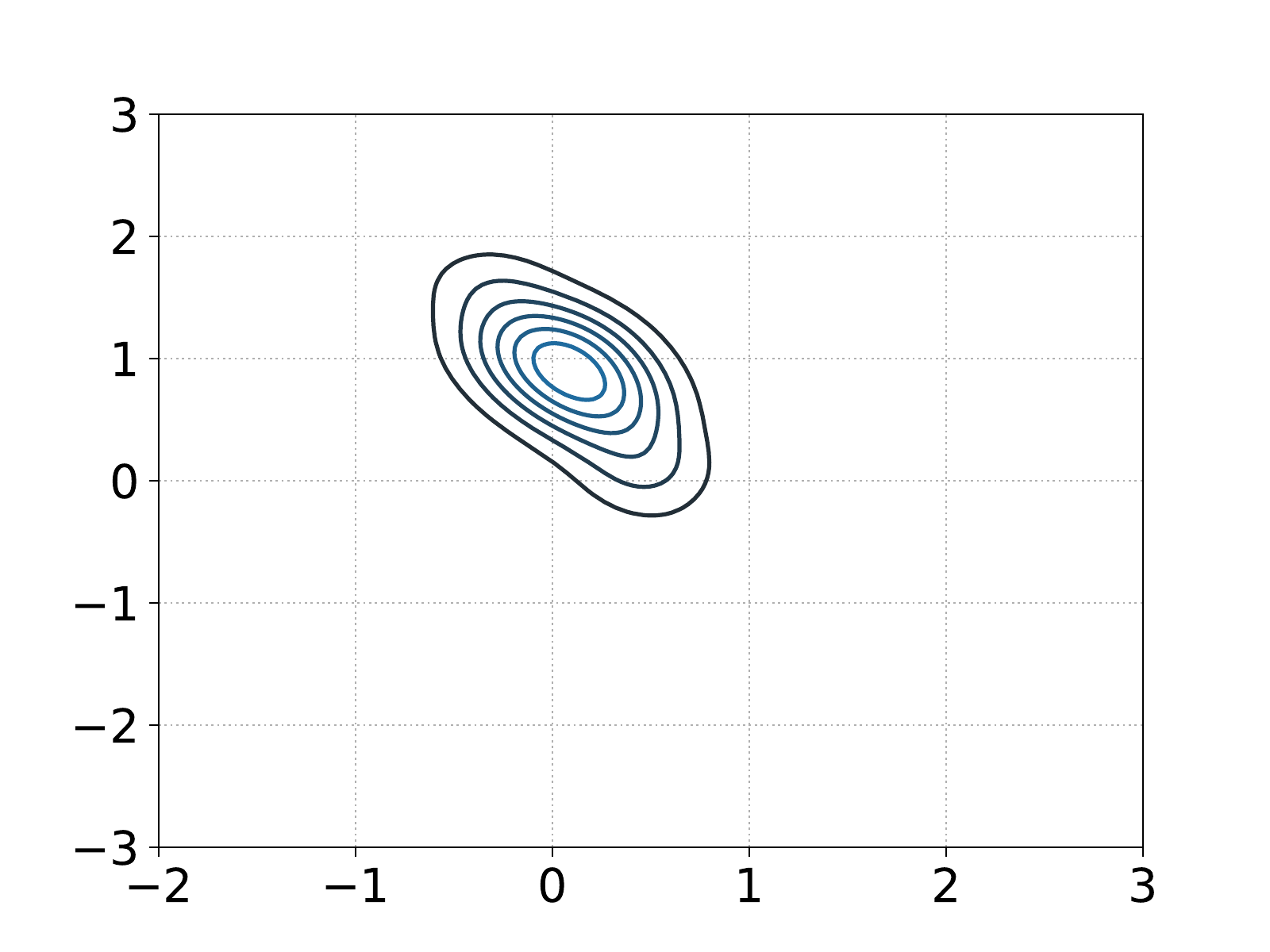} &
    	\hspace{-6mm}
    	\includegraphics[width=3.5cm]{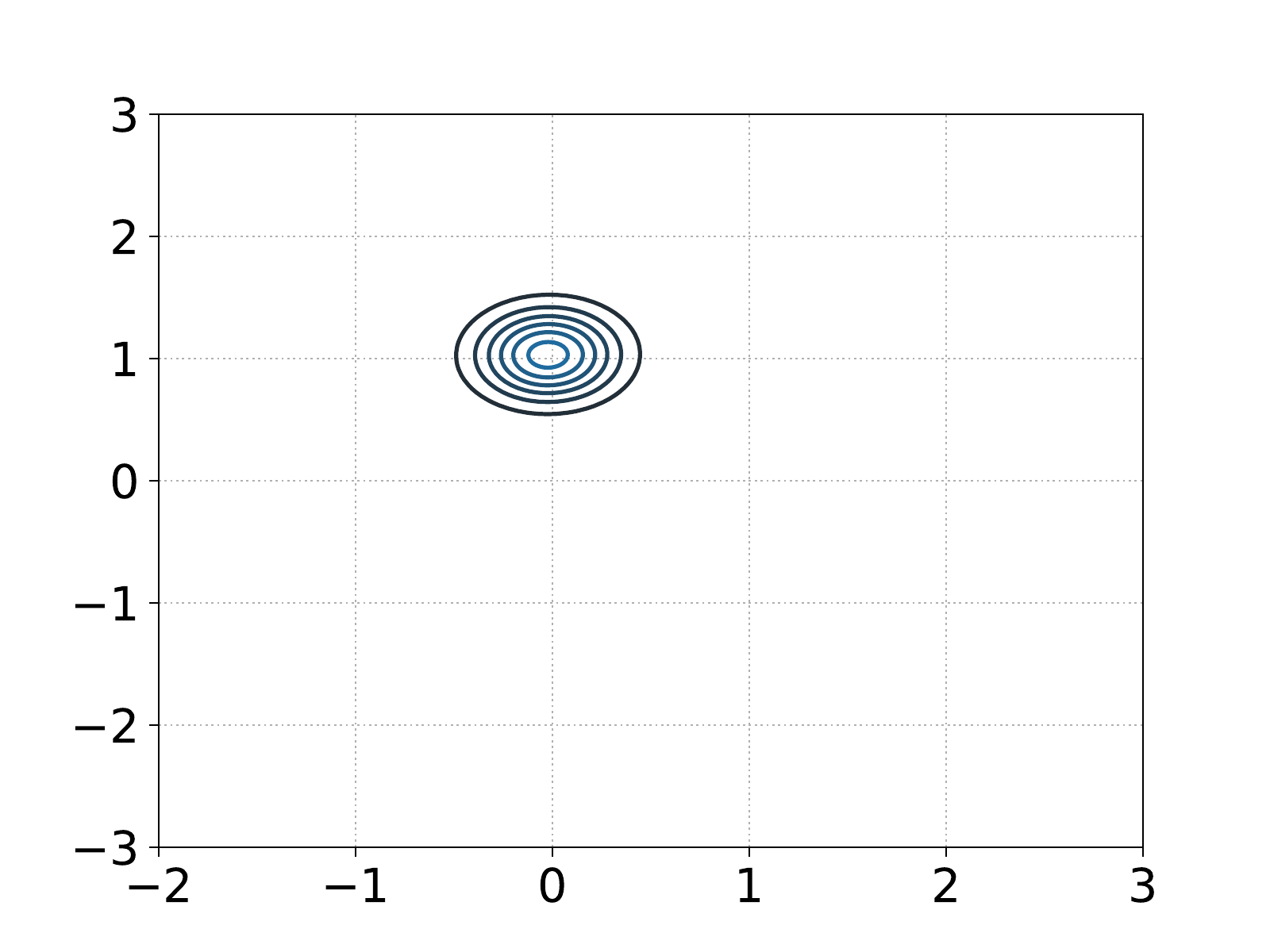}
    	
    	\\	
    	(e) PoissonMH&
    	(f) MH&
    	(g) SMH-1&
    	(h) FlyMC-MAP
    	\hspace{-0mm}\\		
    \end{tabular}
    \caption{Visualization of the density estimate after 1 second.}
    \label{app:fig:mog}
\end{figure*}

\subsection{Logistic Regression on MNIST}

MNIST with only 7s and 9s images contains 12214 training data and 2037 test data. Let $h$ be the sigmoid function. Let the label $y_i\in \{0, 1\}$, then the model in logistic regression (LR) is 
\[
p(y_i = 1) = h(\theta^\intercal x_i) = \frac{1}{1 + \exp\left(-\theta^\intercal x_i\right)}.
\]
It follows that 
\[
U_i(\theta) = -y_i\log h\left(\theta^\intercal x_i\right) - (1-y_i) \log h\left(-\theta^\intercal x_i\right).
\]

It is easy to see that
\[
\Abs{\partial_j U_i} = \Abs{(h(\theta^\intercal x_i)-y_i)x_{ij}} \le 1\cdot \Abs{x_{ij}}.
\] 
Thus we can set $M(\theta,\theta')$ to be $\norm{\theta-\theta'}_2$ and $c_i$ to be $\norm{x_{i}}_2$. We use this bound for \methodname{}, TFMH and SMH. For FlyMC, we use the same bound on logistic regression as in the FlyMC paper \cite{maclaurin2015firefly}.

We set the target acceptance rate to be $60\%$ and the resulted stepsize is reported in Table \ref{tab:stepsize}. We set $q_{d\rightarrow b}$ to be 0.1 following \cite{maclaurin2015firefly}.


\end{document}